    \newcommand\newdot{{\kern.8pt\cdot\kern.8pt}}
\newcommand\nbull{{\kern.8pt\raise1.5pt\hbox{\small\bf .}\kern.8pt}}
\newcommand\1{\hbox{\kern.375em\vrule height1.57ex depth-.1ex
		width.05em\kern-.375em \rm 1}}
\newcommand\ncal{\mathcal{N}}
\newcommand\zcal{\mathcal{Z}}
\newcommand\rbb{\mathbb{R}}
\newcommand\acal{\mathcal{A}}
\newcommand\xcal{\mathcal{X}}
\DeclareMathOperator*{\argmax}{arg\,max}
\DeclareMathOperator{\iter}{iter}
\DeclareMathOperator{\rad}{\mathfrak{R}}
\DeclareMathOperator{\Fr}{Fr}
\newtheorem{theorem}{Theorem}
\newtheorem{lemma}[theorem]{Lemma}
\newtheorem{proposition}[theorem]{Proposition}
\newtheorem{corollary}[theorem]{Corollary}
\theoremstyle{definition}
\newtheorem{definition}{Definition}
\theoremstyle{definition}
\newtheorem{remark}{Remark}
\title{Norm-based generalisation bounds for multi-class convolutional neural networks\footnote{
		 AL and MK acknowledge support by the German Research Foundation (DFG) award KL 2698/2-1 and by the Federal Ministry of Science and Education (BMBF) awards 01IS18051A and 031B0770E. YL acknowledges support by the National Natural Science Foundation of China (Grant No 61806091) and the Alexander von Humboldt Foundation.}
	\author{
		Antoine Ledent \textsuperscript{\rm 1}, Waleed Mustafa \textsuperscript{\rm 1}, Yunwen Lei \textsuperscript{\rm 1,2} and Marius Kloft \textsuperscript{\rm 1}}
}
\begin{document}

\maketitle

\begin{abstract}
	We show generalisation error bounds for deep learning with two main improvements over the state of the art. (1) Our bounds have no explicit dependence on the number of classes except for logarithmic factors. This holds even when formulating the bounds in terms of the $L^2$-norm of the weight matrices, where previous bounds exhibit at least a square-root dependence on the number of classes.
(2) We adapt the classic Rademacher analysis of DNNs to incorporate weight sharing---a task of fundamental theoretical importance which was previously attempted only under very restrictive assumptions. In our results, each convolutional filter contributes only once to the bound, regardless of how many times it is applied. Further improvements exploiting pooling and sparse connections are provided.
The presented bounds scale as the norms of the parameter matrices, rather than the number of parameters. In particular, contrary to bounds based on parameter counting, they are asymptotically tight (up to log factors) when the weights approach initialisation, making them suitable as a basic ingredient in bounds sensitive to the optimisation procedure. We also show how to adapt the recent technique of loss function augmentation to our situation to replace spectral norms by empirical analogues whilst maintaining the advantages of our approach.
\end{abstract}
\section{Introduction}
Deep learning has enjoyed an enormous amount of success in a variety of engineering applications in the last decade \cite{bullshit1,bullshit2,bullshit3,GO}. However, providing a satisfying explanation to its sometimes surprising generalisation capabilities remains an elusive goal \cite{requires,gradientover,mutualinformation,adversarial}. The statistical learning theory of deep learning approaches this question by providing a theoretical analysis of the generalisation performance of deep neural networks (DNNs) through better understanding of the complexity of the function class corresponding to a given architecture or training procedure.

This field of research has enjoyed a revival since 2017 with the advent of learning guarantees for DNNs expressed in terms of various norms of the weight matrices and classification margins~\cite{NeyshabourPacs,Spectre,LSTMBytheway,convattempt,zhu}.
Many improvements have surfaced to make bounds non-vacuous at realistic scales, including better depth dependence, bounds that apply to ResNets \cite{resnet}, and PAC-Bayesian bounds using network compression~\cite{nonvacuous}, data-dependent Bayesian priors~\cite{Gintare}, fast rates~\cite{Taji}, and reduced dependence on the product of spectral norms via data-dependent localisation~\cite{weima,inversepreac}.
A particularly interesting new branch of research combines norm-based generalisation bounds with the study of how the optimisation procedure (stochastic gradient descent) implicitly restricts the function class~\cite{GradientDescent,gradientover,Aroratwolayers,Repeat,NTK,Lottery}. One idea at the core of many of these works is that the weights stay relatively close to initialisation throughout training, reinforcing lucky guesses from the initialised network rather than constructing a solution from scratch. Thus, in this branch of research, it is critical that the bound is negligible when the network approaches initialisation, i.e., \textit{the number of weights involved is not as important as their size}. This observation was first made as early as in~\cite{SupportsNormbased}.

Despite progress in so many new directions, we note that some basic questions of fundamental theoretical importance have remain unsolved.
(1) How can we remove or decrease the dependence of bounds on the number of classes?
(2) How can we account for weight sharing in convolutional neural networks (CNNs)?
In the present paper, we contribute to an understanding of both questions.

Question (1) is of central importance in extreme classification \cite{extremevarma}, where we deal with an extremely high number of classes (e.g. millions). \cite{Spectre} showed a bound with no explicit class dependence (except for log terms). However, this bound is formulated in terms of the $L^{2,1}$ norms of the network's weight matrices. If we convert the occurring $L^{2,1}$ norms into the more commonly used $L^2$ norms, we obtain a square-root dependence on the number of classes.

Regarding (2),
\cite{convattempt} showed a bound that accounts for weight sharing. However, this bound is valid only under the assumption of orthonormality of the weight matrices. The assumption of unit norm weights---which is violated by typical convolutional architectures (GoogLeNet, VGG, Inception, etc.)---makes it difficult to leverage the generalisation gains from small weights, and it is a fortiori not easy to see how the bounds could be expressed in terms of distance to initialisation.

In this paper, we provide, up to only logarithmic terms, a complete solution to both of the above questions. First, our bound relies only the $L^2$ norm at the last layer, yet it has no explicit (non-logarithmic) dependence on the number of classes.\footnote{As explained below, this corresponds to an implicit dependence of the order $\sqrt{C}$ if the classifying vectors have comparable norms. Our result is in line with the state of the art in shallow learning.} In deep learning, no generalization bound other than ours has ever achieved a lack of non-logarithmic class dependency with $L^2$ norms.
Second, our bound accounts for weight sharing in the following way. The Frobenius norm of the weight matrix of each convolutional filter contributes only \textit{once} to the bound, regardless of how many times it is applied.
Furthermore, our results have several more properties of interest: (i) We exploit the $L^\infty$-continuity of nonlinearities such as pooling and ReLu to further significantly reduce the explicit width dependence in the above bounds. (ii) We show how to adapt the recent technique of loss function augmentation to our setting to replace the dependence on the spectral norms by an \emph{empirical} Lipschitz constant with respect to well chosen norms.
(iii) Our bounds also have very little explicit dependence on architectural choices and rely instead on norms of the weight matrices expressed as distance to initialisation, affording a high degree of architecture robustness compared to parameter-space bounds. In particular, our bounds are negligible as the weights approach initialisation.

In parallel to our efforts, \cite{GRRR} recently made progress on question (2), providing a remedy to the weight-sharing problem. Their work, which is scheduled to appear in the proceedings of ICLR 2020, is independent of ours.  This can be observed from the fact that their work and ours were first preprinted on arXiv on the very same day. Their approach is completely different from ours, and both approaches have their merits and disadvantages. We provide an extensive discussion and comparison in the sections below and in Appendix~\ref{comparisondeep}.

\section{Related Work}
\label{relatedwork}
In this section, we discuss related work on the statistical learning theory (SLT) of DNNs. The SLT of neural networks can be dated back to 1970s, based on the concepts of VC dimension, fat-shattering dimension~\cite{book}, and Rademacher complexities~\cite{bartlett2002rademacher}. Here, we focus on recent work in the era of deep learning.

Let $(x_1,y_1),\ldots,(x_n,y_n)$ be training examples independently drawn from a probability measure defined on the sample space $\zcal=\xcal\times\{1,\ldots,K\}$, where $\xcal\subset\rbb^d$, $d$ is the input dimension, and $K$ is the number of classes. We consider DNNs parameterized by weight matrices $\acal=\{A^1,\ldots,A^L\}$, so that the prediction function can be written $F_{\acal}(x)=A^L\sigma_{L-1}\big(A^{L-1}\sigma_{L-2}\big(\cdots A^1 x\big)\big)$,
where $L$ is the depth of the DNN, $A^l\in\rbb^{W_l\times W_{l-1}},W_0=d,W_L=K$, and $\sigma_i:\rbb^{W_i}\mapsto \rbb^{W_i}$ is the non linearity (including any pooling and activation functions), which we assume to be 1-Lipschitz. 

When providing PAC guarantees for DNNs, a critical quantity is the Rademacher complexity of the network obtained after appending any loss function. The first work in this area \cite{early} therefore focused on bounding the Rademacher complexity of networks satisfying certain norm conditions, where the last layer is one-dimensional.
They apply the concentration lemma and a peeling technique to get a bound on the Rademacher complexity of the order \small
$O\big(\frac{2^L}{\sqrt{n}}\prod_{i=1}^{L}\|A^i\|_{\Fr}\big)$\normalsize,
where \small $\|A\|_{\Fr}$ \normalsize denotes the Frobenius norm of a matrix $A$.
\cite{BottomUp} showed that this exponential dependency on the depth can be avoided by an elegant use of the contraction lemma to obtain bounds of the order $
O\big((\sqrt{L}/\sqrt{n})\prod_{i=1}^{L}\|A^i\|_{\Fr}\big).
$\footnote{Note that both of these works require the output node to be one dimensional and thus are not multiclass}
The most related work to ours is the spectrally-normalized margin bound by \cite{Spectre} for multi-class classification. Writing $\|A\|_{\sigma}$ for the spectral norm is, and $M^i$ for initialised weights, the result is of order \small $\tilde{O}(M/\gamma)$ \normalsize  with 
\begin{align}\label{source}
M=\frac{1}{\sqrt{n}}\prod_{i=1}^{L}\|A^i\|_\sigma\left(\sum_{i=1}^{L}\frac{\|A^{i\top}-M^{i^\top}\|_{2,1}^{\frac{2}{3}}}{\|A^i\|_\sigma^{\frac{2}{3}}}\right)^{\frac{3}{2}},
\end{align}
where \small$\|A\|_{p,q}=\Big(\sum_j\big(\sum_i|A_{ij}|^p\big)^{\frac{q}{p}}\big)^{\frac{1}{q}}$ \normalsize is the $(p,q)$-norm, and $\gamma$ denotes the classification margin.

At the same time as the above result appeared, the authors in~\cite{NeyshabourPacs} used a PAC Bayesian approach to prove an analogous result \footnote{Note that the result using formula~\eqref{SourceNey} can also be derived by expressing~\eqref{source} in terms of $L^2$ norms and using Jensen's inequality}, where \small $W=\max\{W_0,W_1,\ldots,W_L\}$ \normalsize is the width:
\begin{align}
\label{SourceNey}
\tilde{O}\left(\frac{L\sqrt{W}}{\gamma\sqrt{n}}\left(\prod_{i=1}^L \|A^i\|_{\sigma} \right)   \left( \sum_{i=1}^L\frac{\|A^i+M^i\|_{\Fr}^{2}}{\|A^i\|_{\sigma}^{2}}  \right)^{\frac{1}{2}}\right).
\end{align}

These results provide solid theoretical guarantees for DNNs. However, they take very little architectural information into account. In particular, if the above bounds are applied to a CNN, when calculating the squared Frobenius norms $\|A^i\|_{\Fr}^2$, the matrix $A^i$ is the matrix representing the linear operation performed by the convolution, which implies that the weights of each filter will be summed as many times as it is applied. This effectively adds a dependence on the square root of the size of the corresponding activation map at each term of the sum. A notable exception would be the bound in Theorem 2 of~\cite{BottomUp}, which applies to DNN's and  scales like \small $\widetilde{O}\left(\sqrt{d}(\prod_{l=1}^L M(l))/\sqrt{n}\right)$ \normalsize where $M(l)$ is an upper bound for the $l^1$ norm of the rows of the matrix $\tilde{A}^l$. In this case, there is also a lack of explicit dependence on the number of times each filter is applied. However, the implicit dependence on other architectural parameters such as the size of the patches and the depth is stronger. Also, the activations are applied element-wise, which rules out pooling and multi-class losses.

Note also that the $L^2$ version~\eqref{SourceNey} of the above bound~\eqref{source} includes a dependence on the square root of the number of classes through the maximum width $W$ of the network. This square-root dependence is not favorable when the number of classes is very large. Although many efforts have been performed to improve the class-size dependency in the shallow learning literature~\cite{LauererMulti1,Shallow1,Shallow2,Koltch,Germ1,Germ2,Mohri,Multilei}, extensions of those results to deep learning are missing so far. 

In late 2017 and 2018, there was a spur of research effort on the question of fine-tuning the analyses that provided the above bounds, with improved dependence on depth~\cite{BottomUp}, and some bounds for recurrent neural networks \cite{LSTMrefused,LSTMBytheway}). Notably, in~\cite{convattempt}, the authors provided an analogue of~\eqref{source} for convolutional networks, but only under some very specific assumptions, including orthonormal filters.

Independently of our work, \cite[to appear at ICLR 2020]{GRRR} address the weight-sharing problem using a parameter-space approach. Their bounds scale roughly as the square root of the number of parameters in the model. In contrast to ours, their employed proof technique is more similar to~\cite{convattempt}: it focuses on computing the Lipschitz constant of the functions with respect to the parameters.
The result by \cite{GRRR} and ours, which we contrast in detail below, both have their merits.
In nutshell, the bound by \cite{GRRR} remarkably comes along without dependence on the product of spectral norms (up to log terms),
thus effectively removing the exponential dependence on depth.
Our result on the other hand comes along without an explicit dependence on the number of parameters, which can be very large in deep learning. As already noted in \cite{SupportsNormbased}, this property is crucial when the weights are small or close to the initialisation.

Lastly, we would like to point out that, over the course of the past year, several techniques have been introduced to replace the dependence on the product of spectral norms by an empirical version of it, at the cost of either assuming smoothness of the activation functions \cite{weima} or a factor of the inverse minimum preactivation \cite{inversepreac}.
Slightly earlier, a similar bound to that in~\cite{GRRR} (with explicit dependence on the number of parameters) had already been proved for an unsupervised data compression task (which does not apply to our supervised setting) in~\cite{latedec}.
Recently, another paper addressing the weight sharing problem appeared on arXiv \cite{newwork}. In this paper, which was preprinted several months after \cite{GRRR} and ours, the authors provided another solution to the weight sharing problem, which incorporates elements from both our approach and that of~\cite{GRRR}: they bound the $L^2$-covering numbers at each layer independently, but use parameter counting at each layer, yielding \textit{both} an unwanted dependence on the number of parameters in each layer (from the parameter counting) \textit{and} a dependence on the spectral norms from the chaining of the layers. 

Further related work includes the following. \cite{Du} showed size-free bounds for CNNs in terms of the number of parameters for two-layer networks. In~\cite{Otherlong}, the authors provided an ingenious way of computing the spectral norms of convolutional layers, and showed that regularising the network to make them approach $1$ for each layer is both feasible and beneficial to accuracy. 
Other than the above mentioned work, several researchers have provided interesting insights into DNNs from different perspectives, including through model compression~\cite{NeyshabourPacs}, capacity control by VC dimensions~\cite{BartVC}, and the implicit restriction on the function class imposed by
the optimisation procedure~\cite{compression,nonvacuous,Overparametrised,Taji,gradientover,NTK,Aroratwolayers}.


\section{Contributions in a Nutshell}
\label{informal}
\label{notation}
In this section, we state the simpler versions of our main results for specific examples of neural networks. The general results are described in in more technical detail in Section~\ref{precise}.  
\subsection{Fully Connected Neural Networks}
In the fully connected case, the bound is particularly simple: 
\begin{theorem}[Multi-class, fully connected]
	\label{fully}
	Assume that we are given some fixed reference matrices $M^1,M^2,\ldots,M^L$ representing the initialised values of the weights of the network. Set \small $ \widehat{R}_\gamma(F_{\mathcal{A}})=(1/n)(\#(i:F(x_i)_{y_i}< \gamma + \max_{j\neq y_i}F(x_i)_{j}))$ \normalsize
	With probability at least $1-\delta$, every network $F_{\mathcal{A}}$ with weight matrices $\mathcal{A}=(A^1,A^2,\ldots,A^L)$ and every margin $\gamma>0$ satisfy:
	\begin{align}
	\label{ourR}
	&\mathbb{P}(\argmax_j(F_{\mathcal{A}}(x)_j)\neq y)\leq \widehat{R}_\gamma(F_{\mathcal{A}})+\\&\widetilde{\mathcal{O}}\left(   \frac{\max_{i=1}^n\|x_i\|_{\Fr}R_{\mathcal{A}}}{\gamma \sqrt{n}}\log(\bar{W})+\sqrt{\frac{ \log(1/\delta) }{ n }  }\right),
	\end{align}
	where \footnotesize $W=\bar{W}=\max_{i=1}^LW_i$\normalsize \normalsize  is the maximum width of the network, and \begin{align}
	&R_{\mathcal{A}}:=L\max_{i}\|A^L_{i,\nbull}\|_{\Fr}\left(\prod_{i=1}^{L-1} \|A^i\|_{\sigma} \right) \\  &\left( \sum_{i=1}^{L-1}\frac{(\|A^i-M^i\|_{2,1}^{2/3}}{\|A^i\|_{\sigma}^{2/3}}  +\frac{\|A^L\|_{\Fr}^{2/3}}{\max_{i}\|A^L_{i,\nbull}\|^{2/3}_{\Fr}}\right)^{\frac{3}{2}}.
	\end{align}
\end{theorem}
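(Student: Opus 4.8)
The plan is to follow the margin-based covering route of \cite{Spectre}, replacing only the treatment of the output layer so that the number of classes $K$ never enters polynomially. First I would invoke the standard multi-class margin bound (e.g.\ \cite{book,Mohri}): composing $F_{\mathcal A}$ with a $1/\gamma$-Lipschitz ramp surrogate of the $\gamma$-margin $0$--$1$ loss and bounding the resulting empirical Rademacher complexity, one obtains, with probability $1-\delta$ and uniformly over $\gamma>0$ and over $\mathcal A$ (after union bounds over a geometric grid of margins and over a grid of admissible norm radii),
\[
\P(\argmax_j F_{\mathcal A}(x)_j\neq y)\le \widehat{R}_\gamma(F_{\mathcal A})+\frac{c}{\gamma}\,\widehat{\mathfrak R}_n(\mathcal F)+\widetilde{\mathcal O}\Big(\sqrt{\tfrac{\log(1/\delta)}{n}}\Big),
\]
where $\widehat{\mathfrak R}_n(\mathcal F)$ is the empirical Rademacher complexity on $x_1,\dots,x_n$ of the scalarised margin class $\big\{(x,y)\mapsto F_{\mathcal A}(x)_y-\max_{j\neq y}F_{\mathcal A}(x)_j\big\}$. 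Everything then reduces to bounding $\widehat{\mathfrak R}_n(\mathcal F)$, and for that I would use Dudley's entropy integral, so the real task becomes a covering-number estimate of this class in the sample $\ell_2$ metric.

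I would build the cover layer by layer. For layers $1,\dots,L-1$ I would reuse the Maurey-type lemma of \cite{Spectre}: covering $A^i$ at scale $\tau_i$ in the data-dependent seminorm $A\mapsto(\tfrac1n\sum_k\|A\,h^{i-1}(x_k)\|^2)^{1/2}$ costs $\log$-covering-number $\lesssim\tau_i^{-2}\|A^i-M^i\|_{2,1}^2\,b_{i-1}^2\log\bar W$, where $b_{i-1}=\max_k\|h^{i-1}(x_k)\|\le(\prod_{j<i}\|A^j\|_\sigma)\max_k\|x_k\|$ since the $\sigma_j$ are $1$-Lipschitz, and such a perturbation moves the penultimate representation $h^{L-1}$ by at most $\tau_i\prod_{i<j\le L-1}\|A^j\|_\sigma$ in the sample $\ell_2$ sense. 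The heart of the proof is the output layer together with the margin operator $\mu(v,y):=v_y-\max_{j\neq y}v_j$. Two facts drive it. (a) $\mu$ is $1$-Lipschitz in $v$ for $\|\cdot\|_\infty$, so perturbing $h^{L-1}$ by $\delta$ (in $\ell_2$) changes $\mu(A^Lh^{L-1},y)$ by at most $2\big(\max_m\|A^L_{m,\nbull}\|_{\Fr}\big)\|\delta\|$. (b) The value of $\mu(A^Lh,y)$, and its covering error, depend on the rows of $A^L$ only through a maximum, so covering \emph{every} row of $A^L$ to error $\tau_L$ in the $\ell_\infty$-over-the-sample sense (against the already-fixed features $h^{L-1}$) makes the margin's covering error at most $2\tau_L$ --- \emph{not} $K\tau_L$ or $\sqrt K\tau_L$. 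By a Maurey-type estimate, covering one row costs $\log$-covering-number $\lesssim\tau_L^{-2}\|A^L_{m,\nbull}\|_{\Fr}^2\,b_{L-1}^2\log(\bar Wn)$, and --- this is the crux --- the $\log$-covering-number of the \emph{product} over the $K$ rows is the \emph{sum} of these, so the row norms aggregate as $\sum_m\|A^L_{m,\nbull}\|_{\Fr}^2=\|A^L\|_{\Fr}^2$ rather than $\big(\sum_m\|A^L_{m,\nbull}\|_{\Fr}\big)^2$ (the quantity entering \cite{Spectre} through $\|A^{L\top}-M^{L\top}\|_{2,1}$, which is up to a factor $K$ larger), and only a $\log K\le\log\bar W$ survives.

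It then remains to balance the per-layer radii. Writing $\rho=\prod_{i=1}^{L-1}\|A^i\|_\sigma$ and $r_\infty=\max_m\|A^L_{m,\nbull}\|_{\Fr}$, the total sample-$\ell_2$ perturbation of the margin is $\lesssim r_\infty\sum_{i<L}\tau_i\prod_{i<j\le L-1}\|A^j\|_\sigma+\tau_L$, which I constrain to be $\le\epsilon$, while the total $\log$-covering-number is $\lesssim\log\bar W\,(\max_k\|x_k\|)^2\big(\sum_{i<L}\tau_i^{-2}\|A^i-M^i\|_{2,1}^2\,\rho_{i-1}^2+\tau_L^{-2}\|A^L\|_{\Fr}^2\,\rho^2\big)$ with $\rho_{i-1}=\prod_{j<i}\|A^j\|_\sigma$. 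Minimising the second subject to the first (Hölder/Lagrange --- the origin of the $3/2$ exponent) and using $\rho_{i-1}\prod_{i<j\le L-1}\|A^j\|_\sigma=\rho/\|A^i\|_\sigma$ gives a $\log$-covering-number of order
\[
\epsilon^{-2}\,(\max_k\|x_k\|_{\Fr})^2\,\rho^2\,r_\infty^2\,\log\bar W\,\Big(\sum_{i=1}^{L-1}\tfrac{\|A^i-M^i\|_{2,1}^{2/3}}{\|A^i\|_\sigma^{2/3}}+\tfrac{\|A^L\|_{\Fr}^{2/3}}{r_\infty^{2/3}}\Big)^{3},
\]
in which $r_\infty$ sits outside the bracket because it is the Lipschitz constant of $\mu$ in the penultimate features (it multiplies the contributions of all layers $<L$), while $\|A^L\|_{\Fr}$ sits inside as the price of covering the output layer itself --- exactly the shape of $R_{\mathcal A}$. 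Plugging this into $\widehat{\mathfrak R}_n\lesssim\inf_{\alpha>0}\{\alpha+n^{-1/2}\int_\alpha^{D}\sqrt{\log\mathcal N(\epsilon)}\,d\epsilon\}$ ($D$ the sample diameter, $\alpha\sim1/n$), dividing by $\gamma$, and absorbing the $\mathrm{polylog}(\bar W,n)$ terms and the modest $L$ factor from the chaining and union-bound bookkeeping into $\widetilde{\mathcal O}(\cdot)$, delivers the claimed inequality.

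I expect the output-layer step to be the main obstacle, on three counts. (i) One must make sure the $K$ rows genuinely aggregate as $\|A^L\|_{\Fr}$ (and $r_\infty$) with at most a $\log K$ overhead: the ceiling terms in the per-row Maurey bound (one per row) have to be controlled so that no polynomial power of $K$ creeps back in, e.g.\ by discarding rows whose norm is below the covering scale. (ii) One must get the pairing right in the balancing step --- that the sensitivity of $\mu$ to inner-layer perturbations scales with $r_\infty$ whereas the cost of covering $A^L$ scales with $\|A^L\|_{\Fr}$ --- which is what produces the unusual ``$\max_i\|A^L_{i,\nbull}\|_{\Fr}$ outside, $\|A^L\|_{\Fr}/\max_i\|A^L_{i,\nbull}\|_{\Fr}$ inside'' structure. (iii) One must verify that handling the $\max$ over competing classes via an $\ell_\infty$-over-the-sample cover (rather than an $\ell_2$ one, which would cost $\sqrt K$ through $\max_j\le\sum_j$) is compatible with the rest of the chaining and with non-linearities $\sigma_i$ that include pooling --- which needs only the stated $1$-Lipschitzness of the $\sigma_i$ and the $\ell_\infty$-Lipschitzness of $v\mapsto\max_{j\neq y}v_j$. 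The remaining ingredients --- the peeling of layers $<L$ and the entropy-integral bookkeeping --- are routine adaptations of \cite{Spectre,BottomUp}.
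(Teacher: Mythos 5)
Your proposal is correct and follows essentially the same route as the paper: a margin/Rademacher/Dudley reduction, layerwise Maurey covers chained with Lagrangian-optimised granularities (the source of the $2/3$--$3/2$ structure), and the key step of exploiting the $\ell_\infty$-Lipschitzness of the margin operator so that the last layer contributes $\|A^L\|_{\Fr}$ inside the sum and $\max_i\|A^L_{i,\nbull}\|_2$ as the Lipschitz factor outside, with only logarithmic class dependence. The one mechanical difference is that the paper implements your output-layer step via a single joint $L^\infty$ cover over all (sample, class) pairs using Zhang's covering theorem (its Propositions~\ref{MaureySup} and~\ref{suplinn}, via a block-diagonal embedding), which yields the $\|A^L\|_{\Fr}^2$ aggregation directly and sidesteps the per-row ceiling-term and row-norm-allocation issues you correctly flag as the main obstacle of a row-by-row product cover.
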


Note that the last term of the sum does not explicitly contain architectural information, and assuming bounded $L^2$ norms of the weights, the bound only implicitly depends on $W_i$ for $i\leq L-1$ (through $\|A^i-M^i\|_{2,1}\leq \sqrt{W_{i-1}}\|A^i-M^i\|_{\Fr}$), but not on $W_L$ (the number of classes). This means the above is a class-size free generalisation bound (up to a logarithmic factor) with $L^2$ norms of the last layer weight matrix. This improves on the earlier $L^{2,1}$ norm result in~\cite{Spectre}.
To see this, let us consider a standard situation where the rows of the matrix $A^L$ have approximately the same $L^2$ norm, i.e., $\|A^L_{i,\nbull}\|_2\asymp a$. (In Section~\ref{fillup} in the Appendix, we show that this condition holds except on a subset of weight space of asymptotically vanishing lebesgue measure and further discuss possible behaviour of the norms.)  In this case, our bound involves $\|A^L\|_{\Fr}\asymp\sqrt{W_L}a$, which incurs a square-root dependency on the number of classes. As a comparison, the bound in \cite{Spectre} involves $\|(A^L)^\top\|_{2,1}\asymp W_La$, which incurs a linear dependency on the number of classes. If we further impose an $L_2$-constraint on the last layer as $\|A^L\|_{\Fr}\leq a$ as in the SVM case for a constant $a$~\cite{Multilei}, then our bound would enjoy a logarithmic dependency while the bound in \cite{Spectre} enjoys a square-root dependency.
This cannot be improved without also changing the dependence on $n$. Indeed, if it could, we would be able to get good guarantees for classifiers working on fewer examples than classes.
Furthermore, in the above bound, the dependence on the spectral norm of $A^L$ in the other terms of the sum is reduced to a dependence on $\max_{i}\|A^L_{i,\nbull}\|_{2}$.
Both improvements are based on using the $L^\infty$-continuity of margin-based losses.

\subsection{Convolutional Neural Networks}

Our main contribution relates to CNNs. For the convenience of the reader, we first present a simple versions of our results.
\subsubsection{Two-layers}

The topic of the present paper is often notationally cumbersome, which imposes an undue burden on the reviewers and readers. Therefore, we first present a particular case of our bound for a two-layer network composed of a convolutional layer and a fully connected layer with a single input channel, \textit{with explicit pre chosen norm constraints}\footnote{It is common practice to leave the post hoc step to the reader in this way. Cf.,e.g.,~\cite{GRRR})}. Note that the restrictions are purely based on notational and reader convenience: more general results are presented later and in the supplementary material.

\textbf{2-layer Notation:}
Consider a two-layer network with a convolutional layer and a fully connected layer. Write $d,C$ for the dimensions of the input space and the number of classes respectively.  We write $w$ for the \textit{spacial} dimension of the hidden layer \textit{after pooling}\footnote{This is less than the number of convolutional patches in the input and is not influenced by the number of filters applied.} Write $A^1,A^2$ for the weight matrices of the first and second layer, with the weights appearing only once in the convolutional case (thus, the matrix $\tilde{A}^1$representing the convolution operation presents the weights of the matrix $A_1$ repeated as many times as the filters are applied). For any input $x\in \mathbb{R}^d$, we write $|x|_0$ for the maximum $L^2$ norm of \textit{a single convolutional patch} of $x$. The network is represented by the function $$F(x)=A^2 \sigma(\tilde{A}^1x),$$ where $\sigma$ denotes the non linearities (including both pooling and activation functions).  As above, $M^1,M^2$ are the initialised weights.
\begin{theorem}
	\label{Saviour}
	
	Let $a_1,a_2,a_*,b_0,b_1>0$. Suppose that the distribution over inputs is such that $|x|_{0}\leq b_0$ a.s. With probability $>1-\delta$ over the draw of the training set, for every network $\mathcal{A}=(A^1,A^2)$ with weights satisfying $\|(A^1-M^1)^\top\|_{2,1}\leq a_1$, $\|A^2-M^2\|_{\Fr}\leq a_2$ and  $\sup_{c\leq C}\| A^{2}_{c,\nbull}\|_2\leq a_{*}$, if $\sup_{i\leq n} \|\tilde{A}^1x_n\|_{\Fr}\leq b_1$ , then
		\begin{align}
	\label{Saviour?}
	&\mathbb{P}\left(\argmax_j(F_{\mathcal{A}}(x)_j)\neq y\right)\\&\nonumber\leq \widehat{R}_{\gamma}(F_{\mathcal{A}})+3\sqrt{\frac{\log(\frac{2}{\delta })}{2n}}+\frac{\mathcal{C}}{\sqrt{n}}\mathcal{R}\left[\log_2(n^2\mathcal{D})\right]^{\frac{1}{2}}\log(n),
	\end{align}
	where   $\mathcal{C}$ is an absolute constant,
	\begin{align}
\mathcal{R}^{2/3}=\left[b_0a_1\max\left(\frac{1}{b_1},\frac{\sqrt{w}a_*}{\gamma}\right)\right]^{2/3}+\left[\frac{b_1a_2}{\gamma}\right]^{2/3},
	\end{align}
	and the quantity in the log term is $\mathcal{D}=\max(b_0a_1\bar{W}a_*/b_1,b_1a_2C/\gamma)$ where $\bar{W}$ is the number of hidden neurons before pooling. 
	
	\end{theorem}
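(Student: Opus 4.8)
The plan is to reduce the statement to a chaining bound on the empirical Rademacher complexity of the surrogate loss class, and then control the associated covering numbers layer by layer, organising the book-keeping so that the convolutional filter is covered only \emph{once} and so that the number of classes $C$ and the pre-pooling width $\bar W$ enter only through logarithms. Concretely, in Step 1 let $\ell_\gamma$ be the $\tfrac1\gamma$-Lipschitz ramp that upper bounds $\mathbf 1[t\le\gamma]$ and lower bounds $\mathbf 1[t\le 0]$, let $\mathcal M(v,y)=v_y-\max_{j\neq y}v_j$ be the multi-class margin operator, and put $\mathcal L=\{(x,y)\mapsto \ell_\gamma(\mathcal M(F_{\mathcal A}(x),y)):\mathcal A\text{ admissible}\}$; these take values in $[0,1]$, so the standard symmetrisation/McDiarmid argument for bounded losses gives, with probability $\ge1-\delta$, that every admissible $\mathcal A$ satisfies $\mathbb P(\argmax_jF_{\mathcal A}(x)_j\neq y)\le\widehat R_\gamma(F_{\mathcal A})+2\widehat{\mathfrak{R}}_n(\mathcal L)+3\sqrt{\log(2/\delta)/(2n)}$, which already supplies the stochastic terms in the statement. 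Dudley's entropy integral then yields $\widehat{\mathfrak{R}}_n(\mathcal L)\lesssim\inf_{\alpha>0}\big(\alpha+\tfrac1{\sqrt n}\int_\alpha^{\infty}\sqrt{\log\mathcal N(\mathcal L,\epsilon,\|\cdot\|_{2,n})}\,d\epsilon\big)$ for the empirical $L^2$ metric $\|\cdot\|_{2,n}$, so the whole task reduces to bounding $\log\mathcal N(\mathcal L,\epsilon)$.

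\textbf{Step 2 (layer-wise covering, filter counted once).} Since $\ell_\gamma\circ\mathcal M$ is $\tfrac2\gamma$-Lipschitz from $(\mathbb R^C,\|\cdot\|_\infty)$ to $\mathbb R$, a cover of $\mathcal L$ at scale $\tfrac2\gamma\epsilon$ follows from an $\epsilon$-cover of the logit maps $x\mapsto F_{\mathcal A}(x)$ in the sample metric built from $\|\cdot\|_\infty$ on $\mathbb R^C$; this $L^\infty$-continuity is exactly what will later route the last layer through $a_*=\sup_c\|A^2_{c,\nbull}\|_2$ rather than through $\|A^2\|_{\Fr}$ in the multiplicative factors. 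Telescoping $F_{\mathcal A}(x)-F_{\mathcal B}(x)=A^2\big(\sigma(\tilde A^1x)-\sigma(\tilde B^1x)\big)+(A^2-B^2)\sigma(\tilde B^1x)$ reduces the logit cover to a cover of the first layer's pooled output plus a cover of the second-layer matrix. For the latter I apply a Maurer-type matrix-covering lemma to $\{(A^2-M^2)H^\top:\|A^2-M^2\|_{\Fr}\le a_2\}$, where $H$ has rows $\sigma(\tilde B^1x_i)$ with $\|H\|_{\Fr}\lesssim\sqrt n\,b_1$; this costs covering radius of order $b_1a_2/\gamma$ and only a $\log(\bar WC)$ factor, the reference matrix $M^2$ entering as a free translate --- this produces the $[b_1a_2/\gamma]^{2/3}$ piece of $\mathcal R$ and the $C$ inside $\mathcal D$. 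For the first layer I cover the single filter matrix $A^1-M^1$ in the relevant $\|(\cdot)^\top\|_{2,1}$ norm at radius $\epsilon_1$: since every convolutional patch of $x$ has $\ell_2$ norm at most $b_0$, this perturbs each of the $\bar W$ pre-pooling units by $O(b_0\epsilon_1)$ in $\ell_\infty$; the $1$-Lipschitzness of $\sigma$ as a map $(\ell_\infty\to\ell_\infty)$ --- the $L^\infty$-continuity of pooling and ReLu --- pushes this to the $w$ post-pooling units, hence to an $\ell_2$ perturbation $O(\sqrt w\,b_0\epsilon_1)$, which the second layer and loss amplify by $O(a_*/\gamma)$; combined with the renormalisation $1/b_1$ needed to chain with the second-layer cover, this gives a contribution of covering radius $b_0a_1\max(1/b_1,\sqrt w\,a_*/\gamma)$ --- the first term of $\mathcal R$ --- and only a $\log\bar W$ factor. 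The filter's covering number here depends only on the single-filter parameter count, never on the size of the activation map or on how many times the filter is applied; this is the weight-sharing gain.

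\textbf{Step 3 (assembling the entropy integral).} Combining the two covers, $\log\mathcal N(\mathcal L,\epsilon)\lesssim\log(n^2\mathcal D)\sum_{j\in\{1,2\}}\lceil n\,r_j^2/\epsilon_j^2\rceil$ with $r_1\asymp b_0a_1\max(1/b_1,\sqrt w\,a_*/\gamma)$ and $r_2\asymp b_1a_2/\gamma$, optimised over the split $\epsilon_1+\epsilon_2\asymp\epsilon$. Feeding this into Dudley's integral and performing the classical optimisation used for the spectrally-normalised margin bound~\cite{Spectre} (namely $\min\{\sum_jr_j/\epsilon_j:\sum_j\epsilon_j=\epsilon\}=(\sum_jr_j^{2/3})^{3/2}/\epsilon$), together with the choice $\alpha\asymp1/n$, yields $\widehat{\mathfrak{R}}_n(\mathcal L)\lesssim\tfrac1{\sqrt n}\big(r_1^{2/3}+r_2^{2/3}\big)^{3/2}\sqrt{\log_2(n^2\mathcal D)}\,\log n=\tfrac{\mathcal R}{\sqrt n}\sqrt{\log_2(n^2\mathcal D)}\,\log n$, the extra $\log n$ being the resolution of the integral down to $\alpha\asymp1/n$. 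Substituting into Step 1 gives the claimed bound.

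\textbf{Main obstacle.} The delicate point is the first-layer step: one must simultaneously (a) verify that covering the shared filter and pushing the perturbation through the \emph{non-elementwise} pooling map costs only $\sqrt w$ (post-pooling spatial dimension) and not $\sqrt{\bar W}$ --- this is precisely the strength of the $\ell_\infty\!\to\!\ell_\infty$ continuity of $\sigma$ --- and (b) ensure the filter's covering \emph{number} depends only on the single-filter parameter count, independently of how many patches it sweeps. Getting both at once, while routing the last layer through $a_*$ in the Lipschitz direction and through $a_2=\|A^2-M^2\|_{\Fr}$ only in the covering-radius direction so that $C$ survives solely inside the logarithm $\mathcal D$, is the crux; the remainder is the standard chaining computation.
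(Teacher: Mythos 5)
Your overall architecture is the same as the paper's: a Maurey-type $L^\infty$ covering lemma applied so that the filter is covered once regardless of how many patches it sweeps, the $\ell^\infty\!\to\!\ell^\infty$ continuity of pooling to pay only $\sqrt{w}$ rather than $\sqrt{\bar W}$, the last layer routed through $a_*$ in the Lipschitz direction and through $a_2$ only in the covering radius so that $C$ survives only inside the logarithm, and Dudley's integral with the $(\sum_j r_j^{2/3})^{3/2}$ optimisation over the split of $\epsilon$. However, there is one genuine gap, located in your Step 1. You apply the standard symmetrisation/McDiarmid bound to the plain ramp-loss class over ``admissible'' networks, where admissibility includes the condition $\sup_{i\le n}\|\tilde A^1x_i\|_{\Fr}\le b_1$. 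That condition involves the training sample, so the hypothesis class is data-dependent and the standard uniform-convergence step is not valid for it. If instead you drop that constraint and keep only the weight-norm constraints (which are data-independent), then your second-layer covering step fails: it requires the rows $\sigma(\tilde B^1 x_i)$ to have norm $O(b_1)$ on the sample, but the constraints $\|(A^1-M^1)^\top\|_{2,1}\le a_1$ etc.\ do not force $\|\tilde A^1x_i\|\le b_1$ for every network in the class, and bounding $\|\tilde A^1\|_\sigma$ via $a_1$ would reintroduce exactly the dependence on the activation-map size that the theorem avoids.

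The paper closes this gap by augmenting the loss: it applies the Rademacher/Dudley machinery to $\ell(x,y)=\max\bigl[\lambda_{b_1}(\|\sigma(\tilde A^1x)\|_2-b_1),\,\lambda_{\gamma}(\max_{j\neq y}F_{\mathcal A}(x)_j-F_{\mathcal A}(x)_y)\bigr]$, which still upper-bounds the $0$--$1$ loss, equals $1$ whenever the hidden activations overflow a multiple of $b_1$ (so the covers need only be accurate on inputs with bounded activations), and whose empirical average collapses to $\widehat R_\gamma(F_{\mathcal A})$ precisely under the stated hypothesis $\sup_{i\le n}\|\tilde A^1x_i\|_{\Fr}\le b_1$. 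This augmentation is also the true origin of the $1/b_1$ inside the $\max$ defining $\mathcal R$: it is the Lipschitz constant of the ramp $\lambda_{b_1}$ acting on the hidden-norm branch of the loss, i.e.\ it prices the probability that a \emph{test} point's hidden activations exceed $b_1$. Your attribution of that term to ``renormalisation needed to chain with the second-layer cover'' reproduces the correct formula but for the wrong reason; with only the plain margin loss the first-layer cover error would need to be controlled only relative to $\gamma$ after passing through $A^2$, the $1/b_1$ term would have no role, and the resulting statement could not be made uniform over the data-dependent class. The remainder of your argument (the covering lemmas, the $\sqrt{w}$ bookkeeping, and the entropy-integral computation) matches the paper's proof.
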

\textbf{Remarks:} 
\begin{enumerate}
	\item Just as in the fully connected case, the implicit dependence on the number of classes is only through an $L^2$ norm of the full last layer matrix. $b_1$ is a an upper bound on the $L^2$ norms of hidden activations.
	\item  $a_1$ is the norm of the filter matrix $A^1$, which counts each filter only once regardless of how many times it is applied. This means our bound enjoys only logarithmic dependence on input size for a given stride. 
	\item As explained in more detail at the end of  Appendix~\ref{comparisondeep}, there is also no explicit dependence on the size of the filters and the bound is stable through up-resolution. In fact, there is no explicit non logarithmic dependence on architectural parameters, and the bounds converges to 0 as $a_1,a_2$ tend to zero (in contrast to parameter space bounds such as~\cite{GRRR}).
	\item $a_*$ replaces the spectral norm of $A^2$, and is only equal to the maximum $L^2$ norm of the second layer weight vectors corresponding to each class. This improvement,comes from better exploiting the continuity of margin based losses with respect to the $L^\infty$ norm. 
	\item The spectral norm of the first layer matrix $\tilde{A}_1$ is not neccessary and is absorbed into an empirical estimate of the hidden layer norms. The first term in the $\max$ relates to the estimation of the risk of a test point presenting with a hidden layer norm higher than (a multiple of) $b_1$.
	\item $b_0$ refers to the maximum $L^2$ norm of a single convolutional patch over all inputs and patches. 
	\end{enumerate}

\textbf{A result for the multi-layer case}
We assume we are given training and testing points  $(x,y),(x_1,y_1),(x_2,y_2),\ldots,(x_n,y_n)$  drawn iid from any probability distribution over $\mathbb{R}^d\times\{1,2,\ldots,C\}$. We suppose we have a convolutional architecture so that for each filter matrix $A^{l}\in \mathbb{R}^{m_l\times d_l}$ from layer $l-1$ to layer $l$, we can construct a larger matrix $\tilde{A}^l$ representing the corresponding (linear) convolutional operation. The $0^{th}$ layer is the input, whist the $L^{th}$ layer is the output/loss function. We write $w_l$ for the \textit{spacial} width at layer $l$, $W_l$ for the total width at layer $l$ (including channels), and $W$ for $\max_l W_l$. For simplicity of presentation, we assume that the activation functions are composed only of ReLu and max pooling.
\begin{theorem}
	\label{posthocasymptsimple}
	With probability $\geq 1-\delta$, every network $F_{\mathcal{A}}$ with fliter matrices $\mathcal{A}=\{A^1,A^2,\ldots,A^L\}$ and every margin $\gamma>0$ satisfy:
	\begin{align}
	\label{OurR}
	&\mathbb{P}\left(\argmax_j(F_{\mathcal{A}}(x)_j)\neq y\right)\nonumber \\&\leq \widehat{R}_{\gamma}(F_{\mathcal{A}})+\widetilde{\mathcal{O}}\left(   \frac{R_{\mathcal{A}}}{\sqrt{n}}\log(\bar{W})+\sqrt{\frac{ \log(1/\delta) }{ n }  }\right),
	\end{align}
	where  $\bar{W}$ is the maximum number of neurons in a single layer (before pooling) and
	$$R_{\mathcal{A}}^{2/3}=\sum_{l=1}^L (T_l)^{2/3}$$ for where $T_l=$
	\small
	\begin{align*}
	B_{l-1}(X)\|(A^l-M^l)^\top\|_{2,1}\sqrt{w_l}\max_{U\leq L}\frac{\prod_{u=l+1}^{U} \|\tilde{A}^u\|_{\sigma'}}{B_U(X)}
	\end{align*}
	\normalsize
	if $l\leq L-1$ and 	for $l=L$, $T_l=$
	\begin{align*}
	\frac{B_{L-1}(X)}{\gamma}\|A^L-M^L\|_{\Fr}.
	\end{align*}
	Here, $w_l$ is the spacial width at layer $l$ after pooling. By convention, $b_L=\gamma$, and for any layer $l_1$, 	\small $B_{l_1}(X):=\max_{i}\left|F^{0\rightarrow l_l}(x_i)\right|_{l_1}$ \normalsize  denotes the maximum $l^2$ norm of any convolutional patch of the layer $l_1$ activations, over all inputs. For \small$l\leq L-1$\normalsize, \small$\|\tilde{A}_l\|_{\sigma'}\leq \|\tilde{A}_l\|$\normalsize \, denotes the maximum spectral norm of any matrix obtained by deleting, for each pooling window, all but one of the corresponding rows of $\tilde{A}$.
	In particular, for $l=L$, \small $\|\tilde{A}^L\|_{\sigma'}=\rho_L\max_{i}\|A^L_{i,\nbull}\|_{2}$.\normalsize Here  $A^L_{i,\nbull}$ denotes the $i$'th row of $A^L$, and  $\|\nbull\|_{2}$ denotes the Frobenius norm\footnote{NB: A simplified version of the above Theorem can be obtained where $T_l=\prod_{i\neq l}\|\tilde{A}^i\|_\sigma \|(A^l-M^l)^\top\|_{2,1}\sqrt{w_l}/\gamma$ for $l\leq L-1$ and $T_L=\prod_{i=1}^{L-1}\|\tilde{A}^i\|_\sigma\|A^L-M^L\|_{\Fr}$. See Appendix E and in particular equation~\eqref{fuyan}.
	}.
\end{theorem}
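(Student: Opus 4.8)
The plan is to follow the covering-number/chaining route behind the multiclass margin bound \eqref{source}, but with two structural modifications: a weight-sharing-aware covering of each convolutional layer, and a propagation metric that is $\ell_2$ \emph{within} each convolutional patch or pooling window but $\ell_\infty$ \emph{across} patches and channels, so that the intrinsic $L^\infty$-Lipschitzness of ReLu, of max-pooling, and of the multiclass margin operator can all be exploited. The first step is the standard reduction: with probability $\geq 1-\delta$, $\mathbb{P}(\argmax_j F_{\mathcal A}(x)_j\neq y)\leq \widehat R_\gamma(F_{\mathcal A})+2\,\rad_n(\ell_\gamma\circ\mathcal F)+3\sqrt{\log(2/\delta)/(2n)}$, where $\ell_\gamma$ is the ramp/margin loss and $\mathcal F$ the CNN class. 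The point already visible here is that $v\mapsto\ell_\gamma(v,y)$ depends on $v$ only through $v_y$ and $\max_{j\neq y}v_j$, each $1$-Lipschitz for $\|\cdot\|_\infty$; hence composing with $\ell_\gamma$ costs only a factor $1/\gamma$ and requires only an $\ell_\infty$ metric on the output layer. This is precisely what removes the explicit class count: no $\sqrt C$ (or $C$) arising from summing over output coordinates ever enters, and only $\|A^L-M^L\|_{\Fr}/\gamma$ survives at the last layer.

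The crux is the per-layer covering bound. For a fixed prefix $\widehat F^{0\to l-1}$ of covering centres, I would bound the covering number of layer $l$'s contribution by a Maurey/empirical-sampling argument applied to the \emph{filter} matrix $A^l$ rather than the expanded operator $\tilde A^l$: one approximates $A^l-M^l$ by a sparse, suitably rescaled combination of its rows with sparsity $\lesssim B_{l-1}(X)^2\|(A^l-M^l)^\top\|_{2,1}^2/\varepsilon^2$, and then \emph{re-uses the same sparsified filter on every patch}. Because the downstream map only ever needs the layer-$l$ output controlled patch-by-patch in $\ell_2$ (and coordinatewise through ReLu/pooling), this single sparsification cost suffices simultaneously for all applications of the filter; this is exactly why each filter contributes once, and why only $\sqrt{w_l}$ (the post-pooling spatial width), rather than the full activation-map size, enters $T_l$. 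Propagating a layer-$l$ perturbation forward through layers $l+1,\dots,U$ costs $\prod_{u=l+1}^U\|\tilde A^u\|_{\sigma'}$, where $\|\cdot\|_{\sigma'}$ is the pooling-reduced spectral norm: legitimate because after a max-pooling window only one input row is ever read, so one may delete all but one row per window before taking the operator norm. Normalising the layer-$l$ perturbation by $B_{l-1}(X)$ and measuring its downstream effect relative to $B_U(X)$ at the worst downstream layer $U$ produces the $\max_{U\leq L}$ factor in $T_l$.

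Summing the per-layer log-covering numbers and optimising the split of a total resolution $\varepsilon$ across layers — the elementary fact that $\min\{\sum_l\beta_l\varepsilon_l:\sum_l\alpha_l^2/\varepsilon_l^2\leq t\}$ is attained at $\varepsilon_l\propto(\alpha_l^2/\beta_l)^{1/3}$, which produces the $(\sum_l(\alpha_l\beta_l)^{2/3})^{3/2}$ shape — gives $\log\mathcal N(\varepsilon,\ell_\gamma\circ\mathcal F)\lesssim (R_{\mathcal A}^2/\varepsilon^2)\log(\bar W)$ with $R_{\mathcal A}$ exactly as defined in the statement. Then Dudley's entropy integral, with the integration range controlled in terms of $\mathcal D$, yields $\rad_n(\ell_\gamma\circ\mathcal F)\lesssim (R_{\mathcal A}/\sqrt n)\sqrt{\log(n^2\mathcal D)}\,\log n$, i.e.\ the claimed $\widetilde{\mathcal O}$ term. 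Finally, since $R_{\mathcal A}$, the $B_l(X)$, and the occurring spectral norms are data/weight dependent, one adds the usual a-posteriori union bound over a geometric grid of the relevant scalar quantities (and over $\gamma$), so that the statement holds uniformly over all networks $\mathcal A$ and all margins $\gamma>0$; this contributes only to the logarithmic factors hidden in $\widetilde{\mathcal O}$.

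The main obstacle is the per-layer step. One must simultaneously (a) pay the sparsification cost of a shared filter only once, (b) keep the propagation constants as the pooling-reduced $\|\tilde A^u\|_{\sigma'}$ rather than the full $\|\tilde A^u\|_\sigma$, and (c) avoid any factor of the full layer width by threading the $\ell_2$-per-patch / $\ell_\infty$-across-patches metric consistently through every layer, including the non-linearities. Making these three demands compatible forces a carefully layer-adapted choice of norms and a Maurey argument stated at the level of the filter's output on the worst patch; this is the genuinely new ingredient, the remainder being a careful adaptation of the analysis behind \eqref{source} and the peeling arguments of \cite{BottomUp,early}.
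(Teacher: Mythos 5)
Your overall route is the one the paper takes: an $L^\infty$ (Maurey-type) covering of each convolutional layer at the level of the filter matrix, applied to the expanded set of (sample, patch, output channel) points so that each filter is paid for once; propagation through pooling-reduced operator norms $\|\tilde A^u\|_{\sigma'}$; the $\varepsilon_l\propto(\alpha_l^2/\beta_l)^{1/3}$ allocation giving the $\bigl(\sum_l T_l^{2/3}\bigr)^{3/2}$ shape; Dudley; and a post hoc union bound over geometric grids. Two steps, however, are asserted rather than established, and both are places where the paper has to do genuine work.

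First, your ``standard reduction'' uses the plain ramp loss $\ell_\gamma\circ\mathcal F$, but the bound you are proving normalises by the \emph{empirical} quantities $B_l(X)=\max_i|F^{0\to l}(x_i)|_l$. A cover built relative to the training activations gives no control over a test point whose intermediate activations exceed $B_l(X)$, so with the plain ramp loss the argument only yields the spectral-norm variant in the theorem's footnote (with $\prod_{i\neq l}\|\tilde A^i\|_\sigma/\gamma$ in place of $\max_{U}\prod_u\|\tilde A^u\|_{\sigma'}/B_U(X)$). The paper instead augments the loss to
\begin{align*}
\max\Bigl[\sup_{l\le L-1}\lambda_{B_l}\bigl(|F^{0\rightarrow l}(x)|_{l}-2B_{l}\bigr),\ \lambda_{\gamma}\bigl(\max_{j\neq y}F_{\mathcal A}(x)_{j}-F_{\mathcal A}(x)_{y}\bigr)\Bigr],
\end{align*}
so that the excursion events are themselves estimated by the Rademacher bound; it is covering this augmented loss simultaneously at every layer $U$ that forces the cover at layer $l$ to be accurate relative to each $B_U(X)$ and produces the $\max_{U\le L}$ in $T_l$. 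Your phrase ``measuring its downstream effect relative to $B_U(X)$'' gestures at this but does not supply the mechanism that makes it legitimate in a test-error bound. Second, you state the sparsification cost directly at the $\|(A^l-M^l)^\top\|_{2,1}$ level. The sup-norm Maurey result (Zhang) is for a single $\ell_2$-bounded weight vector, and the rescaling trick that handles $L^{2,1}$ balls in the $L^2$-covering setting of Bartlett et al.\ is unavailable here because the data points of an $L^\infty$ cover must be fixed in advance. The paper bridges this by covering each output channel separately with channel-dependent granularities $\epsilon_i\propto\sqrt{a_i}$, optimising, and then taking a union over an $\epsilon/4b$-net (in $\ell^1$) of the possible norm allocations $(a_1,\dots,a_m)$ — an extra covering argument (Propositions~\ref{posthocify} and~\ref{suplin}) that your sketch skips. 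Neither gap changes the architecture of the proof, but both need to be filled for the theorem as stated.
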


Similarly to the two-layer case above, a notable property of the above bounds is that the norm involved is that of the matrix $A^l$ (the filter) instead of $\tilde{A}^l$ (the matrix representing the full convolutional operation), which means we are only adding the  norms of each filter once, regardless of how many patches it is applied to. As a comparison, although the genrealization bound in \cite{Spectre} also applies to CNNs, the resulting bound would involve the whole matrix $\widetilde{A}$ ignoring the structure of CNNs, yielding an extra factor of $O_{l-1}$ instead of $\sqrt{O_l}$, where $O_l$ denotes the number of convolutional patches in layer $l$: Through exploiting weight sharing, we remove a factor of $\sqrt{O_{l-1}}$ in the $l^{th}$ term of the sum compared to a standard the result in~\cite{Spectre}, and we remove another factor of $\sqrt{O_{l-1}/w_l}$ through exploitation of the $L^\infty$ continuity of max pooling and our use of $L^\infty$ covering numbers. 

A further significant improvement is in replacing the factor $\|X\|_{2,2}\prod_{i=1}^{l-1}\|\tilde{A}_i\|_{\sigma}
$ from the classic bound by $B_{l-1}(X)$, which is the maximum $L^2$ norm of a single convolutional patch. This implicitly removes another factor of $\sqrt{O_{l-1}}$, this time from the local connection structure of convolutions.

We note that it is possible to obtain more simple bounds without a maximum in the definition of $T_l$ by using the spectral norms to estimate the norms at the intermediary layers. 

\subsection{Empirical spectral norms; Lipschitz augmentation}

\label{augment}
A commonly mentioned weakness of norm-based bounds is the dependence on the product of spectral norms from above. In the case of fully connected networks, there has been a lot of progress last year on how to tackle this problem. In particular, it was shown in~\cite{inversepreac} and in~\cite{weima} that the products of spectral norms can be replaced by empirical equivalents, at the cost of either a factor of the minimum preactivation in the Relu case~\cite{inversepreac}, or Lipschitz constant of the \textit{derivative} of the activation functions if one makes stronger assumptions~\cite{weima}. In the appendix, we adapt some of those techniques to our convolutional, ReLu situation and find that the quantity $\rho^{\mathcal{A}}_l$ can be replaced in our case by:
$\rho^{\mathcal{A}}_l=\max\left(\max_{i}\max_{\tilde{l}\geq l}\frac{\rho^{\mathcal{A},x_i}_{l_1\rightarrow l_2}}{B_{l_2}(X)}, \max_{i}\max_{\tilde{l}\geq l}\frac{\theta^{\mathcal{A},x_i}_{l_1\rightarrow l_2}}{E_{l_2}(X)}\right)$
where $E_l(X)$ denotes the minimum preactivation (or distance to the max/second max in max pooling) at layer $l$ for over every input, $\rho^{\mathcal{A},x_i}_{l_1\rightarrow l_2}$ (resp. $\theta^{\mathcal{A},x_i}_{l_1\rightarrow l_2}$) is the Lipschitz constant of gradient of $F^{l_1\rightarrow l_2}$ with respect to the norms $|\nbull|_{\infty,l_1}$ and $|\nbull|_{l_2}$(resp. $|\nbull|_{\infty,l_1}$ and $|\nbull|_{\infty}$). These quantities can be computed explicitly: if $ M=\nabla_{F^{0\rightarrow l_1}(x_i)} F^{l_1\rightarrow l_2}$ so that locally around $F^{0\rightarrow l_1}(x_i)$, $F^{l_1\rightarrow l_2}(x)=Mx$, then $\theta^{\mathcal{A},x_i}_{l_1\rightarrow l_2}=\|M^{\top}\|_{1,\infty}$ and $\rho^{\mathcal{A},x_i}_{l_1\rightarrow l_2}=\max_{M}\|M'\|_{1,2}$ where $M'$ runs over all sub matrices of $M$ obtained by keeping only the rows corresponding to a single patch of layer $l_2$.

Note that an alternative approach is to obtain tighter bounds on the worst-case Lipschitz constant. Theorem~\ref{posthocasymptlip}  in the Appendix is a variation of Theorem~\ref{posthocasymptsimple} involving the explicit worst case Lipschitz constants across layers instead of spectral norms. These quantities can then be bounded, or made small via regularisation using recent techniques (cf, e.g. \citet{lip2,lip3}).

\section{General proof strategy}

Some key aspects of our proofs and general results rely on using the correct norms in activation spaces. On each activation space, we use the norm $|\nbull|_\infty$ to refer to the maximum absolute value of each neuron in the layer, the norm $|\nbull|_{l}$ to refer to the the maximum $l^2$ norm of a single convolutional patch (at layer $l$) and $|\nbull|_{\infty,l}$  for the maximum $l^2$ norm of a single pixel viewed as a vector over channels. Using these norms, we can for each pair of layers $l_1,l_2$ define the quantity $\rho_{l_1\rightarrow  l_2}$ as the Lipschitz constant of the subnetwork \small $F^{l_1\rightarrow  l_2}$ \normalsize with respect to the norms $|\nbull|_{\infty,l_1} $ and $|\nbull|_{l_2}$. Using those norms we can formulate a cleaner extention of Theorem~\ref{posthocasymptsimple}  where  the quantity $R_\mathcal{A}$ can be replaced by 
\begin{align*}
& \bigg[\sum_{l=1}^{L-1}\left(B_{l-1}(X)\|A^l-M^l\|_{2,1}\max_{\tilde{l}>l}\frac{\rho_{l\rightarrow \tilde{l}}}{B_{\tilde{l}}(X)}\right)^{2/3}\\&\quad \quad \quad \quad \quad  +\left(\frac{B_{L-1}(X)}{\gamma}\|A^L-M^L\|_{\Fr}\right)^{2/3}\bigg]^{3/2},
\end{align*}
\normalsize
where for any layer $l_1$, 	\small $B_{l_1}(X):=\max_{i}\left|F^{0\rightarrow l_l}(x_i)\right|_{l_1}$\normalsize  denotes the maximum $l^2$ norm of any conv. patch of the layer $l_1$, over all inputs. $B_L(X)=\gamma$. Our proofs derive this result, and the previous Theorems follow. See Section~\ref{precise}, Theorem~\ref{posthocasymptlip}\footnote{Our boundedness assumptions on \textit{worst-case} Lipschitz constants remove some of the interactions between layers, yielding simpler results than~\cite{weima,inversepreac}}.

 In the rest of this Section, we sketch the general strategy of the proof, focusing on the (crucial) one-layer step. At this point, we need to introduce notation w.r.t. the convolutional channels: we will collect the data matrix of the previous layer in the form of a tensor $X\in \mathbb{R}^{n\times U\times d}$ consisting of all the convolutional patch stacked together: if we fix the first index (sample i.d.) and the second index (patch i.d.), we obtain a convolutional patch of the corresponding sample. For a set of weights $A\in \mathbb{R}^{d\times m}
$, the result of the convolutional operation is written $XA$ where is defined by $(XA)_{u,i,j}=\sum_{o=1}^dX_{u,i,o}A_{o,j}$ for all $u,i,j$. 

A first step in bounding the capacity of NN's is to provide a bound on the covering numbers of individual layers.
	\begin{definition}[Covering number]\label{def:covering-number}
	Let $V\subset\rbb^n$ and $\|\cdot\|$ be a norm in $\rbb^n$. The covering number w.r.t. $\|\cdot\|$, denoted by $\ncal(V,\epsilon,\|\cdot\|)$, is the minimum cardinality $m$ of a collection of vectors $\mathbf{v}^1,\ldots,\mathbf{v}^m\in\rbb^n$ such that
	$
	\sup_{\mathbf{v}\in V}\min_{j=1,\ldots,m}\|\mathbf{v}-\mathbf{v}^j\|\leq\epsilon.
	$
	In particular, if $\mathcal{F}\subset \rbb^{\mathcal{X}}$ is a function class and $X=(x_1,x_2,\ldots,x_n)\in \mathcal{X}^n$ are data points, $\ncal(\mathcal{F}(X),\epsilon,(1/\sqrt{n})\|\cdot\|_{2})$ is the minimum cardinality $m$ of a collection of functions $\mathcal{F}\ni f^1,\ldots,f^m:\mathcal{X}\rightarrow \rbb$ such that for any $f\in \mathcal{F}$, there exists $j\leq m$ such that $\sum_{i=1}^n (1/n)\left|f^j(x_i)-f(x_i)\right|^2\leq \epsilon^2$ . Similarly,  $\ncal(\mathcal{F}(X),\epsilon,\|\cdot\|_{\infty})$ is the minimum cardinality $m$ of a collection of functions $\mathcal{F}\ni f^1,\ldots,f^m:\mathcal{X}\rightarrow \rbb$ such that for any $f\in \mathcal{F}$, there exists $j\leq m$ such that  $i\leq n, \quad\left|f^j(x_i)-f(x_i)\right|\leq \epsilon$.
\end{definition}
If we apply classical results on linear classifiers as is done in~\cite{Spectre} (where results on $L^2$ covering numbers are used) by viewing a convolutional layer as a linear map directly, we cannot take advantage of weight sharing. In this work, we circumvent this difficulty by applying results on the $L^\infty$ covering numbers of classes of linear classifiers to a different problem where each "(convolutional patch, sample, output channel)" combination is mapped into a higher dimensional space to be viewed as a single data point. A further reduction in dependence on architectural parameters is achieved by leveraging the $L^\infty$-continuity of margin-based loss functions and pooling.
We will need the following result from~\cite{PointedoutbyYunwen} (Theorem 4, page 537).
\begin{proposition}
	\label{MaureySup}
	Let $n,d\in\mathbb{N}$, $a,b>0$. Suppose we are given $n$ data points collected as the rows of a matrix $X\in \mathbb{R}^{n\times d}$, with $\|X_{i,\nbull}\|_2\leq b,\forall i=1,\ldots,n$. For $U_{a,b}(X)=\big\{X\alpha:\|\alpha\|_2\leq a,\alpha\in\rbb^d\big\}$, we have
	\small 
	\begin{align*}
	\log\mathcal{N}\left(U_{a,b}(X),\epsilon,\|\nbull\|_{\infty}  \right)\leq \frac{36a^2b^2}{\epsilon^2}\log_2\left(\frac{8abn}{\epsilon}+6n+1\right).
	\end{align*}
\end{proposition}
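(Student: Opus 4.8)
Proposition~\ref{MaureySup} is the classical $\ell^2$-to-$L^\infty$ covering estimate of Zhang, quoted here from \cite{PointedoutbyYunwen}; since it is stated there verbatim, the cleanest option in the paper is simply to cite it. For a self-contained derivation I would use Maurey's empirical method, as follows. Fix the data matrix $X$ with $\|X_{i,\nbull}\|_2\le b$ for all $i$ and a weight vector $\alpha$ with $\|\alpha\|_2\le a$, and set $v^\star := X\alpha\in\rbb^n$; then $|v^\star_i|=|\langle X_{i,\nbull},\alpha\rangle|\le ab$ for every $i$, and (following Definition~\ref{def:covering-number}) the relevant norm is $\|v\|_\infty=\max_{i\le n}|v_i|$. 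The plan is: (i) show that every $v^\star$ arising this way is within $\epsilon$, in $\|\nbull\|_\infty$, of some element of an explicit finite set $\Gamma$ that does not depend on $\alpha$; (ii) bound $\log|\Gamma|$; then $\log\mathcal{N}(U_{a,b}(X),\epsilon,\|\nbull\|_\infty)\le\log|\Gamma|$.

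For (i) I would realise $v^\star$ as the mean of $k$ i.i.d.\ draws from a probability distribution $\mu$ supported on a finite dictionary $\mathcal{D}\subset\rbb^n$ with $\mathbb{E}_\mu[u]=v^\star$ and $\|u\|_\infty\le R$ for all $u\in\mathcal{D}$, where $R=O(ab)$ and, crucially, $|\mathcal{D}|$ is polynomial in $n$ and $ab/\epsilon$ but does \emph{not} depend on $d$. Up to a preliminary coordinatewise rounding of $\alpha$ --- needed to make $\mathcal{D}$ finite, and arranged so that it perturbs $v^\star$ by only a small fraction of $\epsilon$ --- the atom associated with a sampled coordinate is the correspondingly rescaled (and, where necessary, truncated) data direction, the truncation being what forces $\|u\|_\infty\le R$. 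Writing $\bar u=\tfrac1k\sum_{t=1}^k u_t$, Hoeffding's inequality controls each coordinate, $\mathbb{P}(|\bar u_i-v^\star_i|>\epsilon)\le 2\exp(-k\epsilon^2/(2R^2))$, and a suitable argument over the $n$ coordinates then shows that already for $k$ of order $a^2b^2/\epsilon^2$ there is, with positive probability, a choice of $u_1,\dots,u_k$ with $\|\bar u-v^\star\|_\infty\le\epsilon$. Taking $\Gamma$ to be the set of all $k$-fold averages of atoms gives $|\Gamma|\le|\mathcal{D}|^{\,k}$ (counting ordered tuples), so $\log\mathcal{N}\le k\log|\mathcal{D}|$, which is of the advertised form $\frac{a^2b^2}{\epsilon^2}$ times a logarithm in $n,a,b,1/\epsilon$; tracking constants through the rounding and truncation scales yields exactly $\frac{36a^2b^2}{\epsilon^2}\log_2\!\big(\frac{8abn}{\epsilon}+6n+1\big)$.

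The substantive difficulty is step (i), and within it the requirement that the estimate be \emph{dimension-free}: the naive Maurey decomposition of $\alpha$ along the coordinate basis yields atoms whose $\|\nbull\|_\infty$ against the data is governed by $\|\alpha\|_1\le\sqrt d\,\|\alpha\|_2$, which both reinserts the ambient dimension $d$ and makes the range unbounded when some $\alpha_o$ is tiny (whence the truncation step). Designing the data-dependent, pre-rounded dictionary so that $d$ disappears entirely, controlling the bias introduced by truncation, and squeezing the coordinatewise union bound into a single logarithmic factor are precisely the technical content of \cite{PointedoutbyYunwen} (Theorem~4, p.~537), which we therefore quote rather than reprove.
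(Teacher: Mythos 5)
Your proposal takes the same route as the paper: Proposition~\ref{MaureySup} is not proved there but quoted verbatim as Theorem~4 (p.~537) of \cite{PointedoutbyYunwen}, so citing it is exactly what the authors do. One caveat on your supplementary Maurey sketch: the "suitable argument over the $n$ coordinates" is doing real work, since a plain union bound over coordinates forces $k$ of order $(a^2b^2/\epsilon^2)\log n$ rather than $a^2b^2/\epsilon^2$ and hence only yields the squared-log variant that the paper's own footnote mentions as the "far more approachable" alternative --- so your deferral of the single-log statement to the cited reference is not just convenient but necessary.
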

Note this proposition is stronger than Lemma 3.2 in~\cite{Spectre}. In the latter, the cover can be chosen independently of the data set, and the metric used in the covering is an \small $L^2$ \normalsize average over inputs. In Proposition~\ref{MaureySup}, the covering metric is a maximum over all inputs, and the data set must be chosen in advance, though the size of the cover only depends (logarithmically) on the sample size\footnote{We note that the proof is also much more obscure, although it is far more approachable to prove an analogous result with a squared log term instead, by going via the shattering dimension.}.

Using the above on the auxiliary problem based on (input, convolutional patch, ouput channel) triplets, we can prove the following covering number bounds for the one-layer case:
\begin{proposition}
	\label{suplinn}
	Let positive reals $(a,b,\epsilon)$ and positive integer $m$ be given. Let the tensor $X\in \mathbb{R}^{n\times U\times d}$ be given with $\forall i\in \{1,2,\ldots,n\},\forall u \in \{1,2,\ldots,U\},  \quad  \|X_{i,u,\nbull}\|_{2}\leq b $. For any choice of reference matrix $M$, we have
	\begin{align*}
	&\log \mathcal{N}  \left(     \{XA : A\in \mathbb{R}^{d\times m}, \|A-M\|_{\Fr}\leq a\},\epsilon,  \|\nbull\|_{\infty}   \right)\\& \leq  \frac{36a^2b^2}{\epsilon^2}\log_2\left[\left(\frac{8ab}{\epsilon}+7\right)mnU\right],
	\end{align*}
	where the norm $\|\nbull\|_{\infty}$ is over the space $ \mathbb{R}^{n\times U\times m}$. 

\end{proposition}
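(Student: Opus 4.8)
The plan is to reduce the claim to a single application of Proposition~\ref{MaureySup}, after ``unfolding'' the output channels of $A$ together with the patch and sample indices of $X$ into one long list of data points. First I would dispose of the reference matrix by a translation: for any $A$ with $\|A-M\|_{\Fr}\le a$, write $A=M+B$ with $\|B\|_{\Fr}\le a$, so that $XA=XM+XB$ and
\[
\{XA:A\in\mathbb{R}^{d\times m},\ \|A-M\|_{\Fr}\le a\}=XM+\{XB:\|B\|_{\Fr}\le a\}.
\]
Since a set and its translate have the same $\|\nbull\|_\infty$ covering number (translate a cover by the fixed tensor $XM$; external covers are permitted by Definition~\ref{def:covering-number}), it suffices to bound $\log\mathcal{N}\big(\{XB:\|B\|_{\Fr}\le a\},\epsilon,\|\nbull\|_\infty\big)$.

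Next I would identify $B\in\mathbb{R}^{d\times m}$ with the vector $\alpha=\mathrm{vec}(B)\in\mathbb{R}^{dm}$ obtained by stacking the $m$ columns, so that $\|\alpha\|_2=\|B\|_{\Fr}$. For every triple $(i,u,j)\in\{1,\dots,n\}\times\{1,\dots,U\}\times\{1,\dots,m\}$, the output entry $(XB)_{i,u,j}=\sum_{o=1}^d X_{i,u,o}B_{o,j}$ is the linear functional $\langle\phi_{i,u,j},\alpha\rangle$, where $\phi_{i,u,j}\in\mathbb{R}^{dm}$ carries the patch $X_{i,u,\nbull}\in\mathbb{R}^d$ in the block of coordinates belonging to column $j$ and vanishes elsewhere; in particular $\|\phi_{i,u,j}\|_2=\|X_{i,u,\nbull}\|_2\le b$ for every triple. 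Collecting the $nUm$ vectors $\phi_{i,u,j}$ as the rows of a matrix $\Phi$, the set $\{XB:\|B\|_{\Fr}\le a\}$ is exactly (a reshaping of) $U_{a,b}(\Phi)=\{\Phi\alpha:\|\alpha\|_2\le a\}$, and the $\|\nbull\|_\infty$ norm on $\mathbb{R}^{n\times U\times m}$ is literally the $\ell_\infty$ norm on $\mathbb{R}^{nUm}$ occurring in Proposition~\ref{MaureySup}. Applying that proposition --- with the role of its ``$n$'' played by $nUm$ and that of its ``$d$'' by $dm$ (and noting the bound does not depend on the latter) --- yields
\[
\log\mathcal{N}\big(\{XB:\|B\|_{\Fr}\le a\},\epsilon,\|\nbull\|_\infty\big)\le\frac{36a^2b^2}{\epsilon^2}\log_2\!\Big(\tfrac{8ab\,nUm}{\epsilon}+6nUm+1\Big).
\]
Since $6nUm+1\le 7nUm$, the argument of $\log_2$ is at most $\big(\tfrac{8ab}{\epsilon}+7\big)mnU$, which combined with the translation step gives the claimed estimate.

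The entire content is carried by Proposition~\ref{MaureySup}; the one point that genuinely matters is that Maurey-type sparsification bounds are \emph{insensitive to the ambient dimension}, which is precisely why unfolding the $m$ output channels costs only a factor $m$ \emph{inside} the logarithm rather than a multiplicative factor $m$ out front (the latter being what one would incur by covering the $m$ columns separately and taking a product cover). I do not expect a real obstacle beyond routine bookkeeping: one must only verify that the block-sparse matrix $\Phi$ still satisfies the sole hypothesis of Proposition~\ref{MaureySup}, namely that its rows have $\ell_2$-norm at most $b$, which holds by the assumed bound on the patches of $X$.
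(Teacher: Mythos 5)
Your proposal is correct and is essentially the paper's own argument: the paper likewise reduces to Proposition~\ref{MaureySup} by treating the $nUm$ block-sparse vectors (one per sample--patch--output-channel triple, each carrying a single patch $X_{i,u,\nbull}$ in the block of its output channel) as the data points and the vectorised filter $A$ as the linear classifier, after translating away $M$. Your write-up just makes explicit the bookkeeping (the $\mathrm{vec}$ identification, the row-norm check, and the $6nUm+1\le 7nUm$ step) that the paper leaves implicit.
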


\textbf{Sketch of proof:}
By translation invariance, it is clear that we can suppose $M=0$. We consider the problem of bounding the $L^\infty$ covering number of $\{(v_i^\top X^j)_{i\leq I,j\leq J}: \sum_{i\leq I}\|v_i\|^2_2\leq a^2\}$ (where $X^j\in \mathbb{R}^{d\times n}$ for all $j$) with only logarithmic dependence on $n,I,J$. Here, $I$ plays the role of the number of output channels, while $J$ plays the role of the number of convolutional patches.
We now apply the above Proposition~\ref{MaureySup} on the $nIJ\times dI $ matrix constructed as follows:

\small
\begin{align*}
\left(\begin{array}{ccccccccccccccccc}
X^1&  0& \ldots & 0	 \\
0   &  X^1& \ldots & 0	\\
\ldots &  \ldots & \ldots &\ldots\\
0 &  0& \ldots & X^1\\
X^2 &  0 & \ldots& 0 \\
0 & X^2 &  \ldots & \ldots\\
\ldots & \ldots &\ldots  & \ldots\\
0  & 0	&\ldots & X^2 \\
X^3 &  0 & \ldots& 0 \\
\ldots &  \ldots & \ldots &\ldots\\
X^J&  0& \ldots & 0	 \\
0   &  X^J& \ldots & 0	\\
\ldots &  \ldots & \ldots &\ldots\\
0 &  0& \ldots & X^J\\
\end{array}\right)^\top,
\end{align*}
\normalsize 
with the corresponding vectors being constructed as $(v_1,v_2,\ldots,v_I)\in \mathbb{R}^{dI}$.

If we compose the linear map  on $\mathbb{R}^{n\times d}$ represented by $(v_1,v_2,\ldots,v_I)^\top$ with $k$ real-valued functions with $L^\infty$ Lipschitz constant $1$, the above argument yields comparable bounds on the $\|\nbull\|_{2}$ covering number of the composition, losing a factor of $\sqrt{k}$ only (for the last layer, $k=1$, and for convolutional layers, $k$ is the number of neurons in the layer left after pooling).

The proposition above is only enough to deal with a purely $l^2$ version of our bounds. To prove Theorem~\ref{posthocasymptsimple}, which involves $\|\nbull\|_{2,1}$ norms, we must show the following extension:
\begin{proposition}
	\label{suplin}
	Let positive reals $(a,b,\epsilon)$ and positive integer $m$ be given. Let the tensor $X\in \mathbb{R}^{n\times U\times d}$ be given with $\forall i\in \{1,2,\ldots,n\},\forall u \in \{1,2,\ldots,U\},  \>  \|X_{i,u,\nbull}\|_{2}\leq b $. For any fixed $M$:
	\begin{align*}
	&\log \mathcal{N}  \left(     \{XA : A\in \mathbb{R}^{d\times m}, \|A-M\|_{2,1}\leq a\},\epsilon,  \|\nbull\|_{*}   \right) \\
	&\leq  \frac{64a^2b^2}{\epsilon^2} \log_2\left[\left(\frac{8ab}{\epsilon}+7\right)mnU\right],
	\end{align*}
	where 
	 the norm $\|\nbull\|_{*}$  over the space $ \mathbb{R}^{n\times U\times m}$  is defined by \small $\| Y  \|_{*}= \max_{i\leq n}\max_{j\leq U} \left[\sum_{k=1}^m Y_{i,j,k}^2\right]^{\frac{1}{2}}. $
\end{proposition}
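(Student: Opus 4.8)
The plan is to mirror the proof of Proposition~\ref{suplinn} — reduce everything to the $L^\infty$-covering bound for $\ell_2$-balls in Proposition~\ref{MaureySup} — but now account both for the $(2,1)$-norm constraint on $A$ (instead of the Frobenius constraint) and for the finer output metric $\|\nbull\|_*$, which measures an $\ell_2$-norm over output channels rather than an $\ell_\infty$-norm. By translation invariance of the covering number we may and do assume $M=0$. The first step is to write the $(2,1)$-ball as a convex hull of rank-one atoms: $\{A:\|A\|_{2,1}\le a\}=a\cdot\mathrm{conv}\{\hat b\,e_k^\top:\hat b\in\mathbb{R}^d,\ \|\hat b\|_2\le 1,\ k\in\{1,\dots,m\}\}$ (including $\hat b=0$). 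Applying $X$, every element $XA$ of the class lies in $a\cdot\mathrm{conv}(S)$, where $S$ is the set of tensors $X(\hat b e_k^\top)\in\mathbb{R}^{n\times U\times m}$ — the scalar function $(i,u)\mapsto\langle X_{i,u,\nbull},\hat b\rangle$ placed in channel $k$ — together with $0$; each such atom has $\|\nbull\|_*\le b$. By an approximate Carathéodory (Maurey) argument, for $N=\lceil 4a^2b^2/\epsilon^2\rceil$ it then suffices to cover, at scale $\epsilon/2$, the set of $N$-fold averages $\tilde A=\tfrac{a}{N}\sum_{t=1}^N \hat b_t e_{k_t}^\top$, since any $XA$ in the class lies within $ab/\sqrt N\le\epsilon/2$ of some such $X\tilde A$ in $\|\nbull\|_*$.

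The crucial step is covering the set of these $X\tilde A$'s without the blow-up a naive analysis would incur. Fix a channel-assignment $(k_1,\dots,k_N)$ and set $n_k=\#\{t:k_t=k\}$, so $\sum_k n_k=N$; grouping the indices $t$ sharing a common value of $k$, $X\tilde A=\tfrac{a}{N}\sum_k (Xw_k)e_k^\top$ with $w_k=\sum_{t:k_t=k}\hat b_t$ and $\|w_k\|_2\le n_k$. For each $k$ we cover $\{Xw:\|w\|_2\le n_k\}\subseteq\mathbb{R}^{n\times U}$ in $\|\nbull\|_\infty$ at a channel-dependent scale $\eta_k$ via Proposition~\ref{MaureySup} (applied to the $nU\times d$ matrix of all patches, with data-norm bound $b$), paying $\le 36\,n_k^2 b^2\eta_k^{-2}\log_2[(8 n_k b/\eta_k+7)nU]$. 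Taking the product of these nets over $k$, the induced error in $\|\nbull\|_*$ is $\le\tfrac{a}{N}\big(\sum_k\eta_k^2\big)^{1/2}$, so it is enough to ensure $\sum_k\eta_k^2\le N^2\epsilon^2/(4a^2)$. Allocating this budget as $\eta_k^2=\tfrac{N\epsilon^2}{4a^2}\,n_k$ makes $36\,n_k^2 b^2\eta_k^{-2}=\tfrac{144 a^2 b^2}{\epsilon^2}\cdot\tfrac{n_k}{N}$, whose sum over $k$ is exactly $\tfrac{144 a^2 b^2}{\epsilon^2}$ — independently of how the mass $n_k$ is spread. Adding the cost $N\log(m+1)$ of ranging over channel-assignments, absorbing $\log(m+1)$ into the logarithmic factor, and optimising the split of $\epsilon$ between the two error sources (and the constant in $N$) yields a bound of the displayed form $\tfrac{64 a^2 b^2}{\epsilon^2}\log_2[(\tfrac{8ab}{\epsilon}+7)mnU]$.

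The main obstacle — and the reason the statement has this clean single-logarithmic shape with the $(2,1)$-norm rather than, say, an extra factor of $\sqrt m$ or a factor $a^4b^4/\epsilon^4$ — is precisely this last step. If one sparsified to $N\asymp a^2b^2/\epsilon^2$ rank-one terms and then covered each of the $N$ unit directions $\hat b_t$ independently to accuracy $\asymp\epsilon/a$, the resulting cover would have size of order $(a^2b^2/\epsilon^2)^{2}$ in the exponent, i.e. of order $a^4b^4/\epsilon^4$. The two ideas that defeat this are: (i) group the sparsified terms by output channel \emph{before} discretising directions, so that channel $k$ contributes a single vector $w_k$ of norm $\le n_k$ rather than $n_k$ separate unit vectors; and (ii) let the per-channel resolution scale as $\eta_k\propto\sqrt{n_k}$, at which point $\sum_k n_k^2/\eta_k^2$ collapses, via weighted Cauchy–Schwarz, to $N^2/(\sum_k\eta_k^2)$, and the factor $N^2$ cancels against the (large) direction budget $\sum_k\eta_k^2\propto N^2$. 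This is exactly the mechanism by which the $(2,1)$-constraint "pays for" the finer $\|\nbull\|_*$ output metric with no loss in the number of channels.

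A secondary point requiring care is that both the approximate-Carathéodory step (a maximum over the $nU$ pairs $(i,u)$, since $\|\nbull\|_*$ is not a Hilbert norm) and the union over channel-assignments contribute only logarithmically in $n,U,m$ rather than polynomially; keeping these within a single logarithm is where the "obscure" single-logarithmic form of Proposition~\ref{MaureySup} — as opposed to the squared-log bound one would get through the shattering dimension — is really used. Modulo that bookkeeping, the argument above is essentially a Maurey/approximate-Carathéodory reduction to Proposition~\ref{MaureySup}, directly parallel to the proof of Proposition~\ref{suplinn}.
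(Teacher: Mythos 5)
Your overall strategy is the same as the paper's: cover each output channel separately with Proposition~\ref{MaureySup}, allocate the per-channel resolution proportionally to the square root of that channel's share of the $L^{2,1}$ mass so that the total covering cost collapses to $a^2b^2/\epsilon^2$, and pay an extra Maurey-type discretisation cost for not knowing the allocation in advance. The paper implements the last step by taking an $L^1$-net of the simplex of channel norms $(a_1,\ldots,a_m)=(\|A_{\nbull,1}\|_2,\ldots,\|A_{\nbull,m}\|_2)$ via Proposition~\ref{posthocify}; you implement it by sparsifying $A$ itself into $N$ rank-one atoms. These are morally the same discretisation, but your version has a genuine gap.

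The gap is in your outer approximate-Carath\'eodory step: you claim that any $XA$ lies within $ab/\sqrt{N}$ of some $X\tilde A$ in the norm $\|\nbull\|_*$. The Hilbert-space rate $ab/\sqrt{N}$ controls, for each \emph{fixed} pair $(i,u)$, the expected squared $\ell_2$-deviation over channels; since $\|\nbull\|_*$ takes a maximum over all $nU$ pairs, the vanilla probabilistic-method argument only yields a $\tilde A$ that is close on average over $(i,u)$, not uniformly. Making the bound uniform costs a concentration/union-bound factor, i.e.\ $N\asymp a^2b^2\log(nU)/\epsilon^2$ (equivalently, the type-2 constant of $\ell_\infty^{nU}(\ell_2^m)$). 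That extra $\log(nU)$ propagates into your channel-assignment union bound $N\log(m+1)$ and produces a term of order $\frac{a^2b^2}{\epsilon^2}\log(nU)\log(m)$ — a product of two logarithms, which is strictly weaker than the stated single-logarithm bound. You cannot invoke Proposition~\ref{MaureySup} to rescue this step: it is a covering-number statement for the linear class $\{X\alpha:\|\alpha\|_2\le a\}$, not an approximate-Carath\'eodory statement for convex hulls of rank-one atoms in $\|\nbull\|_*$. The paper sidesteps the problem entirely by performing the Maurey sparsification only on the $m$-dimensional vector of channel norms (a data-independent Hilbert-space computation, Proposition~\ref{posthocify}), keeping the channel directions continuous until the per-channel application of Proposition~\ref{MaureySup}; after the norms are fixed, an explicit Lagrangian optimisation over the $\epsilon_i$ gives exactly your $\eta_k\propto\sqrt{n_k}$ collapse. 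If you replace your rank-one sparsification of $A$ by a sparsification of the norm vector alone, the rest of your argument goes through and coincides with the paper's proof.
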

\textbf{Sketch of proof:}
We first assume fixed bounds on the $L^2$ norms $\|A^{i,\nbull}\|_{2}=a_i$ of each filter, and use Proposition~\ref{suplinn} with $m=1$ for each output channel with a different granularity $\epsilon_i$. We then optimize over the choice of $\epsilon_i$, and make the result apply to the case where only $a=\sum_{i} a_i\geq \|A\|_{2,1}$ is fixed in advance by $l^1$ covering the set of possible choices for $(a_1,a_2,\ldots,a_m)$ for each $a$, picking a cover for each such choice and taking the union. We accumulate a factor of $2$ because of this approach, but to our knowledge, it is not possible to rescale the inputs by factors of $\sqrt{a_i}$ as was done in~\cite{Spectre}, as the input samples in an $L^\infty$ covering number bound must be chosen in advance.

We can now \textbf{sketch the proof of the Theorem~\ref{Saviour} :} 
  we use the loss function \begin{align*}
l(x_i,y_i)&=\max\big[\lambda_{b_1}(\|\sigma(\tilde{A}^1x_i)\|_2-b_1),\nonumber\\&\lambda_{\gamma}\big(\max_{j\neq y}(A^2\sigma(\tilde{A}^1x_i))_{j}-(A^2\sigma(\tilde{A}^1x_i))_{y_i}\big)   \big],
\end{align*}
where for any $\theta>0$ the \textit{ramp loss} $\lambda_{\theta}$ is defined by $\lambda_\theta=1+\min(\max(x,-\theta),0)/\theta$.
This loss incorporates the following two failure scenarios: (1) the $L^2$ norm of the hidden activations exceed a multiple of $b_1$ (2) The activations behave normally but the network still outputs a wrong prediction.
Since pooling is continuous w.r.t. the infty norm, the above results for the one layer case applied to a layer yields an $\epsilon$ cover of hidden layer w.r.t to the $L^\infty$ norm. The contributions to the error source (1) therefore follows directly from the first layer case. The contribution of the 1st layer cover error to (2) must be multiplied $1/\gamma$ and the Lipschitz constant of $A^2$ with respect to the $L^\infty$ norms, which we estimate by $\sqrt{w}a_*$ since the Euclidean norm of the deviation from the cover at the hidden layer is bounded by $\sqrt{w}$ times the deviation in $||_{\infty,1}$ norm \footnote{This norm is a supremum over the spacial locations of the $L^2$ norms over the channel directions.}. 

\section{Remarks and comparison to concurrent work}
\label{comparisonsimple}

We have addressed the main problems of weight sharing and dependence on the number of classes. As mentioned earlier, \cite{GRRR} have recently studied the former problem independently of us. It is interesting to provide a comparison of their and our main results, which we do briefly here and in more detail in the Appendix.

The bound in \cite{GRRR} scales like \small  $\mathcal{C}\sqrt{\frac{\mathcal{W}(\sum_{l=1}^Ls_l-\log(\gamma))+\log(1/\delta)}{n}}, $ \normalsize where $s_l$ is an upper bound on the spectral norm of the matrix corresponding to the $l^{th}$ layer, $\gamma$ is the margin, and $\mathcal{W}$ is the number of parameters, taking weight sharing into account by counting each parameter of convolutional filters only once. The idea of the proof is to bound the Lipschitz constant of the map from the set of weights to the set of functions represented by the network, and use dimension-dependent results on covering numbers of finite dimensional function classes. Remarkably, this doesn't require chaining the layers, which results in a lack of a non logarithmic dependence on the product of spectral norms. Note that the term \small $\sum_{l=1}^Ls_l$ \normalsize comes from a log term via the inequality \small $\prod(1+s_i)\leq \exp(\sum s_i)$\normalsize.

On the other hand, the bound scales at least as the square root of the number of parameters, even if the weights are arbitrarily close to initialisation. 
In contrast, our bound~\eqref{posthocasymptsimple} scales like \small $O(\sqrt{1/n})$ \normalsize up to log terms when the weights approach initialisation. Furthermore, if we fix an explicit upper bound on the relevant norms (cf.Theorem~\ref{firstmilestone}) \footnote{The bounds in~\cite{GRRR} and other works deal only with this case, leaving the post hoc case to the reader}, \textbf{the bound then converges to zero} as the bounds on the norms go to zero. In a refined treatment via the NTK literature (cf.~\cite{Aroratwolayers}), explicit bounds would be provided for those quantities via other tools.

Finally, note that the main advantages and disadvantages of our bounds compared to~\cite{GRRR} are connected through a tradeoff in the proof where one can decide which quantities go inside or outside the log. In particular, it is not possible to combine the advantages of both. We refer the reader to Appendix~\ref{comparisondeep} for a more detailed explanation.

\section{Conclusion}
\label{conclusion}
We have proved norm-based generalisation bounds for deep neural networks with significantly reduced dependence on certain parameters and architectural choices. On the issue of class dependency, we have completely bridged the gap between the states of the art in shallow methods and in deep learning. Furthermore, we have, simultaneously with~\cite{GRRR}, provided the first satisfactory answer to the weight sharing problem in the Rademacher analysis of neural networks. Contrary to independent work, our bounds are norm-based and are negligible at initialisation. 

\bibliography{bibliography}

\appendix


	\onecolumn

	\setcounter{secnumdepth}{1}

	\section{Notation and general results}
	\label{precise}

	\subsection{Notation}
	
	We use the following notation to represent linear layers with weight sharing such as convolution. Let $x\in \mathbb{R}^{U\times w}$, $A\in \mathbb{R}^{m\times d}$ and $S^1,S^2,\ldots,S^O$ be $O$ ordered subsets of $(\{1,2,\ldots,w\}\times \{1,2,\ldots,U\})$ each of cardinality $d$\footnote{We suppose for notational simplicity that all convolutional filters at a given layer are of the same size. It is clear that the proof applies to the general case as well.}, where we will denote by $S^o_i$ the $i^{th}$ element of $S^o$. We will denote by $\Lambda_A(x)$ the element of $\mathbb{R}^{m\times O}$ such that $\Lambda_A(x)_{j,o}=\sum_{i=1}^d X_{S^o_i}A_{j,i}$.   In a typical example the sets $S^1,S^2,\ldots,S^O$ represent the image patches where the convolutional filters are applied, and $\Lambda$ would be represented via the "tf.nn.conv2d" function in Tensorflow.  We will also write $\tilde{A}^l$ for the matrix in $\mathbb{R}^{(U_{l-1}w_{l-1})\times(O_{l-1}m_l)}$ that represents the convolution operation $\Lambda_{A^l}$.

	To represent a full network, we suppose that we are given a number $L\in\mathbb{N}$ of layers, $7L+2$ numbers \small $m_1,m_2,\ldots,m_L,d_1,d_2,\ldots,d_L,\rho_1,\rho_2,\ldots,\rho_L,$ \\$w_0,w_1,\ldots,w_L,U_0,U_1,\ldots,U_L,O_1,O_2,\ldots,O_L,\\ \text{and}\quad k_1,k_2,\ldots,k_L$\normalsize, as well as $\sum_{l=0}^LO_l$ ordered sets $S^{l,o}\subset \{1,2,\ldots,U_l\}\times \{1,2,\ldots,w_l\}$ (for $l\leq L$, $o\leq O_l$), and $L-1$ functions $G_l:\mathbb{R}^{m_{l}\times O_{l-1}}\rightarrow \mathbb{R}^{U_l\times w_l}$ (for $l=1,2,\ldots,L$)  which are $\rho_l$-Lipschitz with respect to the $l^\infty$ norm.
	
	The architecture above can help us represent a feedforward neural network involving possible (intra-layer) weight sharing as  \begin{align}
	&F_{A^1,A^2,\ldots,A^L}:\mathbb{R}^{U_0\times w_0}\rightarrow \mathbb{R}^{U_L\times w_l}:x \mapsto \nonumber\\&(G_{L}\circ\Lambda_{A^L}\circ G_{L-1}\circ \Lambda_{A^{L-1}}\circ \ldots G_1\circ \Lambda_{A^1})(x)\nonumber,
	\end{align}\normalsize
	where for each $l\leq L$, the weight $A^l$ is a matrix in $\mathbb{R}^{ m_l\times d_l}$. We will also write $F^{l_1\rightarrow l_2}$ for the subnetwork that computes the function from the $l_1^{th}$ layer activations to the $l_2^{th}$ layer activations.
	As usual, offset terms can be accounted for by adding a dummy dimension of constants at each layer (this dimension must belong to $S^{l,o}$ for each $o$). We provide a quick table of notations in Figure~\ref{thiss}.

	\begin{figure}
		\includegraphics[width=\linewidth]{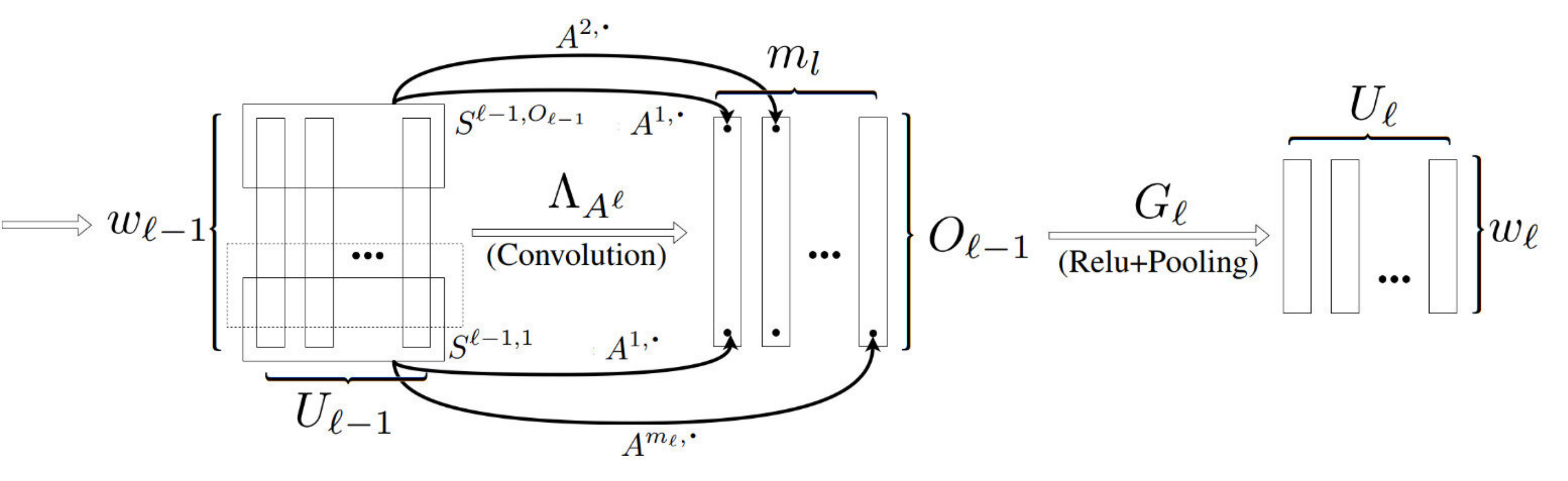}
		\caption{\small Illustration of architecture for one layer}
		\label{archit}
	\end{figure}

	\begin{table*}[h!] \small
		\centering
		\begin{tabular}{c|c}
			
			Notation & Meaning  \\
			\midrule
			$G_l$    & Activation + pooling at layer $l$\\
			$A^l$ & Filter matrix at layer $l$ \\
			$\Lambda_{A^l}$ & Convolution operation relative to filter matrix $A^l$\\
			$\tilde{A}^l$ & Matrix representing $\Lambda_{A^l}$
			(Has repeated weights in conv. net)\\
			$O_l$ & Number of convolutional patches at layer $l$\\
			$m_l$ & \# of channels at layer $l$ before nonlinearity\\ &(=\# of output channels at layer $l-1$) \\
			$\gamma$ & Classification margin\\
			$S^{l,o}$ & $o^{th}$ convolutional patch at layer $l$\\
			$w_{l}$ & Number of spatial dimensions at layer $l$ (No pooling $\implies$ $w_l=O_{l-1}$)\\
			$U_l$ & Number of channels after nonlinearity\\
			$\rho_l$ & Lipschitz constant of $G_l$\\
			$W_l=U_lw_l$ & Width (after pooling) at layer $l$ \\
			$W=\max_l W_l$ & Maximum network width (after any pooling)\\
			$\bar{W}=\max_l O_{l-1}m_l$ & Maximum network width (before any pooling)\\
			$\mathcal{W}$ & Total number of parameters \\
			$d_l$ & Size of convolutional patches corresponding to the operation $\Lambda_{A^l}$\\
			$|\nbull|_{\infty,l}$ & Max $l^2$ norm of a pixel across channels\\
			$|\nbull|_{l}$ & Max $l^2$ norm of a convolutional patch (across channels)
		\end{tabular}
		\caption{\small Table of notations for quick reference}
		\label{thiss}
	\end{table*}
	
	Some key aspects of our proofs and general results rely on using the correct norms in activation spaces. On each activation space $\mathbb{R}^{w_l\times U_l}$, we will make use of the following three norms:
	\begin{enumerate}
		\item The $l^\infty$ norm: $|x|_{\infty}=\sup_{u\in \{1,\ldots, w_l\}\times \{1,\ldots,U_l\}} |x_u|$
		\item The $|\nbull|_{l}$ norm: $|x|_{l}=\sup_{o\leq O_l}\sqrt{\sum_{i\leq  O_l}x_i^2}$, i.e. the maximum $l^2$ norm of a single convolutional patch.
		\item The $|\nbull|_{\infty,l}$ norm: $|x|_{\infty,l}=\sup_{o\leq w_l}\sqrt{\sum_{k\leq U_l}x_{o,k}^2}$, the maximum $l^2$ norm of a single pixel viewed as a vector over channels.
	\end{enumerate}
	
	\begin{remark}
		In covering number arguments, we will use the same notation to refer to the norms on (activation,input) space induced by the above norms after taking a supremum over inputs.
	\end{remark}
	
	\subsection{Main result with Global Lipschitz bounds}
	Both of the results in the Section "contributions in a Nutshell" follow from the following result.
	
	\begin{theorem}
		\label{posthocasymptlip}
		Assume that pooling does not occur over different channels, with probability $\geq 1-\delta$, every network $F_{\mathcal{A}}$ with weight matrices $\mathcal{A}=\{A^1,A^2,\ldots,A^L\}$ and every margin $\gamma>0$ satisfy:
		\begin{align}
		&\mathbb{P}\left(\argmax_j(F_{\mathcal{A}}(x)_j)\neq y\right)\leq \widehat{R}_\gamma(F_{\mathcal{A}})\nonumber \\&\quad \quad +\widetilde{\mathcal{O}}\left(   \frac{(R_{\mathcal{A}}+L)}{ \sqrt{n}}\log(\bar{W})+\sqrt{\frac{ \log(1/\delta) }{ n }  }\right),
		\end{align}
		where  $W$ is the maximum number of neurons in a single layer (after pooling) and  $R_{\mathcal{A}}:=$ \begin{align*}
		& \bigg[\sum_{l=1}^{L-1}\left(B_{l-1}(X)\|A^l-M^l\|_{2,1}\max_{\tilde{l}>l}\frac{\rho_{l\rightarrow \tilde{l}}}{B_{\tilde{l}}(X)}\right)^{2/3} +\left(\frac{B_{L-1}(X)}{\gamma}\|A^L-M^L\|_{\Fr}\right)^{2/3}\bigg]^{3/2},
		\end{align*}
		\normalsize
		where for any layer $l_1$, 	\small $B_{l_1}(X):=\max_{i}\left|F^{0\rightarrow l_l}(x_i)\right|_{l_1}$\normalsize  denotes the maximum $l^2$ norm of any convolutional patch of the layer $l_1$ activations, over all inputs. $B_L(X)=\gamma$.
		
		Here $\rho_{l_1\rightarrow  l_2}$ is the Lipschitz constant of the map \small $F^{l_1\rightarrow  l_2}$ \normalsize with respect to the norms $|\nbull|_{\infty,l_1} $ and $|\nbull|_{l_2}$
	\end{theorem}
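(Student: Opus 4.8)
\textbf{Proof plan for Theorem~\ref{posthocasymptlip}.}

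The plan is to establish the bound through the classical margin-based Rademacher route, but with the key structural innovations of the paper: (i) work with an augmented loss that simultaneously controls the final margin and the intermediate activation norms, so that the spectral/Lipschitz factors appear as empirical quantities $B_l(X)$ rather than a priori bounds; (ii) chain $L^\infty$-covering numbers layer by layer using Proposition~\ref{suplin} (and Proposition~\ref{suplinn}), which counts each convolutional filter only once; and (iii) exploit the $L^\infty$-continuity of ReLu and max-pooling so that passing through a nonlinearity costs nothing, and so that the factor relating a layer-$l$ perturbation to its downstream effect is $\sqrt{w_l}\,\rho_{l\to\tilde l}$ rather than a full-width factor. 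First I would fix a scale vector and define the augmented ramp loss $l(x,y)=\max\big[\sup_{l\le L-1}\lambda_{B_l}(|F^{0\to l}(x)|_l - 2B_l),\ \lambda_\gamma(\max_{j\ne y}F_{\mathcal A}(x)_j - F_{\mathcal A}(x)_y)\big]$, as sketched in the excerpt. Standard symmetrisation (with the extra $3\sqrt{\log(2/\delta)/2n}$ term, one application of McDiarmid plus a union over a dyadic grid of $\gamma$) reduces the theorem to bounding the empirical Rademacher complexity of this loss class composed with the network.

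Next I would control that Rademacher complexity by Dudley's entropy integral, so the task becomes bounding $\log\mathcal N$ of the augmented-loss function class in the $\|\cdot\|_\infty$ (over the $n$ sample points) metric. Here is the core layer-by-layer argument. Conditioned on the events $|F^{0\to l}(x_i)|_l\le 2B_l$ for all $l$ (outside of which the loss is already flat at $1$, so those inputs contribute nothing to the cover), I would inductively build an $\epsilon_l$-cover, in the $|\cdot|_l$-type norm $\|\cdot\|_*$ of Proposition~\ref{suplin}, of the layer-$l$ activations over all samples and all convolutional patches. The inductive step: given a cover at layer $l-1$ whose elements are themselves realisable activations with patch-norm $\le 2B_{l-1}$, apply Proposition~\ref{suplin} to the tensor $X$ of all (sample, patch) slices of those covering activations, with $b = 2B_{l-1}(X)$, $a = \|(A^l-M^l)^\top\|_{2,1}$, and granularity $\epsilon_l'$ to be chosen; this yields a set $XA$-cover with $\log$-cardinality $O\big(a^2b^2/(\epsilon_l')^2\big)$ up to log factors in $m_l n O_{l-1}$. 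Then apply $G_l$ (ReLu + max-pool): since $G_l$ is $1$-Lipschitz in $|\cdot|_\infty$, and since $|\cdot|_\infty \le |\cdot|_{*}$-in-the-preimage, the cover is preserved, and we land with an $|\cdot|_{\infty,l}$-cover of layer-$l$ pixels. Converting $|\cdot|_{\infty,l}$-perturbations to $|\cdot|_l$-perturbations costs a factor $\sqrt{w_l}$ (max over spatial locations vs. max over a patch), which is where the $\sqrt{w_l}$ in $T_l$ originates. Propagating a layer-$l$ cover error forward to the output (respectively to a downstream scale check $|F^{0\to\tilde l}(x)|_{\tilde l}$) multiplies by $\rho_{l\to L}/\gamma$ (respectively $\rho_{l\to\tilde l}/B_{\tilde l}(X)$); the inequality $\rho_{l\to L}/\gamma\ge 1$ on the relevant event lets us fold the final-layer term into the same $1/\gamma$-normalised sum, and taking $\max_{\tilde l>l}$ over which downstream term is binding gives exactly the $T_l$ in the statement. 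Summing the per-layer entropy contributions and optimising the split of a target granularity $\epsilon$ across layers via the standard weighted-AM-GM (Hölder with exponents $2/3$) argument produces the $(\sum_l T_l^{2/3})^{3/2}$ form of $R_{\mathcal A}$; the final-layer term uses Proposition~\ref{suplinn} with $k=1$ and the $\|A^L - M^L\|_{\Fr}$ norm directly, giving $B_{L-1}(X)\|A^L-M^L\|_{\Fr}/\gamma$. Plugging the resulting $\log\mathcal N$ bound into Dudley's integral (the single $\log(n)$ and the $[\log_2(n^2\mathcal D)]^{1/2}$-type factor come from truncating the integral and from the log inside Proposition~\ref{suplin}) yields the claimed $\widetilde{\mathcal O}(R_{\mathcal A}/\sqrt n)$, with the $+L$ accounting for the $L-1$ scale-check branches of the $\max$ in the loss and the additive McDiarmid/grid terms absorbed into the $\sqrt{\log(1/\delta)/n}$ and the $\widetilde{\mathcal O}$.

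The main obstacle — and the step that needs genuine care rather than bookkeeping — is the \emph{post hoc} aspect: making the bound hold simultaneously for all networks $\mathcal A$ and all margins $\gamma$ without knowing $B_l(X)$, $\|(A^l-M^l)^\top\|_{2,1}$, or $\|A^L-M^L\|_{\Fr}$ in advance, since Proposition~\ref{suplin} requires the data points (here, the covering activations of the \emph{previous} layer) and the norm radius $a$ to be fixed before the cover is chosen. I would handle this by the usual stratification: union-bound over a geometric grid of values for each of these quantities (and for $\gamma$), paying only logarithmic factors, and — crucially — arrange the induction so that the layer-$l$ cover is built from covering elements that are themselves genuine network activations satisfying the assumed patch-norm bound $\le 2B_{l-1}$, which is what makes the inductive application of Proposition~\ref{suplin} legitimate. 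A secondary subtlety is verifying that max-pooling really is $1$-Lipschitz in the relevant $|\cdot|_\infty$-to-$|\cdot|_\infty$ sense and that it does not mix channels (this is exactly the hypothesis ``pooling does not occur over different channels''), so that the patch/pixel norm conversions above are valid; and one must check that deleting all-but-one row per pooling window (the $\|\tilde A^l\|_{\sigma'}$ of Theorem~\ref{posthocasymptsimple}) correctly upper-bounds $\rho_{l\to l+1}$ in the $|\cdot|_{\infty,l}\to|\cdot|_{l+1}$ norms, which is the bridge from this theorem to the spectral-norm corollaries.
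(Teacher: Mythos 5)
Your proposal follows essentially the same route as the paper's own proof: the augmented ramp loss controlling both the margin and the intermediate patch norms, the inductive layer-by-layer $L^\infty$/$|\nbull|_l$ covers built on the activations of previous covering elements (Proposition~\ref{Chainingprop}) fed by the filter-counting covering bound of Proposition~\ref{suplin}, the Lagrangian split of the granularity giving the $\big(\sum_l T_l^{2/3}\big)^{3/2}$ form, Dudley's integral, and the post hoc stratification of Lemma~\ref{unionbound} over $\gamma$, the $B_l(X)$, the layer norms and the Lipschitz constants. The subtleties you flag — that the covers must be anchored at genuine covering activations with the assumed patch-norm bound, and that channel-preserving pooling is what keeps the $|\nbull|_{\infty}$-to-$|\nbull|_{\infty,l}$ step free — are exactly the points the paper's chaining proposition is built to handle, so the plan is sound.
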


	\section{The one layer case}
	
	A key aspect of the proof is that we can use proposition~\ref{MaureySup} to obtain an $L^\infty$-covering of the map represented by a convolutional layer. Indeed, by viewing each (sample, convolutional patch, output channel) trio as an individual data point, we can, for each $\epsilon$, find $\mathcal{N}_\epsilon$ filters $f_1,\ldots,f_{\mathcal{N}_\epsilon}$ with $\|f_i\|_{Fr}\leq a \quad \forall i$ such for any convolutional map represented by the filter $f$ (with $\|f\|_{Fr}\leq a$), there exists a $u_{f}\in \{1,2,\ldots, \mathcal{N}_\epsilon\}$ such that for any input $x_i$, any convolutional patch $S$, and any output channel $j$, the outputs of $f$ and $f_{u_f}$ corresponding to this (input, patch, channel) combination differ by less than $\epsilon$. More precisely, we can now prove Proposition~\ref{suplinn}
	
	\begin{proof}[Proof of proposition~\ref{suplinn}]
		
		This follows immediately from Lemma~\ref{MaureySup} applied to the $nmU$ data points in $\mathbb{R}^{d\times m}$ (considered as a simple vector space with the Hadamard product used as the scalar product) defined by, for all $\delta\in \{1,2,\ldots,m\}\times \{1,2,\ldots,d\}$,
		$(x_{u,i,j})_\delta=(X^u_j)_{\delta_2}$  for $\delta_1=i$ and $ (x_{u,i,j})_\delta=0$ otherwise,  and the function class $$\{F_{A}:\mathbb{R}^{d\times m}\rightarrow\mathbb{R}:x\mapsto x \odot A; \|A\|_2\leq a\},$$
		where $\odot$ denotes the Hadamard product.

	\end{proof}
	
	Before we proceed, we will need the following Proposition:
	
	\begin{proposition}[cf.~\cite{book,bartlet98,pisier}]
		\label{posthocify}
		Let $B_\beta$ denote the ball of radius $\beta$ in $\mathbb{R}^d$ with respect to the $L^1$ norm.  We have
		\begin{equation}
		\log\left(\mathcal{N}(B_{\beta},\epsilon,\|\nbull\|_{2})\right)\leq \left \lceil\frac{\beta^2}{\epsilon^2}\right \rceil\log(2d)
		\end{equation}

	\end{proposition}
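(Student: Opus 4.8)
The plan is to prove this by the standard Maurey empirical approximation argument. First I would reduce to the case $\beta=1$: by positive homogeneity of both the set and the norm, $\mathcal{N}(B_\beta,\epsilon,\|\nbull\|_2)=\mathcal{N}(B_1,\epsilon/\beta,\|\nbull\|_2)$, so it suffices to show $\log\mathcal{N}(B_1,\epsilon,\|\nbull\|_2)\leq\lceil\epsilon^{-2}\rceil\log(2d)$ for the $L^1$ unit ball $B_1$, and then substitute $\epsilon\mapsto\epsilon/\beta$ at the end.

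The key structural observation is that $B_1$ is the convex hull of the $2d$ signed standard basis vectors $\{\pm e_1,\ldots,\pm e_d\}$, each of which has Euclidean norm exactly $1$. Given $x\in B_1$, fix a representation $x=\sum_v\lambda_v v$ as a convex combination over these $2d$ vertices, and let $V$ be a random vector taking value $v$ with probability $\lambda_v$, so that $\mathbb{E}V=x$ and $\|V\|_2=1$ almost surely. Drawing i.i.d.\ copies $V_1,\ldots,V_k$ and setting $\bar V=\tfrac1k\sum_{i=1}^k V_i$, the i.i.d.\ mean-zero summands $V_i-x$ give $\mathbb{E}\|\bar V-x\|_2^2=\tfrac1k(\mathbb{E}\|V\|_2^2-\|x\|_2^2)\leq\tfrac1k$; hence there is a deterministic realisation with $\|\bar V-x\|_2\leq 1/\sqrt k$.

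Now choose $k=\lceil\epsilon^{-2}\rceil$, so that $1/\sqrt k\leq\epsilon$. Then every $x\in B_1$ lies within $\|\nbull\|_2$-distance $\epsilon$ of some $k$-fold average of the $2d$ vertices, and all such averages lie in $B_1$ by convexity. Since each of the $k$ terms in the average is one of $2d$ choices, the collection of all these averages has cardinality at most $(2d)^k$, whence $\log\mathcal{N}(B_1,\epsilon,\|\nbull\|_2)\leq k\log(2d)=\lceil\epsilon^{-2}\rceil\log(2d)$. Undoing the rescaling yields $\log\mathcal{N}(B_\beta,\epsilon,\|\nbull\|_2)\leq\lceil\beta^2/\epsilon^2\rceil\log(2d)$.

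I do not anticipate any real obstacle: this is the textbook Maurey/Carath\'eodory sparsification bound. The only two points needing care are (i) that the $L^1$ ball has $2d$ extreme points, the \emph{signed} basis vectors, which is why $2d$ (not $d$) appears inside the logarithm, and (ii) the rounding $k=\lceil\epsilon^{-2}\rceil$, which is precisely what produces the ceiling in the statement.
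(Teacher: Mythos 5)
Your proof is correct and follows essentially the same route as the paper's: a Maurey-type probabilistic sparsification (i.i.d.\ sampling of vertices, second-moment bound of $1/k$, probabilistic method) followed by counting the at most $(2d)^k$ possible $k$-fold averages with $k=\lceil\beta^2/\epsilon^2\rceil$. The only cosmetic difference is that you run the sampling argument directly over the $2d$ signed vertices $\pm e_i$, whereas the paper states it for nonnegative coefficients over the $e_i$ and reinstates the signs only in the final count $2^d\binom{d+k-1}{d-1}\leq(2d)^k$; your handling is, if anything, slightly cleaner.
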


	\begin{proof}
		Wlog, $\beta=1$.	Let $e_1,\ldots,e_d$ denote the standard basis in $\mathbb{R}^d$, we will show that for any integer $k\in \mathbb{N}$ and any $a=(a_1,a_2,\ldots,a_d)\in\mathbb{R}_{+}^d$ with $\sum_i a_i\leq 1$, there exists $(k_1,k_2,\ldots,k_d)$ with $k_i\in \mathbb{N}$ such that $\kappa:=\sum_{i=1}^d e_i\frac{k_i}{k}$ satisfies $$\|\kappa-a\|_{2}^2\leq \frac{1}{k}.$$
		
		Let $(W_1,\ldots,W_k)$ be $k$ iid random variables with $\mathbb{P}(W_1=e_i)=a_i$. Define $W=\frac{1}{k}\sum_{i=1}^k W_i$. We have $\mathbb{E}(W)=\mathbb{E}(W_1)=\sum_{i=1}^da_ie_i=a$. Thus we have
		
		\begin{align}
		\mathbb{E}(\|W-a\|_2^2)&=\frac{1}{k^2}\mathbb{E}\left(  \sum_{i=1}^k \|a-W_i\|^2+\sum_{i\neq j}\langle a-W_i,a-W_j\rangle\right)\nonumber\\
		&=\frac{1}{k^2}\mathbb{E}\left(  \sum_{i=1}^k \|a-W_i\|^2\right)=\frac{1}{k}\mathbb{E}\left(   \|a-W_1\|^2\right)=\frac{1}{k}\left(\mathbb{E}(\|W_1\|^2)-\|a\|^2\right)\nonumber\\
		&\leq \frac{1}{k}\mathbb{E}(\|W_1\|^2)=\frac{1}{k}.
		\end{align}
		
		By the probabilistic method, it follows that there is a choice $(k_1,k_2,\ldots,k_d)$ such that $\|\kappa-a\|_{2}^2\leq \frac{1}{k},$
		as expected.
		
		It follows that one can find a cover of the ball $B_{\beta}$ with size $\mathcal{N}$, where $\mathcal{N}$ is the number of choices of $(k_1,k_2,\ldots,k_d)$ with $k_i\in \mathbb{Z}$ and $\sum_{i=1}^d|k_i|=k$ and $k=\lceil \frac{\beta^2}{\epsilon^2}\rceil$. There are $
		2^d\left( \begin{array}{ccc} d+k-1\\d-1\end{array}\right)\leq (2d)^k
		$ such choices. The result follows.
		
	\end{proof}

	With this in our toolkit, we can prove the extention of the one layer case to the $L^{2,1}$ norm with an extra covering umber argument:

	\begin{proof}[Proof of Proposition~\ref{suplin}]

		First, note that the case $m=1$ follows from Proposition~\ref{suplinn} (also with $m=1$).
		
		Thus any set of $m$ positive real numbers $a=(a_1,a_2,\ldots,a_m)$ and any $(\epsilon_1,\epsilon_2,\ldots,\epsilon_m)$ with $\epsilon_i\leq \epsilon$ for all $i$, we can find covers $\mathcal{C}_i\subset \{A\in \mathbb{R}^d: \|A\|_{2}\leq a_i   \}$ such that for all $A_1,A_2,\ldots,A_m\in \mathbb{R}^{d}$ such that $\|A_i\|_{2}\leq a_i \quad \forall i\leq m$, there exist $\bar{A}_1,\bar{A}_2,\ldots,\bar{A}_m\in \mathbb{R}^{d}$ such that $\|\bar{A}_i\|_{2}\leq a_i \quad \forall i\leq m$ and $\|XA_i-X\bar{A}_i\|_{\infty}\leq   \epsilon_1$, and for all $i$, $\log(\#\mathcal{C}_i)\leq \frac{36a_i^2b^2}{\epsilon_i^2}\log_2\left[\left(\frac{8a_ib}{\epsilon}+6\right)nU+1\right]$ (since $\epsilon_i\leq \epsilon$).
		
		Writing $a=\sum_{i=1}^ma_i$, the cardinality of $\mathcal{C}_{(a_1,a_2,\ldots,a_m)}=\mathcal{C}_{1}\times \mathcal{C}_2,\ldots\times \mathcal{C}_m$ is bounded above by \begin{align}
		\label{coarsesum}
		36b^2\log_2\left[\left(\frac{8ab}{\epsilon}+6\right)nU+1\right]\sum_{i=1}^m \frac{a_i^2}{\epsilon_i^2}.
		\end{align}
		
		and the product cover is an $\epsilon$ cover of $\{XA : A\in \mathbb{R}^{d\times m}, \|A_{\nbull,i}\|_{2}\leq a_i\forall i\}$ with respect to the $\|\nbull\|_{*}$ norm, where $\epsilon^2=\sum_{i=1}^m\epsilon_i^2$.

		Applying the above to $3\epsilon/4$ and calculating the Lagrangian to optimize over the $\epsilon_i$'s, we obtain the condition $\left(\frac{-2a_i^2}{\epsilon^3}\right)\propto 2 \epsilon_i$, which yields $\epsilon_i=(3/4)\epsilon\frac{\sqrt{a_i}}{\sum a_i}=\epsilon \sqrt{a_i}/a$, which pluggind back into formula~\eqref{coarsesum}, yields,
		\begin{align}
		\label{one}
		\log(\#(\mathcal{C}_{1}\times \mathcal{C}_2,\ldots,\mathcal{C}_m))&\leq
		36\frac{a^2b^2}{(3/4)^2\epsilon^2}\log_2\left[\left(\frac{8ab}{\epsilon}+7\right)nU\right]\\
		&\leq 64\frac{a^2b^2}{\epsilon^2}\log_2\left[\left(\frac{8ab}{\epsilon}+7\right)nU\right]
		\end{align}
		Of course, we do not know in advance the choice of $(a_1,\ldots,a_m)$ such that $\sum_{i=1}^ma_i=a$, so we must take extra steps to make the cover post hoc with respect to this choice. To do this we can choose the set $\mathcal{D}$ to be an $\epsilon/4b$ cover of $(a_1,\ldots,a_m): \sum_{i=1}^m a_i=a$ with respect to the $L^1$ norm. By Proposition~\ref{posthocify}, we can ensure $\#(\mathcal{D})\leq \left\lceil 16a^2b^2/\epsilon^2\right \rceil \log(2m)\leq 64a^2b^2/\epsilon^2\log(m)$ (since we can assume $m\geq 2$ and wlog $16a^2b^2/\epsilon^2\geq 1$).
		
		Clearly, the cardinality of the union $\cup_{a\in \mathcal{D}}\mathcal{C}_1\times \ldots\times \mathcal{C}_{m}$ is bounded by  $$\frac{64a^2b^2}{\epsilon^2}\log_2\left[\left(\frac{8ab}{\epsilon}+7\right)mnU\right],$$ so we only need to show that it constitutes an $\epsilon $ cover of  $\{XA : A\in \mathbb{R}^{d\times m}, \|A\|_{2,1}\leq a\}$. To see this, let $A\in \mathbb{R}^{d\times m}$ be given with $\|A\|_{2,1}\leq a$. Pick $(a_1,a_2,\ldots,a_m)$ to be the closest element of $\mathcal{D}$ to $(\|A_{\nbull,1}\|_{2},\|A_{\nbull,2}\|_{2},\ldots,\|A_{\nbull,m}\|_{2})$ in terms of the $L^\infty$ norm. Then pick the element $\tilde {A}$ of $ \mathcal{C}_{(a_1,a_2,\ldots,a_m)}$ closest to $\bar{A}\in \mathbb{R}^{d\times m}   $ defined by, $\forall i$,  $\bar{A}_{\nbull,i}=A_{\nbull,i}$ if $\|A_{\nbull,i}\|_{2}\leq a_i$ and $\bar{A}_{\nbull,i}=\frac{a_i}{\|A_{\nbull,i}\|_2}A_{\nbull,i}$ otherwise.
		Clearly we have $\|XA-X\tilde{A}\|_{*}\leq \|XA-X\bar{A}\|_{*}+\|X\bar{A}-X\tilde{A}\|_{*}\leq \epsilon/5+4\epsilon/5=\epsilon$, which completes the proof.
		
	\end{proof}
	
	As a corollary of the above, we are now in a position to prove the one layer case.
	
	\begin{corollary}
		\label{onestep}
		Let $n,O,m,w,U$ be natural numbers,  let $G: \mathbb{R}^{ O}\times \mathbb{R}^{m}\rightarrow\mathbb{R}^{ w}\times \mathbb{R}^{U}$ be $\rho$-Lipschitz with respect to the $|\nbull|_{*}$ norms defined by $|x|_{*}=\sup_{i\in \{1,2\ldots,O\}}\sqrt{\sum_{j=1}^m x_{ij}^2}$ and $|x|_{*}=\sup_{i\in \{1,2\ldots,w\}}\sqrt{\sum_{j=1}^U x_{ij}^2}$ for $x\in  \mathbb{R}^{ O}\times \mathbb{R}^{m}$ and $x\in  \mathbb{R}^{ w}\times \mathbb{R}^{U}$ respectively (for instance, a combination of purely spacial pooling and elementwise relu satisfies this condition with $\rho=1$). We also write $|\nbull|_{*}$ for the norm $\sup_{i\leq n} |X_{i,\nbull}|_{*}$ for $X\in \mathbb{R}^n \times  \mathbb{R}^{ w}\times \mathbb{R}^{U}$ or  $X\in \mathbb{R}^n \times \mathbb{R}^{ O}\times \mathbb{R}^{m}$.
		For any $X\in \mathbb{R}^{n\times O\times d}$ such that $\|X^{i,o,\nbull}\|_{2}^2\leq b^2$ ($\forall i,o$), we have for any fixed reference matrix $M$:
		\begin{align}
		\log \mathcal{N}  \left(     \left\{ G( XA) : A\in \mathbb{R}^{d\times m}, \|A-M\|_{2,1}\leq a\right\},\epsilon,  |\nbull|_{*}   \right) \leq \frac{ 64 a^2b^2}{\epsilon^2\rho^2}  \log_2\left(\frac{8abnmO}{\epsilon\rho}+7Omn\right)
		\end{align}

	\end{corollary}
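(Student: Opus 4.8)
The plan is to derive this corollary essentially for free from Proposition~\ref{suplin} together with the $\rho$-Lipschitzness of $G$: the combinatorial heart (the Maurey-type estimate of Proposition~\ref{MaureySup}, its extension to the $L^{2,1}$-ball in Proposition~\ref{suplin}, and the $L^1$-cover post hoc step over the per-filter norm profile $(a_1,\dots,a_m)$) has already been carried out, so what remains is bookkeeping with norms. Since Proposition~\ref{suplin} already allows an arbitrary reference matrix $M$, there is no need for a preliminary translation to $M=0$.

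First I would unwind the statement. The map $A\mapsto XA$ sends the ball $\{A\in\mathbb{R}^{d\times m}:\|A-M\|_{2,1}\le a\}$ into $\mathbb{R}^{n\times O\times m}$, and I would identify the norm $\|\cdot\|_*$ on $\mathbb{R}^{n\times O\times m}$ featuring in Proposition~\ref{suplin} --- namely $\|Y\|_*=\max_{i\le n}\max_{o\le O}\big(\sum_{k\le m}Y_{i,o,k}^2\big)^{1/2}$, with the role of the patch index $U$ there played by $O$ here --- with the ``input-side'' norm $|\cdot|_*$ of the present corollary, i.e. the supremum over the $n$ samples of the per-sample norm $|x|_*=\sup_{o\le O}\big(\sum_{j\le m}x_{o,j}^2\big)^{1/2}$. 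These two norms are literally the same, and the hypothesis $\|X^{i,o,\nbull}\|_2\le b$ is exactly the input hypothesis of Proposition~\ref{suplin}. Hence for every $\delta>0$ that proposition furnishes a set $\mathcal{C}\subseteq\{XA:\|A-M\|_{2,1}\le a\}$ which $\delta$-covers $\{XA:\|A-M\|_{2,1}\le a\}$ in $|\cdot|_*$ and obeys $\log\#\mathcal{C}\le \frac{64a^2b^2}{\delta^2}\log_2\big[(\frac{8ab}{\delta}+7)mnO\big]$.

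Second I would push this cover through $G$. Extend $G$ to the samplewise map $\mathbf{G}$ with $\mathbf{G}(Y)_i:=G(Y_i)$; applying the Lipschitz inequality $|G(u)-G(v)|_*\le\rho\,|u-v|_*$ in each sample slice and taking a supremum over $i\le n$ shows $\mathbf{G}$ is $\rho$-Lipschitz for the sup-over-samples $|\cdot|_*$ norms on domain and codomain. Therefore $\mathbf{G}(\mathcal{C}):=\{\mathbf{G}(c):c\in\mathcal{C}\}$ is a $\rho\delta$-cover of $\{G(XA):\|A-M\|_{2,1}\le a\}$ in $|\cdot|_*$ of cardinality at most $\#\mathcal{C}$. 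Taking $\delta=\epsilon/\rho$ so that $\rho\delta=\epsilon$ and substituting into the displayed bound for $\log\#\mathcal{C}$ gives the claimed estimate. The only place asking for a little care --- and thus the (minor) main obstacle --- is the norm identification in the first step and the verification that applying $G$ samplewise neither inflates the Lipschitz constant nor the covering cardinality; everything else is a direct appeal to Proposition~\ref{suplin}.
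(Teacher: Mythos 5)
Your proposal is correct and is essentially the paper's intended argument: the paper states Corollary~\ref{onestep} as an immediate consequence of Proposition~\ref{suplin} (whose cover consists of points of the form $X\tilde A$, so it is internal and can be pushed through $G$), combined with the $\rho$-Lipschitz pushforward at granularity $\epsilon/\rho$, exactly as you do. One bookkeeping caveat: substituting $\delta=\epsilon/\rho$ into Proposition~\ref{suplin} yields $\frac{64a^2b^2\rho^2}{\epsilon^2}\log_2\bigl[\bigl(\frac{8ab\rho}{\epsilon}+7\bigr)mnO\bigr]$, i.e.\ $\rho$ in the \emph{numerator}, whereas the displayed corollary has $\rho$ in the denominator; since the downstream use in equation~\eqref{cov} places $\rho_{l+}$ multiplicatively (as $a_lb_{l-1}\rho_{l+}/\epsilon$ and $\epsilon/\rho_{l+}$ inside the log), your version is the one consistent with the rest of the paper and the corollary's displayed placement of $\rho$ appears to be a typo rather than an error in your reasoning.
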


	\section{Generalisation bound for fixed norm constraints}
	
	Once the one layer case is taken care of, we will now need to chain the covering number bounds of each layer, taking care to control the excess $|\nbull|$ norms at each intermediary layer. To this effect, we have the following Proposition.
	
	\begin{proposition}
		\label{Chainingprop}
		Let $L$ be a natural number and $a_1,\ldots,a_L>0$ be real numbers. Let $\mathcal{V}_0,\mathcal{V}_1,\ldots,\mathcal{V}_{L}$ be $L+1$ vector spaces each endowed with two norms $|\nbull|_{\infty,l}$  and  $|\nbull|_l$ for $0\leq l\leq L$. 
		Let $B_1,B_2,\ldots,B_L$ be $L$ vector spaces with norms $\|\nbull\|_{1},\|\nbull\|_{2},\ldots,\|\nbull\|_{L}$ and $\mathcal{B}_1,\mathcal{B}_2,\ldots,\mathcal{B}_L$ be the balls of radii $a_1,a_2,\ldots,a_L$ in the spaces $B_1,B_2,\ldots,B_L$ with the norms $\|\nbull\|_{1},\|\nbull\|_{2},\ldots,\|\nbull\|_{L}$ respectively\footnote{The proof works with $\mathcal{B}_1,\mathcal{B}_2,\ldots,\mathcal{B}_L$ being arbitrary sets, but we formulate the problem as above to aid the intuitive comparison with the areas of application of the Proposition.}.

		Suppose also that for each $l\in \{1,2,\ldots,L\}$ we are given an operator $F^l: \mathcal{V}_{l-1}\times B_{l}\rightarrow \mathcal{V}_{l}: (x,A)\rightarrow F^l_{A}(x)$, continuous with respect to the norms $\|\nbull\|_{l}$ and $|\nbull|_{l}$.
		For each $l_1,l_2$ with $l_2>l_1$ and each  $\mathcal{A}^{l_1,l_2}=(A^{l_1+1},\ldots,A^{l_2})\in \mathcal{B}^{l_1,l_2}:=\mathcal{B}_{l_1+1}\times  \mathcal{B}_{l_1+2} \times \ldots  \mathcal{B}_{l_2}$, let us define $$F^{l_1\rightarrow l_2}_{\mathcal{A}^{l_1,l_2}}:\mathcal{V}_{l_1}\rightarrow \mathcal{V}_{l_2}: x\rightarrow F^{l_1\rightarrow l_2}_{\mathcal{A}^{l_1,l_2}}(x)= F^{l_2}_{A^{l_2}}\circ \ldots\circ  F^{l_1}_{A^{l_1}}(x),$$
		for all $l$, $\mathcal{F}^l_{\mathcal{A}^L}=F^{0\rightarrow l}_{\mathcal{A}^L}$ and $\mathcal{F}_{\mathcal{A}}=\mathcal{F}^L_{\mathcal{A}}.$

		For each  $\mathcal{A}^L=(A^1,A^2,\ldots,A^L)\in \mathcal{B}^L:=\mathcal{B}_1\times  \mathcal{B}_2 \times \ldots,  \mathcal{B}_L$, and for each $l\in \{1,2,\ldots,L\}$, the (worst case) Lipschitz constant of $F^{l_1\rightarrow l_2}$ with respect to the norms $\|\nbull\|_{\infty,l_1}$ and $|\nbull|_{l_2}$ is  denoted by $\rho^{\mathcal{A}}_{l_1\rightarrow l_2}$.

		We suppose the following conditions are satisfied:
		
		For all $l\in \{1,2,\ldots,L\}$, all $b>0$,  all $Z1,Z_2,\ldots,Z_n\in\mathcal{V}_{l-1}$ such that $|Z_i|_{l-1}\leq b \quad \forall i$ and all $\epsilon>0$, there exists a subset $\mathcal{C}_l(b,Z,\epsilon)\subset \mathcal{B}_l$ such that
		\begin{align}
		\log(\#\left(\mathcal{C}_l(b,Z,\epsilon)\right))\leq \frac{C_{l,\epsilon,n}a_l^2b^2}{\epsilon^2},
		\end{align}
		where $C_{l,\epsilon,n}$ is some function of $l,\epsilon,n,$
		and, for all $A\in \mathcal{B}_l$, there exists an $\bar{A}\in \mathcal{C}_l(b,\epsilon)$ such that
		\begin{align}
		\left|F^l_{A}(Z)-F^l_{\bar{A}}(Z)\right|_{\infty,l}\leq  \epsilon \quad \quad \forall i.
		\end{align}

		For any $0<\epsilon<1$, any $b=(b_0,b_{1},b_2,\ldots,b_L)$ with $b_l\geq 1 \quad \forall l$ and $b_L=1$, any set of positive number $\rho_{l+}$ (for $l\leq L$)  and for any $x_1,\ldots,x_n\in \mathcal{V}_0$ such that $|x_i|_{0}\leq b_0 \quad \forall i$, there exists a subset $\mathcal{C}_{\epsilon,b,n}$ of $\mathcal{B}^L$ such that for all $\mathcal{A}=(A^1,A^2,\ldots,A^L)\in \mathcal{B}:=\mathcal{B}^L$ such that $\rho^{\mathcal{A}}_{l_1\rightarrow l_2}\leq \rho_{l_1}b_{l_2}\quad \forall l_2\geq l_1$\footnote{Note that $\rho_{l\rightarrow l}$ is not necessarily $1$, as the norms $\|\nbull\|_{\infty,l}$ and $\|\nbull\|_{l}$ are different.}, there exists a $\bar{\mathcal{A}}\in \mathcal{C}_{\epsilon,b,X}$ such that, for any $i$ such that 	$\left| \mathcal{F}^l_{\mathcal{A}}(x_i)\right|_{l}\leq b_l \quad \forall l$, we have
		
		\begin{align}
		\label{toolll}
		\left|  F^{0\rightarrow l}_{\mathcal{A}^l}(x_i)-   F^{0\rightarrow l}_{\bar{\mathcal{A}}^l}(x_i) \right|_{l}&\leq \epsilon b_l\quad \quad (\forall l<L, \forall i) \quad \text{and}\nonumber \\
		\left| \mathcal{F}^l_{\bar{\mathcal{A}}}(X)\right|_{l}&\leq 2b_l.
		\end{align}

		Furthermore, we have
		\begin{align}
		\log \#(\mathcal{C}_{\epsilon,b,X})&\leq 4 \left[  \sum_{l=1}^L \left(\frac{C_{l,\epsilon}^{\frac{1}{2}}a_lb_{l-1}\rho_l}{\epsilon}\right)^{\frac{2}{3}} \right]^3
		\leq 4\frac{L^2}{\epsilon^2}  \sum_{l=1}^L \left(\frac{C_{l,\epsilon}^{\frac{1}{2}}a_lb_{l-1}\rho_l}{\epsilon}\right)^{2}.
		\end{align}
		
	\end{proposition}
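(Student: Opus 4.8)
The plan is to build the post-hoc cover $\mathcal{C}_{\epsilon,b,X}$ layer by layer, using the single-layer covering hypothesis at each stage, and to control the propagation of the cover error through the subsequent layers by the worst-case Lipschitz constants $\rho^{\mathcal{A}}_{l_1\rightarrow l_2}$. The essential trick — which explains the $2/3$ exponents in the final bound — is to allow a \emph{different} granularity $\epsilon_l$ at each layer $l$, chosen to balance the contribution of the $l$-th layer's cover error to the final output against the size of the $l$-th layer's cover, and then optimise over the $\epsilon_l$'s subject to $\sum_l$(propagated error)$\,\le\epsilon$.

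\textbf{Step 1 (telescoping the error).} First I would fix $\mathcal{A}=(A^1,\dots,A^L)$ satisfying the Lipschitz hypothesis $\rho^{\mathcal{A}}_{l_1\rightarrow l_2}\le \rho_{l_1}b_{l_2}$, and fix an input $x_i$ with $|\mathcal{F}^l_{\mathcal{A}}(x_i)|_l\le b_l$ for all $l$. Building $\bar{\mathcal{A}}$ one layer at a time, I would write the total deviation at layer $L$ as a telescoping sum $\mathcal{F}_{\mathcal{A}}(x_i)-\mathcal{F}_{\bar{\mathcal{A}}}(x_i)=\sum_{l=1}^L \big(F^{l\rightarrow L}_{\bar{\mathcal{A}}}\circ F^{0\rightarrow l}_{\mathcal{A}}(x_i) - F^{l\rightarrow L}_{\bar{\mathcal{A}}}\circ F^{0\rightarrow l}_{\bar{\mathcal{A}}}(x_i)\big)$, bounding the $l$-th term in $|\nbull|_L$ by $\rho^{\bar{\mathcal{A}}}_{l\rightarrow L}$ times the error introduced \emph{at} layer $l$ measured in $|\nbull|_{\infty,l}$. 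The single-layer hypothesis supplies, given the data $Z_1,\dots,Z_n=F^{0\rightarrow l-1}_{\mathcal{A}}(x_j)$ with $|Z_j|_{l-1}\le b_{l-1}$, a cover $\mathcal{C}_l(b_{l-1},Z,\epsilon_l)$ of log-size $\le C_{l,\epsilon_l,n}a_l^2 b_{l-1}^2/\epsilon_l^2$ making the layer-$l$ error $\le\epsilon_l$. Here one must be careful: the cover at layer $l$ is chosen relative to the \emph{already covered} previous activations, so the recursion is over $(\bar{A}^1,\dots,\bar{A}^{l-1})$ — this is why the final cardinality is a product, and why the bound $|\mathcal{F}^l_{\bar{\mathcal{A}}}(X)|_l\le 2b_l$ (the inductive invariant $\epsilon b_l \le b_l$, i.e. $\epsilon<1$) is needed to keep the hypotheses of the next layer's single-layer bound valid.

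\textbf{Step 2 (choosing the granularities and counting).} Each term of the telescoping sum contributes $\rho_l b_L \cdot \epsilon_l \le \rho_l \epsilon_l$ (using $b_L=1$ and $\rho^{\bar{\mathcal{A}}}_{l\rightarrow L}\le \rho_l b_L$) to the output deviation, so I impose $\sum_{l=1}^L \rho_l \epsilon_l \le \epsilon$; simultaneously I need the \emph{intermediate} deviations $\le \epsilon b_l$, which the same choice gives since $\rho^{\bar{\mathcal{A}}}_{l'\rightarrow l}\le \rho_{l'} b_l$. The total log-cardinality is $\sum_{l=1}^L C_{l,\epsilon_l,n} a_l^2 b_{l-1}^2 / \epsilon_l^2$; absorbing $C_{l,\epsilon_l,n}\approx C_{l,\epsilon}$ and writing $K_l := C_{l,\epsilon}^{1/2} a_l b_{l-1}\rho_l$, this is $\sum_l K_l^2 / (\rho_l^2 \epsilon_l^2)$. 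Minimising $\sum_l K_l^2/(\rho_l\epsilon_l)^2$ subject to $\sum_l \rho_l\epsilon_l=\epsilon$ via Lagrange multipliers gives $\rho_l\epsilon_l \propto K_l^{2/3}$, hence $\rho_l\epsilon_l = \epsilon K_l^{2/3}/\sum_{l'}K_{l'}^{2/3}$, and plugging back yields $\log\#(\mathcal{C}_{\epsilon,b,X}) \le \frac{1}{\epsilon^2}\big(\sum_l K_l^{2/3}\big)^3$ up to the stated constant $4$; the second, cruder inequality $\big(\sum_l K_l^{2/3}\big)^3 \le L^2 \sum_l K_l^2$ is just power-mean / Hölder.

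\textbf{The main obstacle} I anticipate is bookkeeping the \emph{post-hoc-ness over $b$ and over the $\rho_l$'s} correctly, not the optimisation. The cover must work uniformly over all $\mathcal{A}$ with $\rho^{\mathcal{A}}_{l_1\rightarrow l_2}\le\rho_{l_1}b_{l_2}$ even though the data fed into layer $l$'s single-layer cover — namely $F^{0\rightarrow l-1}_{\mathcal{A}}(x_i)$ — genuinely depends on $\mathcal{A}$. The resolution is that we only ever invoke the single-layer hypothesis on the \emph{covered} activations $F^{0\rightarrow l-1}_{\bar{\mathcal{A}}^{l-1}}(x_i)$, which range over a finite set (of size $\#(\mathcal{C}_1\times\cdots\times\mathcal{C}_{l-1})$), and for each such choice we include \emph{one} layer-$l$ cover in the product; the invariant $|\mathcal{F}^{l-1}_{\bar{\mathcal{A}}}(X)|_{l-1}\le 2b_{l-1}$ (so we actually apply the hypothesis with $b=2b_{l-1}$, costing only constants inside $C_{l,\epsilon}$ and a factor absorbed into the definitions) guarantees the norm precondition. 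One also has to verify that when $|\mathcal{F}^l_{\mathcal{A}}(x_i)|_l\le b_l$ fails for some $l$ — i.e. the input is "bad" — the statement is vacuous for that $i$, so no cover guarantee is required there; this is exactly the clause "for any $i$ such that $|\mathcal{F}^l_{\mathcal{A}}(x_i)|_l\le b_l\ \forall l$" and it is what later lets the loss-augmentation step handle atypical activation norms separately.
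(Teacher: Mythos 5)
Your proposal follows essentially the same route as the paper's proof: per-layer covers built inductively over the covered prefixes (so the final cover is a product/union), a hybrid telescoping decomposition of the deviation, propagation of each layer's cover error through the remaining layers via the assumed Lipschitz bounds, the factor-of-two slack in the activation norms needed to keep the single-layer hypothesis applicable (whence the constant $4$), and the Lagrange optimisation of per-layer granularities producing the $2/3$ exponents and the final cubed sum. One point needs fixing, though: your telescoping in Step 1 runs the hybrid in the wrong direction (and, as literally written, does not telescope). In your version the $l$-th term swaps $A^l$ for $\bar A^l$ at the \emph{original} activations $F^{0\rightarrow l-1}_{\mathcal{A}}(x_i)$ and propagates the resulting error through the \emph{covered} tail $F^{l\rightarrow L}_{\bar{\mathcal{A}}}$; this would require (i) single-layer covers valid at activations depending on the uncountably many admissible $\mathcal{A}$, and (ii) a bound on $\rho^{\bar{\mathcal{A}}}_{l\rightarrow \tilde l}$, which the hypothesis does not supply --- only $\rho^{\mathcal{A}}$ is constrained, and elements of the cover need not satisfy that constraint. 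The paper's hybrid goes the other way: the $l$-th increment is $F^{0\rightarrow \tilde l}_{(\bar A^1,\ldots,\bar A^{l-1},A^l,\ldots,A^{\tilde l})}-F^{0\rightarrow \tilde l}_{(\bar A^1,\ldots,\bar A^{l},A^{l+1},\ldots,A^{\tilde l})}$, so the swap occurs at the covered activations (a finite set, exactly as your ``main obstacle'' paragraph demands) and the error propagates through the original suffix, whose Lipschitz constant $\rho^{\mathcal{A}}_{l\rightarrow \tilde l}\leq \rho_l b_{\tilde l}$ is precisely what is assumed. Since your own resolution of the ``main obstacle'' already forces this orientation, the fix is only a matter of writing the decomposition consistently; with that change your argument coincides with the paper's.
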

	
	\begin{proof}

		For $l=1,\ldots,L$, let $\epsilon_l=\frac{\epsilon \alpha_l}{ \rho_{l}}$, where the $\alpha_l>0$ will be determined later satisfying $\sum_{l=1}^L\alpha_l=1$.

		For any $X=(x_1,\ldots,x_n)$, we define the covers $\mathcal{D}_l\subset \mathcal{B}_l$ for $l\leq L$  by induction by
		$\mathcal{D}_l=\cup_{\mathcal{A}\in \mathcal{D}_1\times \ldots\times \mathcal{D}_{l-1}}\mathcal{C}(2b_l,\{F^{0\rightarrow l-1}_{\mathcal{A}}(x_i):i\leq n\},\epsilon_l)$. Let us write also $\mathcal{D}:=\mathcal{D}_{1}\times \mathcal{D}{2}\ldots \mathcal{D}_L$, and let us write $d_l$ for the cardinality of $\mathcal{D}_l$.
		We prove the equations~\eqref{toolll} by induction. The case $l=1$ follows directly from the definition of $\mathcal{D}_1$ and the assumption on the $x_i$'s. For the induction case, let us assume the inequalities hold for each $u\leq l-1$. We have for any $j\leq n$,

		\begin{align}
		\left|  F^{0\rightarrow l}_{\mathcal{A}}(x_j)-   F^{0\rightarrow l}_{\bar{\mathcal{A}}}(x_j) \right|_{l}&\leq \sum_{i=1}^L	\left|  F^{0\rightarrow l}_{\left(\bar{A}_1,\bar{A}_2,\ldots,\bar{A}_{i-1},A_i,\ldots,A_l\right)}(x_j)-   F^{0\rightarrow l}_{\left(\bar{A}_1,\bar{A}_2,\ldots,\bar{A}_{i},A_{i+1},\ldots,A_i\right)}(x_j) \right|_{l}\nonumber \\
		&\leq  \sum_{i=1}^l\rho^{\mathcal{A}}_{i\rightarrow l}	\left|  F^{0\rightarrow i }_{(\bar{A}_1,\bar{A}_2,\ldots,\bar{A}_{i-1},A_{i})}(x_j)-   F^{0\rightarrow i}_{\left(\bar{A}_1,\bar{A}_2,\ldots,\bar{A}_{i}\right)}(x_j) \right|_{i}\nonumber \\
		&\leq \sum_{i=1}^l \rho^{\mathcal{A}}_{i\rightarrow l}\frac{\epsilon \alpha_i}{\rho_{i}}\leq\sum_{i=1}^l \rho^{\mathcal{A}}_{i\rightarrow l}\frac{\epsilon \alpha_i}{\rho^{\mathcal{A}}_{i\rightarrow l}/b_l} \leq \epsilon b_l,
		\end{align}
		where at the first line, we have used the triangle inequality, at the second line, the definition of $\rho_{i\rightarrow l}$, and at the last line, the fact that $\rho^{\mathcal{A}}_{l_1\rightarrow l_2}\leq \rho_{l_1}\quad \forall l_2\geq l_1$. Now, by the triangle inequality, we obtain:
		
		\begin{align}
		\left	|F^{0\rightarrow l}_{\bar{\mathcal{A}}}(x_j)\right|_{l}\leq \left	|F^{0\rightarrow l}_{\mathcal{A}}(x_j)\right|_{l}+\epsilon b_l\leq 2b_l,
		\end{align}
		which concludes the proof by induction.

		To finish the proof of the proposition, we just need to calculate the bound on the caridinality of $\mathcal{D}$:
		
		\begin{align}
		\label{countingchain}
		\log(\#(\mathcal{D}))&=\sum_{l=1}^L\log(\#(\mathcal{D}_l))=\sum_{l=1}^L\log(d_l)\nonumber\\&\leq \sum_{l=1}^L 4\frac{C_{l,\epsilon}a_l^2b_{l-1}^2}{\epsilon_l^2}=\sum_{l=1}^L 4\frac{C_{l,\epsilon}a_l^2b_{l-1}^2}{(\frac{\alpha_l\epsilon}{\rho_{l}})^2}\nonumber\\
		&=\frac{4}{\epsilon^2}\sum_{l=1}^L\frac{C_{l,\epsilon}a_l^2b_{l-1}^2\rho_{l}^2}{\alpha_l^2}.
		\end{align}

		Optimizing over the $\alpha_l$'s subject to $\sum_{l=1}^L\alpha_l=1$, we find the Lagrangian condition $$\left(-\frac{2C_{l,\epsilon} a_l^2\rho_{l}^2}{\alpha_l^3}\right)_{l=1}^L\propto (1)_{l=1}^L,$$
		yielding $$\alpha_l= \frac{(\sqrt{C_{l,\epsilon}}a_l\rho_{l})^{\frac{2}{3}}}{\sum_{i=1}^L(\sqrt{C_i}a_i\rho_{i})^{\frac{2}{3}}}. $$
		Substituting back into equation~\eqref{countingchain}, we obtain

		\begin{align*}
		\log \# \left( \{F_{\mathcal{A}}(X):\mathcal{A}\in \mathcal{D} \}   \right) & \leq4\left[\sum_{i=1}^L\left(\frac{\sqrt{C_i}a_ib_{i-1}\rho_{i}}{\epsilon}\right)^{\frac{2}{3}}\right]^{2}\sum_{l=1}^L\left(\frac{\sqrt{C_{l,\epsilon}} a_lb_{l-1}\rho_{l}}{\epsilon}\right)^{2-4/3} \\
		&=\frac{4}{\epsilon^2}\left[\sum_{i=1}^L\left(\sqrt{C_i}a_ib_{i-1}\rho_{i}\right)^{\frac{2}{3}}\right]^{3},
		\end{align*}
		as expected. The last inequality follows from Jensen's inequality.

	\end{proof}

	The next step is to use the above, together with the classic Rademacher theorem~\ref{rademachh} and Dudley's Entropy integral, to obtain a result about large margin multi-class classifiers.

	\begin{theorem}
		\label{firstmilestone}
		Suppose we have a $K$ class classification problem and are given $n$ i.i.d. observations $(x_1,y_1),(x_2,y_2),\ldots,(x_n,y_n)\in \mathbb{R}^{U_0\times w_0}\otimes \{1,2,\ldots,K\}$ drawn from our ground truth distribution $(X,Y)$, as well as a fixed architecture as described in Section~\ref{precise}, where we assume the last layer is fully connected and has width $K$ and corresponds to scores for each class. Suppose also that with probability one $|x|_{0}\leq b_0$. Suppose we are given numbers $a_1,a_2,\ldots,a_L$, $\beta=(b_0,b_1,\ldots,b_L=\gamma)$ and $\rho_{l_1\rightarrow l_2}$ (for $l_1,l_2\leq L$). For any $\delta>0$,  with probability $>1-\delta$ over the draw of the training set, for any network $\mathcal{A}=(A^1,A^2,\ldots,A^L)$ satisfying $\forall l: \|(A^l)^{\top}\|_{2,1}\leq a_l\land \rho^{\mathcal{A}}_{l_1\rightarrow l_2}\leq \rho_{l_1\rightarrow l_2}$, we have
		\begin{align}
		&\mathbb{P}\left( \argmax_{j\in\{1,2,\ldots,K\}} (F_L(x))_{j}\neq y  \right)\nonumber \\&\leq \frac{n-\#(I_{\beta,\gamma})}{n}+ \frac{8}{n}+\frac{1536}{\sqrt{n}}R\left[\log_2(32\Gamma n^2+7\bar{W}n)\right]^{\frac{1}{2}}\log(n)+3\sqrt{\frac{\log(\frac{2}{\delta })}{2n}},
		\end{align}
		where $$I=\left\{ i\leq n: (f(x_i) )_{y_i}-\max_{j\neq y_i}( (f(x_i) )_{j})>\gamma \land \forall l\leq L : \left|F^{0\rightarrow l} (x_i)\right|_{l}\leq b_l              \right\},$$

		\begin{align}
		R^{2/3}&:= \sum_{l=1}^L \left(  a_lb_{l-1} \rho_{l+}         \right)^{2/3}, \quad \text{and} \nonumber\\
		\Gamma&:=\max_{l=1}^L\left(b_{l-1}a_lO_{l-1}m_l\rho_{l+}\right),
		\end{align}
		$\rho_{l+}=\max_{i=l}^L\frac{\rho_{l\rightarrow i}}{b_i}.$
	\end{theorem}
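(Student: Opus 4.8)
The plan is to turn the margin statement into a bounded surrogate and apply the standard Rademacher theorem (Theorem~\ref{rademachh}), Dudley's entropy integral, and the chained covering-number bound of Proposition~\ref{Chainingprop}. First I would introduce, for a network $\mathcal{A}$, the ramp surrogate
\begin{align*}
\ell_{\mathcal{A}}(x,y)=\max\Big[\max_{l=1}^{L-1}\lambda_{b_l}\big(|F^{0\to l}_{\mathcal{A}}(x)|_l-2b_l\big),\ \lambda_\gamma\big(\max_{j\neq y}F_{\mathcal{A}}(x)_j-F_{\mathcal{A}}(x)_y\big)\Big],
\end{align*}
with $\lambda_\theta(t)=1+\min(\max(t,-\theta),0)/\theta$. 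Then $\mathbb{I}[\argmax_j F_{\mathcal{A}}(x)_j\neq y]\le\ell_{\mathcal{A}}(x,y)\le 1$, and $\ell_{\mathcal{A}}(x_i,y_i)=0$ for every $i\in I$, so the empirical mean of $\ell_{\mathcal{A}}$ is at most $(n-\#(I))/n$. The standard Rademacher bound then gives $\mathbb{P}(\text{error})\le (n-\#(I))/n+2\widehat{\mathfrak{R}}_n(\{\ell_{\mathcal{A}}\})+3\sqrt{\log(2/\delta)/2n}$, and it remains to control the empirical Rademacher complexity of the loss class over the sample.

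Next I would bound the $\|\cdot\|_\infty$-covering number of $\{\ell_{\mathcal{A}}\}$ on $\{x_1,\dots,x_n\}$. Since $t\mapsto\lambda_{b_l}(t-2b_l)$ is $(1/b_l)$-Lipschitz, $|\cdot|_l$ is $1$-Lipschitz, $t\mapsto\lambda_\gamma(t)$ is $(1/\gamma)$-Lipschitz, $F\mapsto\max_{j\neq y}F_j-F_y$ is $2$-Lipschitz in $|\cdot|_\infty\le|\cdot|_L$, and $\max$ is $1$-Lipschitz in the $\ell^\infty$-norm of its arguments, any $\bar{\mathcal{A}}$ with $|F^{0\to l}_{\mathcal{A}}(x_i)-F^{0\to l}_{\bar{\mathcal{A}}}(x_i)|_l\le\epsilon b_l$ for all $l\le L$, $i\le n$, satisfies $|\ell_{\mathcal{A}}(x_i,y_i)-\ell_{\bar{\mathcal{A}}}(x_i,y_i)|\le 2\epsilon$. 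I would therefore invoke Proposition~\ref{Chainingprop} with the per-layer covers supplied by Corollary~\ref{onestep} (so $C_{l,\epsilon,n}=64\log_2(\cdot)$), with the Proposition's $\rho_l$ taken to be $\rho_{l+}=\max_{i\ge l}\rho_{l\to i}/b_i$ — the hypothesis $\rho^{\mathcal{A}}_{l_1\to l_2}\le\rho_{l_1}b_{l_2}$ is then exactly the assumed $\rho^{\mathcal{A}}_{l_1\to l_2}\le\rho_{l_1\to l_2}$ — and with $b=(b_0,\dots,b_{L-1},\gamma)$. This yields $\log\#\mathcal{C}_{\epsilon,\beta,X}\le(4/\epsilon^2)\big[\sum_l(8\sqrt{\log_2(\cdot)}\,a_lb_{l-1}\rho_{l+})^{2/3}\big]^{3}$, and after bounding every $\log_2(\cdot)$ uniformly by $\log_2(32\Gamma n^2+7\bar{W}n)$ (which is where $\Gamma=\max_l b_{l-1}a_lO_{l-1}m_l\rho_{l+}$ and $\bar{W}$ enter, since the arguments of these logs are products of exactly those factors with $n$ and $1/\epsilon\lesssim n$), one gets $\log\mathcal{N}(\{\ell_{\mathcal{A}}\},\epsilon,\|\cdot\|_\infty)\le C\,R^2\log_2(32\Gamma n^2+7\bar{W}n)/\epsilon^2$.

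Finally I would feed this into Dudley's entropy integral in the usual form $\widehat{\mathfrak{R}}_n(\{\ell_{\mathcal{A}}\})\le\inf_{\alpha>0}\big(4\alpha+\tfrac{12}{\sqrt n}\int_\alpha^{1}\sqrt{\log\mathcal{N}(\{\ell_{\mathcal{A}}\},\epsilon,\|\cdot\|_2)}\,d\epsilon\big)$, using that the empirical $L^2$-metric is dominated by $\|\cdot\|_\infty$ and that $\ell_{\mathcal{A}}\in[0,1]$ caps the integral at $1$. The integrand is $O\!\big(R\sqrt{\log_2(\cdot)}/\epsilon\big)$, so the integral contributes a $\log(1/\alpha)$ factor; choosing $\alpha=1/n$ gives $\widehat{\mathfrak{R}}_n\le 4/n+O\!\big(R\sqrt{\log_2(32\Gamma n^2+7\bar{W}n)}\,\log n/\sqrt n\big)$, and substituting back into the Rademacher bound produces the stated inequality with its constants ($8/n$ from $2\cdot4\alpha$, $1536/\sqrt n$ from the collected Dudley/covering constants, $3\sqrt{\log(2/\delta)/2n}$ from the concentration term).

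The main obstacle is the interface in the second step: Proposition~\ref{Chainingprop} only guarantees $|F^{0\to l}_{\mathcal{A}}(x_i)-F^{0\to l}_{\bar{\mathcal{A}}}(x_i)|_l\le\epsilon b_l$ at indices where \emph{all} activation norms of $\mathcal{A}$ satisfy $|F^{0\to l}_{\mathcal{A}}(x_i)|_l\le b_l$, i.e.\ exactly at $i\in I$; for the other indices one must check by hand that $\ell_{\mathcal{A}}$ and $\ell_{\bar{\mathcal{A}}}$ still agree up to $O(\epsilon)$. Letting $l^\ast$ be the first layer where $\mathcal{A}$ leaves the ``covered zone'', the induction inside Proposition~\ref{Chainingprop} keeps $\bar{\mathcal{A}}$ within $O(\epsilon b_{l^\ast-1})$ of $\mathcal{A}$ up to layer $l^\ast-1$, hence both $|F^{0\to l^\ast-1}_{\mathcal{A}}(x_i)|_{l^\ast-1}$ and $|F^{0\to l^\ast-1}_{\bar{\mathcal{A}}}(x_i)|_{l^\ast-1}$ are within $O(\epsilon b_{l^\ast-1})$ of or above $2b_{l^\ast-1}$, which pins the corresponding ramp terms — and therefore both losses — at $1-O(\epsilon)$. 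Making this case analysis precise, together with the bookkeeping that converts the per-layer $|\cdot|_{\infty,l}$ covers of Corollary~\ref{onestep} into the $|\cdot|_l$ control that Proposition~\ref{Chainingprop} demands (absorbing the resulting spatial widths into the $\rho_{l\to i}$, so that $R^{2/3}=\sum_l(a_lb_{l-1}\rho_{l+})^{2/3}$), is the bulk of the work; the Dudley computation and collecting constants are routine.
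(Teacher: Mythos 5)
Your proposal follows the paper's proof essentially step for step: the same augmented ramp loss $\max\bigl[\sup_l\lambda_{b_l}(|F^{0\rightarrow l}(x)|_l-2b_l),\lambda_\gamma(\cdot)\bigr]$, the same application of Theorem~\ref{rademachh}, the same combination of Proposition~\ref{Chainingprop} with the per-layer covers of Corollary~\ref{onestep} (taking $\rho_l=\rho_{l+}$) to reach $\log\mathcal{N}(\mathcal{H},\epsilon)\leq 2^{12}R^2\log_2(32\Gamma n/\epsilon+7\bar{W}n)$, and the same Dudley integral with $\alpha=1/n$. The interface issue you flag at the end — indices outside $I$, and in particular the intermediate zone $b_l<|F^{0\rightarrow l}_{\mathcal{A}}(x_i)|_l<2b_l$ where the max could be driven by uncontrolled later-layer or margin terms — is genuine, and the paper's own proof handles it only with the one-line remark that $\ell=1$ once some $|F^{0\rightarrow l}(x)|_l\geq 2b_l$; your sketch of the fix is slightly imprecise there (the first offending activation need not be near $2b_{l^\ast}$, so one really wants to run the chaining induction under the weaker hypothesis $|F^{0\rightarrow u}_{\mathcal{A}}(x_i)|_u\leq 2b_u$ so that the layer-$l^\ast$ ramp terms of $\mathcal{A}$ and $\bar{\mathcal{A}}$ both sit within $\epsilon$ of $1$ at the first genuine exit), but this is a constant-factor repair and you are, if anything, more careful than the source on this point.
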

	
	\begin{proof}

		As explained in the sketch in the main text, we apply the Rademacher theorem to the loss function:

		\begin{align}
		l(x_i,y_i)=\max\left[\sup_{l=1}^{L-1} \lambda_{B_l}\left(\left|F^{0\rightarrow l}(x_i)\right|_{l}-2B_{l}\right),\lambda_{\gamma}\left(\max_{j\neq y}(F_{\mathcal{A}}(x))_{j}-F_{\mathcal{A}}(x)_{y}\right)     \right]
		\end{align}

		Writing $\mathcal{H}$ for the function class defined by $l(x,F^{\mathcal{A}}(x))$ for $F^{\mathcal{A}}$ satisfying the conditions of the Theorem, since $Y\mapsto\max_{j\neq y}(Y)_{j}-Y_{y}$ is $2$ Lipschitz with respect to the $l^\infty$ norm,  and $l(x,y)=1$ for any $x$ such that there exists $l$ such that $|F^{0\rightarrow l}(x)|_{l}\geq 2b_l$,   Propositions~\ref{Chainingprop} and~\ref{onestep} guanrantee that the covering number of $\mathcal{H}$ satisfies
		\begin{align}
		\label{cov}
		\log(\mathcal{N}(\mathcal{H},\epsilon))&\leq 4\times 64\times 2^2 \left[ \sum_{l=1}^L \left(   \frac{a_l(2b_{l-1})}{\epsilon } \rho_{l+}   \log_2\left(8\frac{a_l(2b_{l-1})nm_lO_{l-1}}{(\epsilon/\rho_{l+})}+7O_{l-1}m_ln\right)      \right)^{2/3}\right]^3\nonumber \\
		&\leq 2^{12} R^2 \log_2(32\Gamma n/\epsilon+7\bar{W}n)
		\end{align}
		
		Applying the Rademacher Theorem~\ref{rademachh}, we now obtain

		\begin{align}
		\label{initial}
		&\mathbb{P}\left( \argmax_{j\in\{1,2,\ldots,K\}} (F_L(x))_{j}\neq y  \right)\leq \mathbb{E}\left( l(x,y) \right)\nonumber\\
		&\leq \frac{\sum_{i=1}^nl(x_i,y_i)}{n} +3\sqrt{\frac{\log(\frac{2}{\delta})}{2n}}+2\hat{\rad}_n(l(x,y) )\nonumber \\
		&\leq \frac{n-\#(I)}{n}+3\sqrt{\frac{\log(\frac{2}{\delta})}{2n}}+2\hat{\rad}_n(l(x,y) ).
		\end{align}

		Now, by Dudley's Entropy integral~\ref{Dudley1} with $\alpha=\frac{1}{n}$, we have
		\begin{align}
		\hat{\rad}_n(l(x,y))&\leq 4\alpha+\frac{12}{\sqrt{n}}  \int_{\alpha}^1      \sqrt{\log \mathcal{N}(\mathcal{F}|S,\epsilon,\|\nbull\|_{p}) }     \nonumber\\
		&\leq \frac{4}{n}+\sqrt{R}\sqrt{2^{12}}\frac{12}{\sqrt{n}}\int_{\frac{1}{n}}^{1}\frac{\sqrt{\log_2(32\Gamma n/\epsilon+7\bar{W}n)}}{\epsilon}\nonumber\\
		&\leq \frac{4}{n}+\sqrt{R}\frac{768}{\sqrt{n}}\sqrt{\log_2(32\Gamma n^2+7\bar{W}n)}\log(n)\nonumber\\
		\end{align}
		
		Plugging this back into equation~\ref{initial}, we obtain the desired result.

	\end{proof}

	\section{Proofs of post hoc result and asymptotic results}
	
	The next step from Theorem~\ref{firstmilestone} to Theorem~\ref{posthocasymptlip} is now mostly a question of applying classical techniques, namely,  splitting the space of possible choices of parameters $(a_1,\ldots,a_L,b_0,\ldots,b_L,\gamma,\rho_{0\rightarrow 1},\ldots,)$ into different regions and using a union bound. The following Lemma summarizes the techniques in question:

	\begin{lemma}
		\label{unionbound}
		Let $R_{\mathcal{N}}$ denote a random variable indexed some finite dimensional vector $\mathcal{N}$\footnote{We assume that the map from $\mathcal{N}$ to $R_{\mathcal{N}}$ is sufficiently well behaved for the random variables  $R_{\mathcal{N}}$ to be jointly defined on the same probability space for all values of $\mathcal{N}$, as is the case where $\mathcal{N}$ represents the parameters of a neural network and $R_{\mathcal{N}}$ is the misclassification probability on a test point}. Let $\gamma_1(\mathcal{N}),\ldots,\gamma_l(\mathcal{N}),N_1(\mathcal{N}),\ldots,N_L(\mathcal{N})$ be some (positive) statistics of $\mathcal{N}$. Suppose there exists a function $F:\mathbb{R}_{+}^{l\times L}\rightarrow \mathbb{R}$ which is monotonically decreasing in $\gamma_i$ for all $i\leq l$ and monotonically increasing in $N_i$ for $i \leq L$, such that the following statement holds:
		
		For any $\gamma_1,\ldots,\gamma_l,N_1,\ldots,N_L$, and for any $\delta>0$,  with probability $>1-\delta$, we have that for any $\mathcal{N}$ such that $\gamma_i(\mathcal{N})\leq \gamma_i$ for all $i$ and $N_i(\mathcal{N})\leq N_i$ for all $i$,
		\begin{align}
		\label{root}
		R_{\mathcal{N}}&\leq f(\gamma_1,\ldots,\gamma_l,N_1,\ldots,N_L)+C_1\sqrt{\frac{\log(1/\delta)}{C_2}},
		\end{align}
		for some constants $C_1,C_2$.
		
		For any fixed choice of $\beta_1,\ldots,\beta_L$ and $\kappa_1,\ldots,\kappa_l\in \mathbb{N}$, we have for any $\delta>0$ that with probability greater than $1-\delta$, for any $\mathcal{N}$,
		\begin{align}
		R_{\mathcal{N}}&\leq f(\min(\gamma_1(\mathcal{N})/2,\frac{1}{\kappa_1}),\ldots,\min(\gamma_l(\mathcal{N})/2,\frac{1}{\kappa_l}),N_1(\mathcal{N})+\beta_1,\ldots,N_L(\mathcal{N})+\beta_L)\nonumber\\
		&+\frac{C_1}{\sqrt{C_2}}\sqrt{\log(1/\delta)+\sum_{i=1}^l\log(2\kappa_i(\mathcal{N})/\gamma_i(\mathcal{N}))+2\sum_{i=1}^L\log\left(2+\frac{N_i(\mathcal{N})}{\beta_i}\right)}.
		\end{align}
		
	\end{lemma}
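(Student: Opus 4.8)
This is a purely combinatorial ``stratification and union bound'' (post hoc / peeling) argument: one invokes the fixed-threshold guarantee~\eqref{root} on a countable grid of threshold vectors and transfers it, by rounding, to a statement that holds uniformly in $\mathcal{N}$ with the thresholds replaced by the data-dependent statistics themselves, at the cost of the stated logarithmic terms. Concretely, I would fix the user-chosen slacks $\beta_1,\dots,\beta_L$ and resolutions $\kappa_1,\dots,\kappa_l$ and build a product grid $\mathcal{G}$ of threshold vectors: for each ``monotone increasing'' coordinate $i\le L$ take the arithmetic grid $\{j\beta_i:j\in\mathbb{N}\}$ of candidate thresholds $N_i$; for each ``monotone decreasing'' coordinate $i\le l$ take a geometric grid on $(0,1]$ (say $\{2^{-k}\}$) with smallest value $1/\kappa_i$. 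To each $g=\big((\hat\gamma_i)_{i\le l},(\hat N_i)_{i\le L}\big)\in\mathcal{G}$, with $k_i,j_i$ the indices of its coordinates, attach the confidence budget $\delta_g=\delta\prod_{i\le l}p_i(k_i)\prod_{i\le L}q_i(j_i)$, where $p_i(k)=2^{-(k+1)}$ (so $\sum_k p_i(k)\le 1$ and $-\log p_i(k)=(k+1)\log 2$) and $q_i(j)=1/(j(j+1))$ (which telescopes to $1$, with $-\log q_i(j)=\log(j(j+1))\le 2\log(1+j)$). By construction $\sum_{g\in\mathcal{G}}\delta_g\le\delta$.

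Next, apply the hypothesis at each $g\in\mathcal{G}$ with confidence $\delta_g$ and union-bound over the countable family $\mathcal{G}$ (the measurability caveat in the footnote ensures all the relevant events live on one probability space). With probability at least $1-\delta$ the event $E$ holds that for every $g\in\mathcal{G}$ and every $\mathcal{N}$ compatible with $g$ one has $R_{\mathcal{N}}\le f(g)+C_1\sqrt{\log(1/\delta_g)/C_2}$. Working on $E$, take an arbitrary $\mathcal{N}$ and round its statistics conservatively: let $\hat\gamma_i$ be the largest grid value not exceeding $\gamma_i(\mathcal{N})$ but never below the floor $1/\kappa_i$, so that $\hat\gamma_i$ lies within a factor $2$ below $\gamma_i(\mathcal{N})$ — this is exactly the source of the $\min(\gamma_i(\mathcal{N})/2,1/\kappa_i)$ appearing in the statement — and let $\hat N_i$ be the smallest grid value at least $N_i(\mathcal{N})$, so $\hat N_i\le N_i(\mathcal{N})+\beta_i$. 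Then $\mathcal{N}$ is compatible with $g(\mathcal{N}):=\big((\hat\gamma_i),(\hat N_i)\big)$, and monotonicity of $f$ (decreasing in each $\gamma_i$, increasing in each $N_i$) combined with these two rounding inequalities gives $f(g(\mathcal{N}))\le f\big((\min(\tfrac12\gamma_i(\mathcal{N}),\tfrac1{\kappa_i}))_i,(N_i(\mathcal{N})+\beta_i)_i\big)$. Reading off the indices, $k_i\le\log_2(2/\gamma_i(\mathcal{N}))$ and $j_i=\lceil N_i(\mathcal{N})/\beta_i\rceil\le 1+N_i(\mathcal{N})/\beta_i$, whence $\log(1/\delta_{g(\mathcal{N})})\le\log(1/\delta)+\sum_{i\le l}\log(2\kappa_i/\gamma_i(\mathcal{N}))+2\sum_{i\le L}\log(2+N_i(\mathcal{N})/\beta_i)$. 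Substituting into the bound furnished by $E$ proves the claim.

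The only non-mechanical point — and the one I expect to be the main obstacle — is the grid-and-weight bookkeeping: the rounding must stay within a multiplicative factor $2$ for the margin coordinates and within the additive slack $\beta_i$ for the norm coordinates (which dictates geometric versus arithmetic grids), and the weights $p_i,q_i$ must be chosen so that $-\log\delta_{g(\mathcal{N})}$ reproduces exactly the stated logarithmic terms, including the asymmetry of a single $\log(2\kappa_i/\gamma_i)$ per margin coordinate against a factor-$2$ $\log(2+N_i/\beta_i)$ per norm coordinate. A minor additional detail is handling $\mathcal{N}$ whose margin $\gamma_i(\mathcal{N})$ falls below the grid floor $1/\kappa_i$: there the argument of $f$ is forced to $\gamma_i(\mathcal{N})/2$ and one checks the resulting right-hand side is already vacuous (exceeds the a.s.\ range of $R_{\mathcal{N}}$), so no grid point is needed to cover such $\mathcal{N}$. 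Everything else is routine.
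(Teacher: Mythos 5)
Your proposal is correct and is essentially the paper's own proof: the same stratification with a geometric grid for the margin-type coordinates (weights $2^{-k}$) and an arithmetic grid $\{j\beta_i\}$ for the norm-type coordinates (weights $1/(j(j+1))$), the same union bound with the $\delta_g$ telescoping to $\delta$, and the same rounding-plus-monotonicity step yielding the index bounds $2^{k_i}\leq 2\kappa_i/\gamma_i(\mathcal{N})$ and $j_i(j_i+1)\leq\left(2+N_i(\mathcal{N})/\beta_i\right)^2$ that produce the stated logarithmic terms. The bookkeeping details you flag (the factor-$2$ rounding for margins, the additive $\beta_i$ slack for norms, and the role of the $1/\kappa_i$ cap) are handled identically in the paper.
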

	
	\begin{proof}
		For any $j_1,\ldots,j_l, n_1,\ldots,n_L\in\mathbb{N}$, define $$\delta_{j_1,\ldots,j_l,n_1,\ldots,n_L}=\frac{\delta}{\prod_{i=1}^l2^{j_i}\prod_{i=1}^L n_i(n_i+1)}.$$
		
		Define $\gamma_i^{j_i}=\frac{\kappa_{i}}{2^{j_i}}$ and $N^{n_i}_{i}=n_i\beta_i$. By applying equation~\eqref{root}, we obtain that with probability $\geq 1-\delta_{j_1,\ldots,j_l,n_1,\ldots,n_L}$,
		
		\begin{align}
		R_{\mathcal{N}}&\leq f(\gamma^{j_1}_1,\ldots,\gamma^{j_l}_l,N^{n_1}_1,\ldots,N^{n_L}_L)+C_1\sqrt{\frac{\log(1/\delta_{j_1,\ldots,j_l,n_1,\ldots,n_L})}{C_2}},
		\end{align}
		
		Note that $\sum_{j_1,\ldots,j_l,n_1,\ldots,n_L}\delta_{j_1,\ldots,j_l,n_1,\ldots,n_L}=\delta$.

		Thus, by a union bound, we obtain that with probability $\geq 1-\delta$, for any choice of $j_1,\ldots,j_l, n_1,\ldots,n_L\in\mathbb{N}$, we have
		\begin{align}
		\label{postroot}
		R_{\mathcal{N}}&\leq f(\gamma^{j_1}_1,\ldots,\gamma^{j_l}_l,N^{n_1}_1,\ldots,N^{n_L}_L)+C_1\sqrt{\frac{\log(1/\delta_{j_1,\ldots,j_l,n_1,\ldots,n_L})}{C_2}}.
		\end{align}
		
		For any network $\mathcal{N}$, we can apply this for the choice of $j_1,\ldots,j_l$ which are smallest whist still guaranteeing $$\frac{1}{\gamma_i(\mathcal{N})}\leq \frac{1}{\gamma_{i}^{j_i}}=\frac{2^{j_i}}{\kappa_{i}},$$
		and $$N_{i}(\mathcal{N})\leq n_i\beta_i.$$
		
		For this choice, we have $\gamma_{i}^{j_i}\geq \min(\gamma(\mathcal{N})/2,\frac{1}{\kappa_i}))$ for all $i\leq l$ and $N_{i}^{n_i}\leq N_{i}(\mathcal{N})+\beta_i$, thus by the properties of $f$,
		\begin{align}
		\label{firstconcl}
		&f(\gamma^{j_1}_1,\ldots,\gamma^{j_l}_l,N^{n_1}_1,\ldots,N^{n_L}_L)\nonumber \\
		&\leq f(\min(\gamma_1(\mathcal{N})/2,\frac{1}{\kappa_1}),\ldots,\min(\gamma_l(\mathcal{N})/2,\frac{1}{\kappa_l}),N_1(\mathcal{N})+\beta_1,\ldots,N_L(\mathcal{N})+\beta_L).
		\end{align}
		
		Furthermore, we also have $2^{j_i}\leq 2\kappa_i/\gamma_i(\mathcal{N})$ and $n_i(n_i+1)\leq \left(1+\frac{N_i(\mathcal{N})}{\beta_i}\right)\left(2+\frac{N_i(\mathcal{N})}{\beta_i}\right)\leq \left(2+\frac{N_i(\mathcal{N})}{\beta_i}\right)$, and thus
		
		\begin{align}
		\label{secnconcl}
		C_1\sqrt{\frac{\log(1/\delta_{j_1,\ldots,j_l,n_1,\ldots,n_L})}{C_2}}\leq \frac{C_1}{\sqrt{C_2}}\sqrt{\log(1/\delta)+\sum_{i=1}^l\log(2\kappa_i(\mathcal{N})/\gamma_i(\mathcal{N}))+2\sum_{i=1}^L\log\left(2+\frac{N_i(\mathcal{N})}{\beta_i}\right)}.
		\end{align}
		Plugging equations~\eqref{firstconcl} and~\eqref{secnconcl} back into equation~\eqref{postroot} yields the desired result.
		
	\end{proof}

	Using this, we obtain the following:

	\begin{theorem}
		\label{posthocresult}
		Suppose we have a $K$ class classification problem and are given $n$ i.i.d. observations $(x_1,y_1),(x_2,y_2),\ldots,(x_n,y_n)\in \mathbb{R}^{U_0\times w_0}\otimes \{1,2,\ldots,K\}$ drawn from our ground truth distribution $(X,Y)$, as well as a fixed architecture as described in Section~\ref{precise}, where we assume the last layer is fully connected and has width $K$ and corresponds to scores for each class. For any $\delta>0$,  with probability $>1-\delta$ over the draw of the training set, for any network $\mathcal{A}=(A^1,A^2,\ldots,A^L)$ we have
		\begin{align}
		&\mathbb{P}\left( \argmax_{j\in\{1,2,\ldots,K\}} (F_L(x))_{j}\neq y  \right)\nonumber \\&\leq \frac{n-\#(I_{\beta,\gamma})}{n}+ \frac{8}{n}+\frac{1536}{\sqrt{n}}R_{\mathcal{A}}\left[\log_2(32\Gamma n^2+7\bar{W}n)\right]^{\frac{1}{2}}\log(n)+3\sqrt{\frac{\log(\frac{2}{\delta })}{2n}}\\
		&+   3\sqrt{\frac{1}{n}\sum_{l=1}^L\log\left(2+\frac{B_{l-1}(X)}{L}\right) +\log\left(2+\frac{\|(A-M)^\top\|_{2,1}}{L}\right)+ \log\left(2+\frac{\rho^{\mathcal{A}}_{l+}}{L}\right)    }     ,
		\end{align}

		where $$I=\left\{ i\leq n: (f(x_i) )_{y_i}-\max_{j\neq y_i}( (f(x_i) )_{j})>\gamma \land \forall l\leq L : \left|F^{0\rightarrow l} (x_i)\right|_{l}\leq B_{l}(X)            \right\},$$

		\begin{align}
		R_{\mathcal{A}}^{2/3}&:= \sum_{l=1}^L \left(  (\|(A-M)^\top\|_{2,1}+\frac{1}{L})(B_{l-1}(X)+\frac{1}{L}))(\rho^{\mathcal{A}}_{l+}  +\frac{1}{L})     \right)^{2/3}, \quad \text{and} \nonumber\\
		\Gamma&:=\max_{l=1}^L\left((B_{l-1}(X)+\frac{1}{L})(\|(A-M)^\top\|_{2,1}+\frac{1}{L})O_{l-1}m_l(\rho^{\mathcal{A}}_{l+}+\frac{1}{L})\right),
		\end{align}
		$\rho^{\mathcal{A}}_{l+}=\max_{i=l}^L\frac{\rho^{\mathcal{A}}_{l\rightarrow i}}{B_i(X)}.$
		
		Here, $X$ denotes the design matrix containing the sample points, $B_L=\gamma>0$ is arbitrary and $B_l(X)$ for $l\leq L-1$ can be chosen arbitrarily so that $B_l(X)\geq 1$in a way that depends on $X$, with the choice $B_l(X)=\max\left(\max_{i\leq n}\left|F_{0\rightarrow l}^{\mathcal{A}}(x_i)\right|_l,1\right)$ yielding  $$I=\left\{ i\leq n: (f(x_i) )_{y_i}-\max_{j\neq y_i}( (f(x_i) )_{j})>\gamma   \right\}.$$
	\end{theorem}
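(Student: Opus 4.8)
The plan is to derive the theorem from the fixed-norm-constraint bound of Theorem~\ref{firstmilestone} by discretising the free parameters and taking a union bound, exactly the situation abstracted in Lemma~\ref{unionbound}. First I would identify the quantities that must be made data-dependent. For a network $\mathcal{A}$ these are the $L$ weight norms $a_l := \|(A^l-M^l)^\top\|_{2,1}$, the activation radii $b_l := B_l(X)$ for $l \le L-1$ (recall $b_L = \gamma$ is held fixed), and the $L$ normalised Lipschitz constants $\rho^{\mathcal{A}}_{l+} = \max_{i\ge l}\rho^{\mathcal{A}}_{l\to i}/B_i(X)$. All of these are statistics of $\mathcal{A}$ and the sample $X$, so they play the role of the monotone-increasing parameters $N_i$ in Lemma~\ref{unionbound}; there are no ``$\gamma_i$''-type coordinates here since $\gamma$ is fixed.

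Next I would verify the monotonicity hypothesis of Lemma~\ref{unionbound}. Adopting from the outset the convention $B_l(X) = \max\!\big(\max_{i\le n}|F^{0\to l}_{\mathcal{A}}(x_i)|_l,\,1\big)$, the set $I$ in Theorem~\ref{firstmilestone} reduces to $\{i:(f(x_i))_{y_i}-\max_{j\neq y_i}(f(x_i))_j > \gamma\}$, so the empirical term $(n-\#I)/n$ ceases to depend on any of the $a_l, b_l, \rho^{\mathcal{A}}_{l+}$. What remains is the complexity term, in which $R^{2/3} = \sum_l (a_l b_{l-1}\rho_{l+})^{2/3}$ and $\Gamma = \max_l b_{l-1}a_l O_{l-1}m_l\rho_{l+}$ are each visibly non-decreasing in every $a_l$, every $b_{l-1}$, and every $\rho_{l+}$. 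Hence the right-hand side of Theorem~\ref{firstmilestone}, written $f(a_1,\dots,a_L,b_0,\dots,b_{L-1},\rho_{1+},\dots,\rho_{L+}) + 3\sqrt{\log(2/\delta)/(2n)}$, meets the requirement of Lemma~\ref{unionbound} with $C_1 = 3$, $C_2 = 2n$.

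Then I would invoke Lemma~\ref{unionbound} with the $O(L)$ coordinates $\{a_l\},\{b_l\},\{\rho_{l+}\}$ all in the $N_i$ slot and additive grid spacings $\beta_i = 1/L$. Evaluating $f$ at the chosen grid points --- which exceed the realised statistics by at most $1/L$ --- replaces $a_l,b_{l-1},\rho_{l+}$ by $a_l+\tfrac1L,\,b_{l-1}+\tfrac1L,\,\rho_{l+}+\tfrac1L$ inside $R_{\mathcal{A}}$ and $\Gamma$, giving the displayed expressions; the union-bound penalty $\tfrac{C_1}{\sqrt{C_2}}\sqrt{\log(1/\delta)+2\sum_i\log(2+N_i/\beta_i)}$, after the elementary split $\sqrt{u+v}\le\sqrt u+\sqrt v$, yields the $3\sqrt{\log(2/\delta)/(2n)}$ term together with the extra $3\sqrt{\tfrac1n\sum_l[\log(2+B_{l-1}(X)/L)+\log(2+\|(A-M)^\top\|_{2,1}/L)+\log(2+\rho^{\mathcal{A}}_{l+}/L)]}$ term. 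Substituting the concrete choice of $B_l(X)$ back into $I$ then produces the simplified form stated at the end of the theorem.

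The main obstacle is the bookkeeping around the activation radii $B_l(X)$: increasing $b_l$ simultaneously inflates the complexity term and, in a general formulation of Theorem~\ref{firstmilestone}, relaxes the constraint defining $I$, so the bound is not jointly monotone in $b_l$ until one commits to the convention $B_l(X)=\max(\max_i|F^{0\to l}(x_i)|_l,1)$ that decouples $I$ from the free parameters; getting this ordering right, and deciding whether to discretise the composite quantities $\rho^{\mathcal{A}}_{l+}$ or the raw $\rho^{\mathcal{A}}_{l_1\to l_2}$ (which couple to the $b_i$'s through the division by $B_i(X)$), is where attention is needed. Everything else is the routine union-bound arithmetic already packaged inside Lemma~\ref{unionbound}.
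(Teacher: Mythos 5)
Your proposal follows essentially the same route as the paper: the paper's proof of this theorem is exactly an application of Lemma~\ref{unionbound} to Theorem~\ref{firstmilestone}, treating the norms $\|(A^l-M^l)^\top\|_{2,1}$, the radii $B_l(X)$ and the quantities $\rho^{\mathcal{A}}_{l+}$ as monotone-increasing parameters discretised with spacing $\beta_i=1/L$, which produces the $+\tfrac{1}{L}$ shifts in $R_{\mathcal{A}}$ and $\Gamma$ and the additional square-root of log terms. Your observation that the empirical term $(n-\#I)/n$ is not jointly monotone in the $b_l$ (and is handled either by committing to $B_l(X)=\max(\max_i|F^{0\to l}(x_i)|_l,1)$ or by noting that enlarging $b_l$ only shrinks that term) is the same caveat the paper relegates to a footnote in the analogous Theorem~\ref{posthocresultspecs}, so the argument is sound and matches the paper's.
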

	
	\begin{proof}
		We split the space using Lemma~\ref{unionbound} for the parameters  $B_0,B_1,\ldots,B_{L-1}$, $\rho^{\mathcal{A}}_{l+}$. Note that the bound is increasing in all of those parameters, se we can treat them all as a "$N_i$"s from Lemma~\ref{unionbound}. Setting all the $k_i$'s to $L$ yields the result. Note that the dependence on $\gamma$ is hidden in the definition of $\rho^{\mathcal{A}}_{l+}$. The bound doesn't go to zero as $\gamma$ as a result of the need to estimate the risk of intermediary activations being too large.
		
		As explained in the main text, if one is willing to forgo the gains obtained from the sparsity of the connections inside the definition of $B_l$, then one can obtain bounds that scale like $1/\gamma$.
	\end{proof}
	
	\begin{proof}[Proof of Theorem~\ref{posthocasymptlip}]
		This is simply a question of reducing to the $\tilde{O}$ notation. Note that $\log(n^2)$, $\frac{1}{\sqrt{n}}$, $\Gamma$  are all $\tilde{O}(\frac{R_\mathcal{A}}{\sqrt{n}})$, and asymptotically, $\log_2(32\Gamma n^2+7\bar{W}n)\leq \log_2(32\Gamma n^2)+\log_2(7\bar{W}n)$, thus we only have to take care of the last line. For this, note that each of the concerned log terms inside the square root are also $\tilde{O}(\frac{R_\mathcal{A}}{\sqrt{n}})$, yielding the result (with a factor of $\sqrt{R}$ from the sum).

	\end{proof}

	Similarly, in the case where the spectral norms are well controlled, we have the following result:
	
	\begin{theorem}
		\label{posthocresultspecs}
		Suppose we have a $K$ class classification problem and are given $n$ i.i.d. observations $(x_1,y_1),(x_2,y_2),\ldots,(x_n,y_n)\in \mathbb{R}^{U_0\times w_0}\otimes \{1,2,\ldots,K\}$ drawn from our ground truth distribution $(X,Y)$, as well as a fixed architecture as described in Section~\ref{precise}, where we assume the last layer is fully connected and has width $K$ and corresponds to scores for each class. For any $\delta>0$,  with probability $>1-\delta$ over the draw of the training set, for any network $\mathcal{A}=(A^1,A^2,\ldots,A^L)$ we have
		\begin{align}
		&\mathbb{P}\left( \argmax_{j\in\{1,2,\ldots,K\}} (F_L(x))_{j}\neq y  \right)\nonumber \\&\leq \frac{n-\#(I_{\beta,\gamma})}{n}+ \frac{8}{n}+\frac{3072}{\sqrt{n}}R_{\mathcal{A}}\left[\log_2(64\Gamma n^2+7\bar{W}n)\right]^{\frac{1}{2}}\log(n)+3\sqrt{\frac{\log(\frac{2}{\delta })}{2n}}+\sqrt{\log\left(\frac{2}{\gamma n}\right)}\\
		&+   3\sqrt{\frac{1}{n}\sum_{l=1}^L\log\left(2+LB_{l-1}(X)\right) +\log\left(2+\|(A-M)^\top\|_{2,1}\right)+ \log\left(2+L\|\tilde{A}\|_{\sigma'}\right)    }     ,
		\end{align}

		where $$I=\left\{ i\leq n: (f(x_i) )_{y_i}-\max_{j\neq y_i}( (f(x_i) )_{j})>\gamma \land \forall l\leq L : \left|F^{0\rightarrow l} (x_i)\right|_{l}\leq B_{l}(X)            \right\},$$

		\begin{align}
		R_{\mathcal{A}}^{2/3}&:= \sum_{l=1}^L \left(  (\|(A-M)^\top\|_{2,1}+\frac{1}{L})(B_{l-1}(X)+\frac{1}{L}))\max_{l'}\frac{\prod_{i=l+1}^{l'}(\|\tilde{A}\|_{\sigma'}+\frac{1}{L}) }{B_{l'}(X)}\right)^{2/3}, \quad \text{and} \nonumber\\
		\Gamma&:=\max_{l=1}^L\left((B_{l-1}(X)+\frac{1}{L})(\|(A-M)^\top\|_{2,1}+\frac{1}{L})O_{l-1}m_l\max_{l'}\frac{\prod_{i=l+1}^{l'}(\|\tilde{A}\|_{\sigma'}+\frac{1}{L}) }{B_{l'}(X)}\right).
		\end{align}

		Here, $X$ denotes the design matrix containing the sample points, $B_L=\gamma>0$ is arbitrary and $B_l(X)$ for $l\leq L-1$ can be chosen arbitrarily so that $B_l(X)\geq 1$ in a way that depends on $X$, with the choice $B_l(X)=\max\left(\max_{i\leq n}\left|F_{0\rightarrow l}^{\mathcal{A}}(x_i)\right|_l,1\right)$ yielding  $$I=\left\{ i\leq n: (f(x_i) )_{y_i}-\max_{j\neq y_i}( (f(x_i) )_{j})>\gamma   \right\}.$$
	\end{theorem}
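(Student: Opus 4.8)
The plan is to obtain this as the spectral‑norm specialisation of the post‑hoc bound, in exactly the way Theorem~\ref{posthocasymptsimple} and Theorem~\ref{posthocresult} are obtained from Theorem~\ref{firstmilestone}. Namely, I would (i) start from the fixed‑norm‑budget bound of Theorem~\ref{firstmilestone}, (ii) replace the abstract Lipschitz constants $\rho^{\mathcal{A}}_{l_1\rightarrow l_2}$ by their classical upper bounds in terms of products of pooling‑reduced spectral norms, and (iii) remove the requirement of committing to the norm budgets in advance via the discretisation/union‑bound device of Lemma~\ref{unionbound}. Because Theorem~\ref{firstmilestone} already packages the Rademacher theorem and Dudley's entropy integral for the augmented loss (the ramp losses $\lambda_{B_l}$ on intermediate activation norms together with the margin ramp loss), no new covering‑number estimate or chaining of layers is needed here.

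\textbf{Step 1: spectral bound on $\rho^{\mathcal{A}}$ and substitution into Theorem~\ref{firstmilestone}.} As in the general‑strategy discussion preceding Theorem~\ref{posthocasymptsimple}, I would use that on the activation space of layer $l_1$ one has $|z|_2\leq\sqrt{w_{l_1}}\,|z|_{\infty,l_1}$, on layer $l_2$ one has $|z|_{l_2}\leq|z|_2$, and the Lipschitz constant of $F^{l_1\rightarrow l_2}$ between the Euclidean norms on both ends is at most $\prod_{u=l_1+1}^{l_2}\|\tilde A^u\|_{\sigma'}$, the pooling‑reduced spectral norm $\|\tilde A^u\|_{\sigma'}\leq\|\tilde A^u\|_\sigma$ being legitimate because a pooling window merely selects coordinates. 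This gives $\rho^{\mathcal{A}}_{l_1\rightarrow l_2}\leq\sqrt{w_{l_1}}\prod_{u=l_1+1}^{l_2}\|\tilde A^u\|_{\sigma'}$, hence $\rho^{\mathcal{A}}_{l+}\leq\sqrt{w_l}\,\max_{l'\geq l}\frac{\prod_{i=l+1}^{l'}\|\tilde A^i\|_{\sigma'}}{B_{l'}(X)}$, where at the last (fully connected) layer $\|\tilde A^L\|_{\sigma'}=\rho_L\max_i\|A^L_{i,\nbull}\|_2$ --- the place where the class‑size improvement enters. Substituting these upper bounds for the budgets $\rho_{l_1\rightarrow l_2}$, $\rho_{l+}$ in Theorem~\ref{firstmilestone} produces, for \emph{fixed} budgets $a_l\geq\|(A^l)^{\top}\|_{2,1}$, fixed $b_0,\dots,b_{L-1}$, fixed $\gamma=b_L$ and fixed $s_l\geq\|\tilde A^l\|_{\sigma'}$, a bound of precisely the claimed shape with $R_{\mathcal{A}}$ and $\Gamma$ written in the budgets and without the $+\tfrac{1}{L}$ offsets (the extra factor $\sqrt{w_l}$ from the norm comparison at layer $l$ being the same $\sqrt{w_l}$ that appears in the refined $T_l$ of Theorem~\ref{posthocasymptsimple}).

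\textbf{Step 2: making the bound post hoc.} I would then apply Lemma~\ref{unionbound} with $R_{\mathcal{N}}=\mathbb{P}(\argmax_j F_{\mathcal{A}}(x)_j\neq y)$, taking $\gamma$ as the decreasing statistic and $B_0(X),\dots,B_{L-1}(X)$ together with $\|\tilde A^1\|_{\sigma'},\dots,\|\tilde A^L\|_{\sigma'}$ (and, if one also wishes, the $\|(A^l-M^l)^{\top}\|_{2,1}$) as the increasing statistics. One checks from the explicit formulas of Step~1 that the fixed‑budget bound is decreasing in $\gamma$ and increasing in each $B_l$ and each $\|\tilde A^l\|_{\sigma'}$, so that rounding every statistic up its grid is safe. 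With suitably chosen additive and multiplicative grid granularities --- this is what produces the $+\tfrac{1}{L}$ offsets inside $R_{\mathcal{A}}$ and $\Gamma$ and the $\log(2+\cdots)$ terms on the last line --- Lemma~\ref{unionbound} delivers the stated inequality with probability $\geq1-\delta$ simultaneously for every network: the last‑line $\sqrt{\tfrac{1}{n}(\cdots)}$ term is exactly the $\tfrac{C_1}{\sqrt{C_2}}\sqrt{\log(1/\delta)+\sum\log(\cdots)}$ term produced by the lemma, together with the $\sqrt{\log(2/(\gamma n))}$ contribution from the grid on $\gamma$. Finally, the data‑dependent choice $B_l(X)=\max(\max_i|F^{0\rightarrow l}(x_i)|_l,1)$ is admissible because $B_l(X)$ only needs to be pre‑committed as a function of $X$, not of $\mathcal{A}$; with that choice the constraints $|F^{0\rightarrow l}(x_i)|_l\leq B_l(X)$ defining $I$ become vacuous, recovering the simplified index set.

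\textbf{Main obstacle.} The genuinely delicate point is the monotonicity bookkeeping in Step~2. Unlike in the proof of Theorem~\ref{posthocresult}, where $\rho^{\mathcal{A}}_{l+}$ was discretised directly, here it is no longer a single scalar statistic, so I must instead discretise the layerwise $\|\tilde A^l\|_{\sigma'}$ and verify that the composite quantity $\max_{l'\geq l}\frac{\prod_{i=l+1}^{l'}(\|\tilde A^i\|_{\sigma'}+\tfrac{1}{L})}{B_{l'}(X)}$, which feeds both $R_{\mathcal{A}}$ and $\Gamma$, is increasing in every $\|\tilde A^i\|_{\sigma'}$ while being decreasing in $\gamma$ only through the single denominator $B_L=\gamma$ --- so $B_L$ has to be held fixed at $\gamma$ throughout and never discretised. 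The remaining differences from Theorem~\ref{posthocresult} (the constants $3072$ in place of $1536$ and $64$ in place of $32$ inside the logarithm, and the harmless extra $\sqrt{w_l}$) are routine and merely record the factor of $2$ lost in passing to spectral bounds and in discretising an extra family of parameters.
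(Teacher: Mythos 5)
Your proposal follows essentially the same route as the paper: Theorem~\ref{firstmilestone} with the Lipschitz budgets $\rho^{\mathcal{A}}_{l_1\rightarrow l_2}$ replaced by products of the pooling-reduced spectral norms $\|\tilde A^i\|_{\sigma'}$, followed by the discretisation/union-bound device of Lemma~\ref{unionbound} over $\gamma$, the $B_l(X)$, the $\|(A^l-M^l)^\top\|_{2,1}$ and the $\|\tilde A^l\|_{\sigma'}$, with $\gamma$ gridded multiplicatively (producing the $\sqrt{\log(2/(\gamma n))}$ term and the trivial case $\gamma\leq 1/(2n)$) and the remaining parameters gridded additively with $\beta_i=1/L$. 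The one imprecision is in your monotonicity bookkeeping: the quantity $\max_{l'}\prod_{i=l+1}^{l'}(\|\tilde A^i\|_{\sigma'}+\tfrac1L)/B_{l'}(X)$ is decreasing in \emph{every} $B_{l'}(X)$ appearing in a denominator (not only in $B_L=\gamma$), while the same $B_{l-1}(X)$ appears increasingly as a numerator factor in other summands; the paper resolves this by applying a footnoted variant of Lemma~\ref{unionbound} in which each factor of $f$ is evaluated at whichever of $(N_i,N_i+\beta_i)$ maximises it, rather than by exempting any parameter from discretisation.
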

	\begin{proof}
		We apply theorems~\ref{firstmilestone} and Lemma~\ref{unionbound} for the parameters $\gamma,B_0,\ldots,B_L,a_1,\ldots,a_L,s_1,s_2,\ldots,s_L$ where $s_i$ is a bound on $\|\tilde{A}_i\|_{\sigma'}$: note first that if $\|\tilde{A}_i\|_{\sigma'}\leq s_i$ for all $i$, then $\rho^{\mathcal{A}}_{l_1\rightarrow l_2}\leq \prod_{l=l_1+1}^{l_2}\rho_l s_i$ for all $l_1\leq l_2$, and if furthermore $\gamma^{\mathcal{A}}\geq \bar{\gamma}$ and $B_l(X)\geq \bar{B}_l$, for all $l\leq L-1$, we have $\rho^{\mathcal{A}}_{l_1+}\leq \sqrt{l_1}\max_{l_2}\frac{\prod_{l=l_1+1}^{l_2}\rho_l s_i}{\bar{B}_{l_2}}$, with $\bar{B}_{L}=\bar{\gamma}$.
		Thus we can apply Lemma~\ref{unionbound}\footnote{Technically, we are applying a slight variation where $f$ can have factors that are either increasing or decreasing in the same variable $N_i$, and the term $f(\min(\gamma_1(\mathcal{N})/2,\frac{1}{\kappa_1}),\ldots,\min(\gamma_l(\mathcal{N})/2,\frac{1}{\kappa_l}),N_1(\mathcal{N})+\beta_1,\ldots,N_L(\mathcal{N})+\beta_L)$ is replaced by an evaluation of $f$ where each factor involving $N_i$ chooses whichever of $(N_i,N_i+\beta_i)$ maximises it} with the $\beta_i=\frac{1}{L}$ and $\gamma$ being treated as a decreasing variable with $k=n$, to obtain the required result ($\bar{\gamma}\geq \gamma^{\mathcal{A}}/2$ and $\bar{B}_l\geq B_l/2$, furthermore, the case $\gamma\leq \frac{1}{2n}$ is trivial since the RHS is $\geq 1$ ).
		
	\end{proof}

	We can now proceed to the proof of theorem~\ref{posthocasymptsimple}:
	
	\begin{proof}
		
		The only difference between this proof and that of Theorem~\ref{posthocasymptlip} is in the treatment of the sum of log terms at the last line. For this, note that $\|(A-M)^{\top}\|_{2,1}\leq \|(A)^{\top}\|_{2,1}+\|(M)^{\top}\|_{2,1} \leq \|(M)\|^{\top}_{2,1}+\sqrt{\bar{W}}\|(A)^{\top}
		\|_{2}\leq \|(M)^{\top}\|_{2,1}+\bar{W}\|A\|_{\sigma'}\leq \|(M)^{\top}\|_{2,1}+\bar{W}^{3/2}\|A\|_{\sigma'}) $, thus
		
		$	   \sqrt{\frac{1}{n}\sum_{l=1}^L \log\left(2+L\|(A-M)^\top\|_{2,1}\right)    } $ is $\tilde{O}(\prod_{l=1}^L\|\tilde{A}\|_{\sigma'}\rho_l)$, which is $\tilde{O}(\frac{R_{\mathcal{A}}}{\sqrt{n}})$ as expected.
		
	\end{proof}
	
	\section{Simpler results with explicit norms}
	\label{withnorms}
	
	In this Section, we show slight variations of our bounds sticking closer to~\cite{Spectre} by only using norms involved at each individual layer or pair of layer. Theorem~\ref{posthocasymptsimple} follows from the theorems below. Whilst the results don't seem to follow directly from the above, the treatment is extremely similar.  Suppose we are given some norms $|\nbull|_{\mathcal{L}_l}$ on the activation spaces and some norms $|\nbull|_{\mathcal{L}_l^*}$ on the weight spaces such that $|\nbull|_{\mathcal{L}_l}\leq |\nbull|_{l}$ and the Lipschitz constant of $\Lambda_l(A)$ with respect to $|\nbull|_{\mathcal{L}_{l-1}}$ and $|\nbull|_{\mathcal{L}_l}$ is bounded by $|A|_{\mathcal{L}^*}$ and $|\nbull|_{\infty,l}\leq \sqrt{k_l}|\nbull|_{\mathcal{L}_l}$
	
	Note that for any $s$, a simple argument on internal vs. external covering numbers shows that Theorem~\ref{onestep} can be adapted to yield a cover such that $\|\tilde{A}\|_{\sigma'}\leq s$ for any $A$ in the cover, at the cost of a factor of $2$ in $\epsilon$.

	We have the following simplified variation of Theorem~\ref{Chainingprop}:

	\begin{proposition}
		\label{Chainingprop-}
		Let $L$ be a natural number and $a_1,\ldots,a_L>0$ be real numbers. Let $\mathcal{V}_0,\mathcal{V}_1,\ldots,\mathcal{V}_{L}$ be $L+1$ vector spaces, with arbitrary norms $|\nbull|_0,|\nbull|_1,\ldots,|\nbull|_L$, let $B_1,B_2,\ldots,B_L$ be $L$ vector spaces with norms $\|\nbull\|_{1},\|\nbull\|_{2},\ldots,\|\nbull\|_{L}$ and $\mathcal{B}_1,\mathcal{B}_2,\ldots,\mathcal{B}_L$ be the balls of radii $a_1,a_2,\ldots,a_L$ in the spaces $B_1,B_2,\ldots,B_L$ with the norms $\|\nbull\|_{1},\|\nbull\|_{2},\ldots,\|\nbull\|_{L}$ respectively. Suppose also that for each $l\in \{1,2,\ldots,L\}$ we are given an operator $F^l: \mathcal{V}_{l-1}\times B_{l}\rightarrow \mathcal{V}_{l}: (x,A)\rightarrow F^l_{A}(x)$. Suppose also that there exist real numbers $\rho_1,\rho_2,\ldots,\rho_L>0$ such that the following properties are satisfied.
		\begin{enumerate}
			\item For all $l\in \{1,2,\ldots,L\}$ and for all $A\in \mathcal{B}_l$, the Lipschitz constant of the operator $ F^l_{A}$ with respect to the norms $|\nbull|_{l-1}$ and $|\nbull|_{l}$ is less than $\rho_l$.
			\item For all $l\in \{1,2,\ldots,L\}$, all $b>0$, and all $\epsilon>0$, there exists a subset $\mathcal{C}_l(b,\epsilon)\subset \mathcal{B}_l$ such that
			\begin{align}
			\log(\#\left(\mathcal{C}_l(b,\epsilon)\right))\leq \frac{C_{l,\epsilon}a_l^2b^2}{\epsilon^2},
			\end{align}
			where $C_{l,\epsilon}$ is some function of $l,\epsilon$ and,
			and, for all $A\in \mathcal{B}_l$ and all $X\in\mathcal{V}_{l-1}$ such that $|X|_{i-1}\leq b$, there exists an $\bar{A}\in \mathcal{C}_l(b,\epsilon)$ such that
			\begin{align}
			\left|F^l_{A}(X)-F^l_{\bar{A}}(X)\right|_{l}\leq  \epsilon.
			\end{align}
		\end{enumerate}
		
		For each $l$ and each  $\mathcal{A}^l=(A^1,A^2,\ldots,A^l)\in \mathcal{B}^l:=\mathcal{B}_1\times  \mathcal{B}_2 \times \ldots,  \mathcal{B}_l$, let us define $$F^l_{\mathcal{A}^l}:\mathcal{V}_0\rightarrow \mathcal{V}_L: x\rightarrow F^l_{\mathcal{A}^l}(x)= F^l_{A^l}\circ \ldots\circ  F^{2}_{A^2}\circ F^{1}_{A^1},$$
		and $F_{\mathcal{A}}=F^L_{\mathcal{A}^L}.$
		For each $\epsilon>0$, there exists a subset $\mathcal{C}_\epsilon$ of $\mathcal{B}^L$ such that for all $\mathcal{A}=(A^1,A^2,\ldots,A^L)\in \mathcal{B}:=\mathcal{B}^L$, there exists an $\bar{\mathcal{A}}\in \mathcal{C}_\epsilon$ such that the following two conditions are satisfied.
		
		\begin{align}
		\left|  F^l_{\mathcal{A}^l}(X)-   F^l_{\bar{\mathcal{A}}^l}(X) \right|_{l}&\leq\frac{\epsilon}{\prod_{j=l+1}^L\rho_j} \quad \quad (\forall l\leq L),\quad \text{and }\\
		\log \#(\mathcal{C})&\leq \frac{|X|_{1}^2}{\epsilon^2}\prod _{i=1}^L \rho_i^2 \left[  \sum_{l=1}^L \left(\frac{C_{l,\epsilon}^{\frac{1}{2}}a_l }{\rho_l}\right)^{\frac{2}{3}} \right]^3\nonumber \leq L^2\frac{|X|_{1}^2}{\epsilon^2}\prod _{i=1}^L \rho_i^2 \sum_{l=1}^L \left(\frac{C_{l,\epsilon}^{\frac{1}{2}}a_l }{\rho_l}\right)^{2}.
		\end{align}
		In particular, for any $X\in \mathcal{V}_0$ and any $\epsilon>0$, the following bound on the $(\epsilon,|\nbull|_{L})$-covering number of $\{F_{\mathcal{A}}(X):\mathcal{A}\in \mathcal{B}^{L} \}$ holds.
		
		\begin{align}
		\log \mathcal{N} \left( \{F_{\mathcal{A}}(X):\mathcal{A}\in \mathcal{B} \}    ,\epsilon,|\nbull|_{L}  \right) & \leq L^2\frac{|X|_{0}^2}{\epsilon^2}\prod _{i=1}^L \rho_i^2 \sum_{i=1}^L \left(\frac{C_{l,\epsilon}^{\frac{1}{2}}a_l }{\rho_i}\right)^{2} .
		\end{align}
		
	\end{proposition}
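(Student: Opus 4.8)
The plan is a ``chaining of covers'' argument sweeping through the layers $l=1,\dots,L$; it is the argument underlying Proposition~\ref{Chainingprop}, but much lighter, because here the per-layer covers $\mathcal{C}_l(b,\epsilon)$ are \emph{data-independent} (they depend only on a norm bound $b$ on the input), so no union over already-chosen prefix covers is needed and the final cover is simply a product $\mathcal{D}_1\times\cdots\times\mathcal{D}_L$.

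First I would fix weights $\alpha_1,\dots,\alpha_L>0$ with $\sum_l\alpha_l=1$ (to be optimized at the very end) and set the per-layer granularities $\epsilon_l=\epsilon\,\alpha_l/\prod_{j=l+1}^L\rho_j$. Since the layers of interest send $0$ to $0$, the bound $|F^{l-1}_{\mathcal{A}^{l-1}}(X)|_{l-1}\le b_{l-1}:=|X|_0\prod_{i=1}^{l-1}\rho_i$ holds for every prefix $\mathcal{A}^{l-1}\in\mathcal{B}^{l-1}$. Define $\mathcal{D}_l:=\mathcal{C}_l(b_{l-1},\epsilon_l)$ and $\mathcal{C}:=\mathcal{D}_1\times\cdots\times\mathcal{D}_L$. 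Given an arbitrary $\mathcal{A}=(A^1,\dots,A^L)\in\mathcal{B}^L$, I would build $\bar{\mathcal{A}}\in\mathcal{C}$ greedily: having fixed $\bar A^1,\dots,\bar A^{l-1}$, I invoke the covering hypothesis at layer $l$ with input $z=F^{l-1}_{\bar{\mathcal{A}}^{l-1}}(X)$ (legitimate since $|z|_{l-1}\le b_{l-1}$, using the same telescoping norm bound applied to $\bar{\mathcal{A}}$, which is justified below by induction together with the error estimate) to pick $\bar A^l\in\mathcal{D}_l$ with $|F^l_{A^l}(z)-F^l_{\bar A^l}(z)|_l\le\epsilon_l$.

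The next step is the error-propagation estimate, by induction on $l$. Writing $\delta_l:=|F^l_{\mathcal{A}^l}(X)-F^l_{\bar{\mathcal{A}}^l}(X)|_l$, the triangle inequality splits $\delta_l$ into the fresh substitution error $|F^l_{A^l}(z)-F^l_{\bar A^l}(z)|_l\le\epsilon_l$ at $z=F^{l-1}_{\bar{\mathcal{A}}^{l-1}}(X)$, and the propagated error $|F^l_{A^l}(F^{l-1}_{\mathcal{A}^{l-1}}(X))-F^l_{A^l}(F^{l-1}_{\bar{\mathcal{A}}^{l-1}}(X))|_l\le\rho_l\delta_{l-1}$ coming from property~1, so $\delta_l\le\rho_l\delta_{l-1}+\epsilon_l$. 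Unrolling and substituting the $\epsilon_i$ gives $\delta_l\le\sum_{i=1}^l\epsilon_i\prod_{j=i+1}^l\rho_j=\bigl(\epsilon\sum_{i=1}^l\alpha_i\bigr)/\prod_{j=l+1}^L\rho_j\le\epsilon/\prod_{j=l+1}^L\rho_j$, which is exactly the claimed per-layer bound (and, at $l=L$, yields the $\epsilon$-cover of $\{F_{\mathcal{A}}(X)\}$ in $|\cdot|_L$). The same unrolling with $\mathcal{A}$ replaced by $\bar{\mathcal{A}}$ confirms $|F^{l-1}_{\bar{\mathcal{A}}^{l-1}}(X)|_{l-1}\le b_{l-1}$, closing the induction used in the greedy construction.

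It remains to count. Because $\mathcal{C}$ is a product and each $\#\mathcal{D}_l$ is data-independent, $\log\#\mathcal{C}=\sum_l\log\#\mathcal{D}_l\le\sum_l C_{l,\epsilon}a_l^2b_{l-1}^2/\epsilon_l^2$. Inserting $b_{l-1}=|X|_0\prod_{i<l}\rho_i$ and $\epsilon_l=\epsilon\alpha_l/\prod_{j>l}\rho_j$, each summand telescopes to $\frac{|X|_0^2\prod_{i=1}^L\rho_i^2}{\epsilon^2}\cdot\frac{C_{l,\epsilon}a_l^2}{\alpha_l^2\rho_l^2}$. Minimizing $\sum_l C_{l,\epsilon}a_l^2/(\alpha_l^2\rho_l^2)$ subject to $\sum_l\alpha_l=1$ by a Lagrange multiplier gives $\alpha_l\propto(C_{l,\epsilon}^{1/2}a_l/\rho_l)^{2/3}$ and optimal value $\bigl[\sum_l(C_{l,\epsilon}^{1/2}a_l/\rho_l)^{2/3}\bigr]^3$, which is the first displayed bound; the weaker $L^2$-form is then the power-mean inequality $\bigl(\sum_l t_l^{2/3}\bigr)^3\le L^2\sum_l t_l^2$. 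The covering-number statement for $\{F_{\mathcal{A}}(X):\mathcal{A}\in\mathcal{B}\}$ is immediate from the $l=L$ case. There is no real obstacle here — the only point requiring care is the uniform activation-norm bound $b_{l-1}$ needed to apply the per-layer cover on the \emph{perturbed} inputs, which in the full generality of Proposition~\ref{Chainingprop} is handled by carrying explicit radii $b_l$; that same device transfers here verbatim, so what remains is bookkeeping plus the two optimizations above.
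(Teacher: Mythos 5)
Your proposal is correct and follows essentially the same route as the paper's proof: the same per-layer granularities $\epsilon_l=\epsilon\alpha_l/\prod_{j>l}\rho_j$, the same product cover built from $\mathcal{C}_l(|X|_0\prod_{i<l}\rho_i,\epsilon_l)$, the same telescoping/recursive error propagation (your recursion $\delta_l\le\rho_l\delta_{l-1}+\epsilon_l$ unrolls to exactly the paper's hybrid-sequence sum), and the same Lagrangian optimization over the $\alpha_l$ followed by Jensen for the $L^2$ form. The one point you flag — that the maps must send $0$ to $0$ so that the Lipschitz constants give the activation-norm bounds $b_{l-1}$ on the perturbed prefixes — is used implicitly in the paper as well, so there is no substantive difference.
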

	
	\begin{proof}

		For $l=1,\ldots,L$, let $\epsilon_l=\frac{\epsilon \alpha_l}{ \prod_{i=l+1}^L \rho_i}$, where the $\alpha_l>0$ will be determined later satisfying $\sum_{l=1}^L\alpha_l=1$.

		Using the second assumption, let us pick for each $l$ the subset $\mathcal{C}_l=\mathcal{C}_l\left(|X|_0\prod_{i=1}^{l-1}\rho_i,\epsilon_l\right)$ satisfying the assumption. Let us define also the set $ \mathcal{C}:=\mathcal{C}_1\times \mathcal{C}_{2}\times \ldots\times \mathcal{C}_L\subset \mathcal{B} $.
		
		\textit{Claim 1}
		
		For all $A\in \mathcal{B}$, there exists a $\bar{\mathcal{A}}\in \mathcal{C}$ such that for all $l\leq L$,
		\begin{align}
		\left|  F^l_{\mathcal{A}}(X)-   F^l_{\bar{\mathcal{A}}}(X) \right|_{l}\leq\frac{\epsilon}{\prod_{j=l+1}^L\rho_j}.
		\end{align}
		\textit{Proof of Claim 1}
		
		To show this, observe first that for any $1\leq l\leq L$ and for any $A^1,A^2,\ldots,A^l$,
		\begin{align}
		\label{trivospectre}
		\left|F^{l-1}\circ \ldots \circ F^2\circ F^1 (X)\right|_l\leq |X|_0 \prod_{i=1}^{l-1}\rho_i,
		\end{align}
		and therefore, by definition of $\mathcal{C}_l$, we have that for any $A^1,A^2,\ldots,A^{l-1}$, $\{F_{A^1,A^2,\ldots,A^{l-1},A^l}(X):A^l\in \mathcal{C}_l\}$ is an $(\epsilon_l,|\nbull|_{l})$ cover of $\{F_{A^1,A^2,\ldots,A^{l-1},A^l}(X):A^l\in \mathcal{B}_l\}$.

		Let us now fix $A^1,A^2,\ldots,A^L$ and define $\bar{A}_l\in \mathcal{C}_l$ inductively so that $F^l_{\bar{A}_l}(F_{\bar{A}_1,\bar{A}_2,\ldots,\bar{A}_{l-1}}(X))$ is an element of $\{  F^l_{A}(F_{\bar{A}_1,\bar{A}_2,\ldots,\bar{A}_{l-1}}(X)):A\in \mathcal{C}_l\}$  minimising the distance to $F_{\bar{A}_1,\bar{A}_2,\ldots,\bar{A}_{l-1},A_l}(X)$ in terms of the $|\nbull|_{l}$ norm.

		We now have for all $l\leq L$:
		
		\begin{align}
		\left|  F_{\mathcal{A}}(X)-   F_{\bar{\mathcal{A}}}(X) \right|_{l}&\leq \sum_{i=1}^l 	\left|  F_{\left(\bar{A}_1,\bar{A}_2,\ldots,\bar{A}_{i-1},A^i,\ldots,A^l\right)}(X)-   F_{\left(\bar{A}_1,\bar{A}_2,\ldots,\bar{A}_{i},A^{i+1},\ldots,A^l\right)}(X) \right|_{l}\nonumber \\
		&\leq  \sum_{i=1}^l \prod_{j=i+1}^l\rho_j	\left|  F_{(\bar{A}_1,\bar{A}_2,\ldots,\bar{A}_{i-1},A^i)}(X)-   F_{\left(\bar{A}_1,\bar{A}_2,\ldots,\bar{A}_{i}\right)}(X) \right|_{l}\nonumber \\
		&\leq \sum_{i=1}^l \prod_{j=i+1}^l\rho_j \epsilon_i =\frac{1}{\prod_{j=l+1}^L\rho_j}\sum_{i=1}^l \epsilon \alpha_i\leq \frac{\epsilon}{\prod_{j=l+1}^L\rho_j},
		\end{align}
		as expected.
		
		This concludes the proof of the claim.
		
		To prove the proposition, we now simply need to calculate the cardinality of $\mathcal{C}$:
		
		\begin{align}
		\label{countingchain-}
		&\log \mathcal{N} \left( \{F_{\mathcal{A}}(X):\mathcal{A}\in \mathcal{B} \}    ,\epsilon,|\nbull|_{L}  \right)\leq \log(\#(\mathcal{C}))
		\leq \sum_{l=1}^L \log(\#(\mathcal{C}_l))\nonumber \\&=\sum_{l=1}^L  \frac{C_{l,\epsilon} a_l^2\left (|X|_0\prod_{i=1}^{l-1}\rho_i\right)^2}{\epsilon_l^2} \leq \frac{1}{\epsilon^2}\sum_{l=1}^L  \frac{C_{l,\epsilon} a_l^2\left (|X|_0\prod_{i=1}^{l-1}\rho_i\right)^2\left(\prod_{i=l+1}^L \rho_i\right)^2}{\alpha_l^2}
		\nonumber \\
		&=\frac{|X|_0^2\prod_{i=1}^L\rho_i^2}{\epsilon^2}\sum_{l=1}^L\frac{C_{l,\epsilon} a_l^2}{\rho_l^2\alpha_l^2}.
		\end{align}
		
		Optimizing over the $\alpha_l$'s subject to $\sum_{l=1}^L\alpha_l=1$, we find the Lagrangian condition $$\left(-\frac{2C_{l,\epsilon} a_l^2/\rho_l^2}{\alpha_l^3}\right)_{l=1}^L\propto (1)_{l=1}^L,$$
		yielding $$\alpha_l= \frac{(\sqrt{C_{l,\epsilon}}a_l/\rho_l)^{\frac{2}{3}}}{\sum_{i=1}^L(\sqrt{C_i}a_i/\rho_i)^{\frac{2}{3}}}. $$
		Substituting back into equation~\eqref{countingchain}, we obtain
		\begin{align*}
		\log \mathcal{N} \left( \{F_{\mathcal{A}}(X):\mathcal{A}\in \mathcal{B} \}    ,\epsilon,|\nbull|_{L}  \right) & \leq\frac{|X|_0^2\prod_{i=1}^L\rho_i^2}{\epsilon^2}\left[\sum_{i=1}^L\left(\frac{\sqrt{C_i}a_i}{\rho_i}\right)^{\frac{2}{3}}\right]^{2}\sum_{l=1}^L\left(\frac{\sqrt{C_{l,\epsilon}} a_l}{\rho_l}\right)^{2-4/3} \\
		&\leq \frac{|X|_0^2\prod_{i=1}^L\rho_i^2}{\epsilon^2}\left[\sum_{l=1}^L\left(\frac{\sqrt{C_{l,\epsilon}} a_l}{\rho_l}\right)^{2/3}\right]^3,
		\end{align*}
		as expected.
		The second inequality follows by Jensen's inequality.
	\end{proof}

	Using this, we obtain similarly:
	\begin{theorem}
		\label{posthocresultNORMS}
		Suppose we have a $K$ class classification problem and are given $n$ i.i.d. observations $(x_1,y_1),(x_2,y_2),\ldots,(x_n,y_n)\in \mathbb{R}^{U_0\times w_0}\otimes \{1,2,\ldots,K\}$ drawn from our ground truth distribution $(X,Y)$, as well as a fixed architecture as described in Section~\ref{precise}, where we assume the last layer is fully connected and has width $K$ and corresponds to scores for each class. For any $\delta>0$,  with probability $>1-\delta$ over the draw of the training set, for any network $\mathcal{A}=(A^1,A^2,\ldots,A^L)$
		\begin{align}
		&\mathbb{P}\left( \argmax_{j\in\{1,2,\ldots,K\}} (F_L(x))_{j}\neq y  \right)\nonumber \\&\leq \frac{n-\#(I_{\beta,\gamma})}{n}+ \frac{8}{n}+\frac{1536}{\sqrt{n}}R_{\mathcal{A}}\left[\log_2(32\Gamma n^2+7\bar{W}n)\right]^{\frac{1}{2}}\log(n)+3\sqrt{\frac{\log(\frac{2}{\delta })}{2n}}+\sqrt{\log\left(\frac{2}{\gamma n}\right)}\\
		&+   3\sqrt{\frac{1}{n}\sum_{l=1}^L\log\left(2+\sup_{i}\|x_i\|_{\mathcal{L}_0}\right) +\log\left(2+\frac{\|(A-M)^\top\|_{2,1}}{L}\right)+ \log\left(2+\frac{\|A\|_{\mathcal{L}_l^*}}{L}\right)    }     ,
		\end{align}

		where $$I=\left\{ i\leq n: (f(x_i) )_{y_i}-\max_{j\neq y_i}( (f(x_i) )_{j})>\gamma       \right\},$$

		\begin{align}
		R_{\mathcal{A}}&:=\frac{1}{\gamma}L\left(\sup_{i}\|x_i\|_{\mathcal{L}_0}+1\right)\left[\sum_{l=1}^L \left((\|(A-M)^\top\|_{\Fr}+\frac{1}{L})\sqrt{k_l}\prod_{i\neq l}\left(\|A\|_{\mathcal{L}_i^*}+\frac{1}{L}\right) \right)^{2}\right]^{1/2}, \quad \text{and} \nonumber\\
		\Gamma&:=\left(\sup_{i}\|x_i\|_{\mathcal{L}_0}+1\right)\max_{l=1}^L\left(O_{l-1}m_l\left((\|(A-M)^\top\|_{2,1}+\frac{1}{L})\prod_{i\neq l}\left(\|A\|_{\mathcal{L}_i^*}+\frac{1}{L}\right) \right)\right),
		\end{align}

		where $$I=\left\{ i\leq n: (f(x_i) )_{y_i}-\max_{j\neq y_i}( (f(x_i) )_{j})>\gamma   \right\}.$$
		Note that there can be pooling over channels in this case, with the constant $k_l$ being determined after pooling.
		
		Furthermore, if $\kappa_l$ denotes instead the constant such that $|\nbull|_{l,\infty}\leq \sqrt{\kappa_l}|\nbull|_{l}$, the quantity $R_{\mathcal{A}}$ in the above equation can be replaced by

		\begin{align}
		R_{\mathcal{A}}&:= \left(\sup_{i}\|x_i\|_{\mathcal{L}_0}+1\right)\left[\sum_{l=1}^L \left((\|(A-M)^\top\|_{2,1}+\frac{1}{L})\sqrt{\kappa_l}\prod_{i\neq l}\left(\|A\|_{\mathcal{L}_i^*}+\frac{1}{L}\right) \right)^{2/3}\right]^{3/2},\nonumber\\
		\end{align}
		
	\end{theorem}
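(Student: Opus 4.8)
The plan is to follow the proof of Theorem~\ref{firstmilestone} and its post-hoc refinement Theorem~\ref{posthocresult}, but to replace the chaining device Proposition~\ref{Chainingprop} by the simpler Proposition~\ref{Chainingprop-}, which tracks explicit per-layer Lipschitz constants with respect to the abstract activation norms $|\nbull|_{\mathcal{L}_l}$ rather than estimating the magnitudes $B_l(X)$ of intermediate activations. Since we no longer bound the probability that an intermediate patch norm is large, the relevant loss is just the margin ramp loss $\ell(x,y)=\lambda_\gamma\big(\max_{j\neq y}(F_{\mathcal{A}}(x))_j-F_{\mathcal{A}}(x)_y\big)$, which is $2/\gamma$-Lipschitz in $|\nbull|_\infty$; this Lipschitz constant propagates a factor $1/\gamma$ into the covering-number exponent, in contrast with Theorem~\ref{posthocasymptlip}. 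The argument has three steps: a covering-number bound for the loss class under fixed norm constraints, Dudley's entropy integral together with the Rademacher theorem to convert it into a generalisation bound, and Lemma~\ref{unionbound} to make it post hoc in the norm parameters.

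For the first step, fix constraints $\|(A^l-M^l)^\top\|_{2,1}\le a_l$, $\|A^l\|_{\mathcal{L}_l^*}\le\rho_l$, $\sup_i\|x_i\|_{\mathcal{L}_0}\le b_0$ and a margin $\gamma$, and instantiate Proposition~\ref{Chainingprop-} with $|\nbull|_l:=|\nbull|_{\mathcal{L}_l}$. Its hypothesis~(1) (a per-layer Lipschitz bound) holds with the constant $\rho_l$, the operator norm of $\Lambda_{A^l}$ between $|\nbull|_{\mathcal{L}_{l-1}}$ and $|\nbull|_{\mathcal{L}_l}$ times the activation Lipschitz constant; its hypothesis~(2) (a single-layer covering bound) is supplied by Corollary~\ref{onestep}, using $|\nbull|_{\mathcal{L}_l}\le|\nbull|_l$ and the remark preceding Proposition~\ref{Chainingprop-} that the one-layer cover can be taken inside the operator-norm ball at the cost of a factor $2$ in $\epsilon$, so that hypothesis~(1) holds uniformly over the cover. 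Composing with the final ramp loss, which is constant outside the margin band and $2/\gamma$-Lipschitz with respect to $|\nbull|_\infty\le|\nbull|_{\infty,L}\le\sqrt{k_L}|\nbull|_{\mathcal{L}_L}$, and pushing the resulting $\sqrt{k_l}$ (respectively $\sqrt{\kappa_l}$) factors into the per-layer costs, Proposition~\ref{Chainingprop-} carries out the Lagrange optimisation over the split $\epsilon_l=\epsilon\alpha_l/\prod_{i>l}\rho_i$ internally and yields $\log\mathcal{N}(\mathcal{H},\epsilon)\lesssim R_{\mathcal{A}}^2\log_2(\Gamma n/\epsilon+\bar{W}n)/\epsilon^2$ with $R_{\mathcal{A}}$ and $\Gamma$ of the stated form.

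For the second step, substitute this into Dudley's entropy integral (Theorem~\ref{Dudley1}) with $\alpha=1/n$ to get $\hat{\rad}_n(\ell)\lesssim R_{\mathcal{A}}\sqrt{\log_2(\Gamma n^2+\bar{W}n)}\,\log(n)/\sqrt{n}$, then apply the Rademacher theorem (Theorem~\ref{rademachh}) to obtain, with probability $\ge1-\delta$ and under the fixed constraints, $\mathbb{P}(\argmax_j F_L(x)_j\neq y)\le(n-\#I_{\beta,\gamma})/n+O(1/n)+O\big(R_{\mathcal{A}}\sqrt{\log(\cdots)}\,\log n/\sqrt{n}\big)+3\sqrt{\log(2/\delta)/(2n)}$, exactly as in Theorem~\ref{firstmilestone}. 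For the third step, apply Lemma~\ref{unionbound} treating $\sup_i\|x_i\|_{\mathcal{L}_0}$, each $\|(A^l-M^l)^\top\|_{2,1}$ and each $\|A^l\|_{\mathcal{L}_l^*}$ as monotone ``$N_i$'' parameters with offsets $\beta_i=1/L$ (this produces the $+1/L$ terms inside $R_{\mathcal{A}}$ and $\Gamma$ and the $\sum_l\log(2+\cdots)$ term on the last line) and $\gamma$ as a decreasing variable discretised at scale $n$ (the case $\gamma\le1/(2n)$ being vacuous since the right-hand side then exceeds $1$); this yields the $\sqrt{\log(2/(\gamma n))}$ term. The two displayed forms of $R_{\mathcal{A}}$ are obtained by choosing, in the one-layer step, either the pure-$L^2$ bound of Proposition~\ref{suplinn} paired with the Jensen-loosened $\sum_l(\cdot)^2$ branch of Proposition~\ref{Chainingprop-} (which permits pooling across channels and gives the factor $\sqrt{k_l}$ and the leading $L$), or the $L^{2,1}$ bound of Proposition~\ref{suplin} paired with the tight $(2/3,3/2)$ branch together with $|\nbull|_{l,\infty}\le\sqrt{\kappa_l}|\nbull|_l$.

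The step I expect to be the main obstacle is the bookkeeping with the three families of norms. The one-layer covers of Corollary~\ref{onestep} are naturally measured as a maximum over patches of a channelwise $\ell^2$ norm (essentially $|\nbull|_{\infty,l}$), whereas Proposition~\ref{Chainingprop-} needs both the covers and the Lipschitz constants to live in the single family $|\nbull|_{\mathcal{L}_l}$, and the ramp loss is Lipschitz only in $|\nbull|_\infty$. Closing the chain requires inserting the inequalities $|\nbull|_{\mathcal{L}_l}\le|\nbull|_l$ and $|\nbull|_\infty\le|\nbull|_{\infty,l}\le\sqrt{k_l}|\nbull|_{\mathcal{L}_l}$ (or their $\kappa_l$ analogues) in precisely the right places, and checking that the operator-norm constraint $\|A^l\|_{\mathcal{L}_l^*}\le\rho_l$ can be imposed on the one-layer cover itself, so that the composite Lipschitz bounds used in the chaining are valid cover-element by cover-element. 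Everything else is a routine repetition of the arguments already carried out for Theorems~\ref{firstmilestone}, \ref{posthocresult} and~\ref{posthocresultspecs}.
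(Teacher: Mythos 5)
Your proposal is correct and follows essentially the route the paper intends: the paper gives no explicit proof of Theorem~\ref{posthocresultNORMS} beyond the remark ``Using this, we obtain similarly,'' and your reconstruction supplies exactly the intended steps --- Corollary~\ref{onestep} for the one-layer cover (with the operator-norm constraint imposed on cover elements via the internal/external covering remark), Proposition~\ref{Chainingprop-} for the chaining with explicit per-layer Lipschitz constants, composition with the $2/\gamma$-Lipschitz ramp loss, Dudley plus the Rademacher theorem, and Lemma~\ref{unionbound} with offsets $1/L$ and the $\gamma$-discretisation at scale $n$ as in Theorem~\ref{posthocresultspecs}. Your identification of the two branches (pure-$L^2$ with the Jensen-loosened $\sum(\cdot)^2$ form versus $L^{2,1}$ with the $(2/3,3/2)$ form and the $\kappa_l$ constants) and of the norm-bookkeeping as the delicate point also matches the paper's setup.
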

	
	After passing to the asymptotic regime (taking the choice $|\nbull|_{\mathcal{L}_L}=|\nbull|_{\infty}$ so that $\|\tilde{A}^L\|_{\mathcal{L}_L^*}=\max_{i}\|A^L_{i,\nbull}\|_{2}$):
	
	\begin{theorem}	
		For training and testing points  $(x,y),(x_1,y_1),(x_2,y_2),\ldots,(x_n,y_n)$ as usual drawn iid from any probability distribution over $\mathbb{R}^d\times\{1,2,\ldots,K\}$,	with probability at least $1-\delta$, every network $F_{\mathcal{A}}$ with weight matrices $\mathcal{A}=\{A^1,A^2,\ldots,A^L\}$ and every margin $\gamma>0$ satisfy:
		\begin{align}
		\mathbb{P}\left(\argmax_j(F_{\mathcal{A}}(x)_j)\neq y\right)&\leq \widehat{R}_\gamma(F_{\mathcal{A}})+\widetilde{\mathcal{O}}\left(   \frac{R_{\mathcal{A}}}{\gamma \sqrt{n}}\log(\bar{W})+\sqrt{\frac{ \log(1/\delta) }{ n }  }\right),
		\end{align}
		where  $W$ is the maximum number of neurons in a single layer (after pooling) and  \begin{align}
		\label{fuyan}
		R_{\mathcal{A}}:=&\frac{1}{\gamma}L\left(\rho_L\max_{i}\|A^L_{i,\nbull}\|_{2}\prod_{l=1}^{L-1} \rho_l\|\tilde{A}^l\|_{\mathcal{L}_{l^*}}\right)  \left( \sum_{l=1}^{L-1}\frac{k_l\|(A^l-M^l)^{\top}\|_{\Fr}^{2}}{\|\tilde{A}^l\|_{\mathcal{L}_{l^*}}^{2}}  +\frac{\|A^L\|_{2}^{2}}{\max_{i}\|A^L_{i,\nbull}\|^{2}_{2}}\right)^{\frac{1}{2}},
		\end{align}
		$A^L_{i,\nbull}$ denotes the $i$'th row of $A^L$.
		Furthermore, if $\kappa_l$ denotes instead the constant such that $|\nbull|_{l,\infty}\leq \sqrt{\kappa_l}|\nbull|_{l}$, the quantity $R_{\mathcal{A}}$ in the above equation can be replaced by
		
		\begin{align}
		R_{\mathcal{A}}:=&\frac{1}{\gamma}\left(\rho_L\max_{i}\|A^L_{i,\nbull}\|_{2}\prod_{l=1}^{L-1} \rho_l\|\tilde{A}^l\|_{\mathcal{L}_{l^*}}\right)  \left( \sum_{l=1}^{L-1}\frac{\kappa_l^{1/3}\|(A^l-M^l)^{\top}\|_{2,1}^{2/3}}{\|\tilde{A}^l\|_{\mathcal{L}_{l^*}}^{2/3}}  +\frac{\|A^L\|_{2}^{2/3}}{\max_{i}\|A^L_{i,\nbull}\|^{2/3}_{2}}\right)^{\frac{3}{2}},
		\end{align}
	\end{theorem}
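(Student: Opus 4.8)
The plan is to obtain this statement as the $\widetilde{\mathcal{O}}$-reduction of the explicit post-hoc bound in Theorem~\ref{posthocresultNORMS}; all the substantive work is upstream (the weight-sharing-aware $L^\infty$ covering number of one convolutional layer in Propositions~\ref{suplinn},~\ref{suplin} and Corollary~\ref{onestep}, the layer chaining of Proposition~\ref{Chainingprop-}, and the post-hoc union bound of Lemma~\ref{unionbound}), so what remains is essentially bookkeeping. First I would instantiate Theorem~\ref{posthocresultNORMS} with the auxiliary norms announced before the statement: on the output space take $|\nbull|_{\mathcal{L}_L}=|\nbull|_{\infty}$, so that the Lipschitz constant of the last, fully connected layer with respect to $|\nbull|_{\mathcal{L}_{L-1}}$ and $|\nbull|_{\infty}$ is exactly $\|\tilde A^L\|_{\mathcal{L}_L^*}=\max_i\|A^L_{i,\nbull}\|_2$ (because $|A^L z|_\infty=\max_i|\langle A^L_{i,\nbull},z\rangle|\le\max_i\|A^L_{i,\nbull}\|_2\,|z|_2$); on the internal layers keep $|\nbull|_{\mathcal{L}_l}=|\nbull|_l$, whose comparison with $|\nbull|_{\infty,l}$ fixes the constants $k_l$ (respectively $\kappa_l$ for the second variant), and let $\rho_l$ be the Lipschitz constants of the pooling/activation maps $G_l$. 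With the choice $B_l(X)=\max(\max_{i\le n}|F^{0\to l}_{\mathcal{A}}(x_i)|_l,1)$ permitted there, the index set reduces to $I=\{i:\ (f(x_i))_{y_i}-\max_{j\ne y_i}(f(x_i))_j>\gamma\}$, so $\tfrac{n-\#(I)}{n}=\widehat R_\gamma(F_{\mathcal{A}})$ and the empirical-risk term is identified.

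Next I would discard the lower-order pieces. Since $\widetilde{\mathcal{O}}$ is read as the norms grow, we may assume $\|\tilde A^l\|_{\mathcal{L}_l^*}\ge 1$ and $\sup_i\|x_i\|_{\mathcal{L}_0}\ge1$, so the $+\tfrac1L$ regularisers and $+1$ shifts inside $R_{\mathcal{A}}$ and $\Gamma$ in Theorem~\ref{posthocresultNORMS} cost only constants and drop, turning that $R_{\mathcal{A}}$ into the expression~\eqref{fuyan} (resp.\ its $\kappa_l$-variant). The quantity $\Gamma$ is polynomial in the weight norms and in $\bar W$ with no exponential growth --- crucially $\log\Gamma$ only involves $\sum_l\log\|\tilde A^l\|_{\mathcal{L}_l^*}$, never a product --- hence $\big[\log_2(32\Gamma n^2+7\bar W n)\big]^{1/2}=\mathcal{O}\big(\sqrt{\log(\bar W)+\log n+\textstyle\sum_l\log(\mathrm{norms})}\big)$, which is absorbed into $\widetilde{\mathcal{O}}$ up to the explicit $\log(\bar W)$. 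The term $\tfrac8n$ is $o(R_{\mathcal{A}}/(\gamma\sqrt n))$; $3\sqrt{\log(2/\delta)/(2n)}$ is the advertised $\sqrt{\log(1/\delta)/n}$ contribution; and $\sqrt{\log(2/(\gamma n))}$ is at most a constant once $\gamma>1/(2n)$, the bound being vacuous otherwise.

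The one mildly delicate summand is the last line, namely $3\sqrt{\tfrac1n\sum_l[\log(2+\sup_i\|x_i\|_{\mathcal{L}_0})+\log(2+\|(A^l-M^l)^\top\|_{2,1}/L)+\log(2+\|\tilde A^l\|_{\mathcal{L}_l^*}/L)]}$: bounding $\|(A^l-M^l)^\top\|_{2,1}$ crudely by a polynomial in $\bar W$ times $\|\tilde A^l\|_{\sigma'}$ and $\|M^l\|$ (exactly as in the proof of Theorem~\ref{posthocasymptsimple}), and using $\log t=o(t)$ and $\sqrt t=o(t)$ as $t\to\infty$, each term is $o$ of the corresponding norm already present in $R_{\mathcal{A}}$, so this whole contribution is $\widetilde{\mathcal{O}}(R_{\mathcal{A}}/(\gamma\sqrt n))$ after accounting for the $\sqrt{L}$-type factor coming from the sum under the square root. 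Collecting the remaining main term $\tfrac{1536}{\sqrt n}R_{\mathcal{A}}[\log_2(\cdots)]^{1/2}\log n$ with these reductions then reproduces $\widehat R_\gamma(F_{\mathcal{A}})+\widetilde{\mathcal{O}}\big(\tfrac{R_{\mathcal{A}}}{\gamma\sqrt n}\log(\bar W)+\sqrt{\log(1/\delta)/n}\big)$ with $R_{\mathcal{A}}$ as in~\eqref{fuyan}; running the identical computation with the $\kappa_l$-form of $R_{\mathcal{A}}$ from Theorem~\ref{posthocresultNORMS} yields the second displayed expression.

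I do not anticipate a genuine obstacle here: the hard content --- covering numbers with weight sharing, chaining across layers, Rademacher/Dudley, and the post-hoc splitting over $(\gamma,B_l,a_l,\|\tilde A^l\|_{\sigma'})$ --- is already packaged into Theorem~\ref{posthocresultNORMS}, and this step is pure asymptotic tidying. The single point that really needs to be verified with care is that $\log\Gamma$ is only polylogarithmic in the weight norms, i.e.\ that no product of spectral norms escapes the logarithm; this is precisely the trade-off distinguishing the present norm-based bound from the parameter-counting bound of~\cite{GRRR}.
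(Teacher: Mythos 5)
Your proposal follows exactly the route the paper intends: the theorem is stated immediately after Theorem~\ref{posthocresultNORMS} with the remark that it is obtained by ``passing to the asymptotic regime'' with the choice $|\nbull|_{\mathcal{L}_L}=|\nbull|_{\infty}$ (so that $\|\tilde A^L\|_{\mathcal{L}_L^*}=\max_i\|A^L_{i,\nbull}\|_2$), and your bookkeeping of the lower-order terms mirrors the reductions the paper carries out explicitly in the proofs of Theorems~\ref{posthocasymptlip} and~\ref{posthocasymptsimple}. The proposal is correct and in fact supplies more detail than the paper, which leaves this final $\widetilde{\mathcal{O}}$-reduction implicit.
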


	In particular, if 
	
	\section{Localised analysis with loss function augmentation}
	\label{augmentdetail}

	\begin{proposition}
		\label{Chainingprop1}
		Let $L$ be a natural number and $a_1,\ldots,a_L>0$ be real numbers. Let $\mathcal{V}_0,\mathcal{V}_1,\ldots,\mathcal{V}_{L}$ be $L+1$ finte dimensional vector spaces each endowed with two norms,   $\|\nbull\|_{\infty}$ (the natural $\infty$ norm)  and  $|\nbull|_l$ for $0\leq l\leq L$. 
		Let $B_1,B_2,\ldots,B_L$ be $L$ finite dimensional vector spaces with norms $\|\nbull\|_{1},\|\nbull\|_{2},\ldots,\|\nbull\|_{L}$ and $\mathcal{B}_1,\mathcal{B}_2,\ldots,\mathcal{B}_L$ be the balls of radii $a_1,a_2,\ldots,a_L$ in the spaces $B_1,B_2,\ldots,B_L$ with the norms $\|\nbull\|_{1},\|\nbull\|_{2},\ldots,\|\nbull\|_{L}$ respectively\footnote{The proof works with $\mathcal{B}_1,\mathcal{B}_2,\ldots,\mathcal{B}_L$ being arbitrary sets, but we formulate the problem as above to aid the intuitive comparison with the areas of application of the Proposition.}.

		Suppose also that for each $l\in \{1,2,\ldots,L\}$ we are given an operator $F^l: \mathcal{V}_{l-1}\times B_{l}\rightarrow \mathcal{V}_{l}: (x,A)\rightarrow F^l_{A}(x)$, which is just composed of a linear map $F_{-}^{l}:\mathcal{V}_{l-1}\times B_{l}\rightarrow \mathcal{V}^{-}_{l}$ followed by max and Relu operations incorporated in the activation function $G_l:\mathcal{V}^{-}_{l}\rightarrow \mathcal{V}_{l}$.
		For each $l_1,l_2$ with $l_2>l_1$ and each  $\mathcal{A}^{l_1,l_2}=(A^{l_1+1},\ldots,A^{l_2})\in \mathcal{B}^{l_1,l_2}:=\mathcal{B}_{l_1+1}\times  \mathcal{B}_{l_1+2} \times \ldots  \mathcal{B}_{l_2}$, let us define $$F^{l_1\rightarrow l_2}_{\mathcal{A}^{l_1,l_2}}:\mathcal{V}_{l_1}\rightarrow \mathcal{V}_{l_2}: x\rightarrow F^{l_1\rightarrow l_2}_{\mathcal{A}^{l_1,l_2}}(x)= F^{l_2}_{A^{l_2}}\circ \ldots\circ  F^{l_1}_{A^{l_1}}(x),$$
		and $F_{\mathcal{A}}=F^L_{\mathcal{A}^L}=F^{0\rightarrow L}_{\mathcal{A}^L}.$
		Write similarly $F_{-}^{0\rightarrow l}(X)$ for the preactivations at layer $l$. We assume that an extra component, with index $0$, of $F^{1\rightarrow l}$, computes the minimum distance to a threshold (in the case where there is no pooling, this is the maximum absolute value of any prectivation), so that $$(F^{0\rightarrow l}(X))_0=\min_{(i,j)\in R_l}\left|F_{-}^{0\rightarrow l}(X)_{i}-F^{0\rightarrow l}(X)_j\right|,$$
		where $R_l$ represents the set of pairs of components such that $G_l(x)_j$ potentially depends on the $i^{th}$ component of $x$. We also write $E_l(x_i)$ for $(F^{0\rightarrow l}(x_i))_0$.

		For each  $\mathcal{A}^L=(A^1,A^2,\ldots,A^L)\in \mathcal{B}^L:=\mathcal{B}_1\times  \mathcal{B}_2 \times \ldots,  \mathcal{B}_L$, and for each $l_1,l_2\in \{1,2,\ldots,L\}$, and for each $x\in \mathcal{V}_{0}$ such that $F^{l_1\rightarrow l}_{\mathcal{A}}(x)_0>0$,  the Lipschitz constant of the gradient of $F^{l_1\rightarrow l_2}$ evaluated at $F^{0\rightarrow l_1}(x)$, with respect to the norms $\|\nbull\|_{\infty}$ and $|\nbull|_{l_2}$ is  denoted by $\rho^{\mathcal{A},x}_{l_1\rightarrow l_2}$. The corresponding Lipschitz constant with respect to the norms $\|\nbull\|_{\infty}$ and $\|\nbull\|_{\infty}$ is denoted by $\theta^{\mathcal{A},x}_{l_1\rightarrow l_2}$.

		We suppose the following conditions are satisfied:
		For all $l\in \{1,2,\ldots,L\}$, all $b>0$,  all $z_1,z_2,\ldots,z_n\in\mathcal{V}_{l-1}$ such that $|z_i|_{l-1}\leq b \quad \forall i$ and all $\epsilon>0$, there exists a subset $\mathcal{C}_l(Z,\epsilon)\subset \mathcal{B}_l$ such that
		\begin{align}
		\log(\#\left(\mathcal{C}_l(Z,\epsilon)\right))\leq \frac{C_{l,\epsilon,n}a_l^2b^2}{\epsilon^2},
		\end{align}
		where $C_{l,\epsilon,n}$ is some function of $l,\epsilon,n$
		and, for all $A\in \mathcal{B}_l$, there exists an $\bar{A}\in \mathcal{C}_l(b,\epsilon)$ such that for all $i\leq n$,
		\begin{align}
		\left|F^l_{A}(z_i)-F^l_{\bar{A}}(z_i)\right|_{\infty,l}\leq  \epsilon.
		\end{align}

		For any $0<\epsilon<1$, $b=(b_0,b_{1},b_2,\ldots,b_L)$ such that $b_l\geq 1\quad \forall l$ and $b_L=1$, any set of positive numbers $\rho_1,\rho_2,\ldots,\rho_{L-1}$, any $E_1,E_2,\ldots,E_L$,  and for any $X\in \mathcal{V}_0$ such that $|X|_{0}\leq b_0$, there exists a subset $\mathcal{C}_{\epsilon,b,X}$ of $\mathcal{B}^L$ such that for all $\mathcal{A}=(A^1,A^2,\ldots,A^L)\in \mathcal{B}:=\mathcal{B}^L$ 	there exists a $\bar{\mathcal{A}}\in \mathcal{C}_{\epsilon,b,X}$ such that for all $i\leq n$ such that the following conditions are satisfied:
		\begin{enumerate}
			\item $|\mathcal{F}^l(x_i)|_{l}\leq b_l$  for all $l$
			\item For all $l_1\leq l_2\leq L$, $\rho^{\mathcal{A},x_i}_{l_1\rightarrow l_2}\leq \rho_{l_1}b_{l_2}$.
			\item $E_{l}(x_i)\geq 2 E_{l} \quad \forall l$
			\item For all $l_1\leq l_2\leq L$, $\theta^{\mathcal{A},x_i}_{l_1\rightarrow l_2}\leq E_{l_2}\rho_{l_1}$,	where as usual $E_{l_2}(x_i)=F^{0\rightarrow l_2}(x_i)_0$ denotes the maximum preactivation at layer $l_2$ for input $x_i$,
		\end{enumerate}

		one has
		\begin{align}
		\label{toolll}
		\left|  \mathcal{F}^l_{\mathcal{A}^l}(x_i)-   \mathcal{F}^l_{\bar{\mathcal{A}}^l}(x_i) \right|_{l}&\leq \epsilon \quad \quad (\forall l\leq L \quad \forall i \leq n )\nonumber \\
		\left| \mathcal{F}^l_{\bar{\mathcal{A}}}(x_i)\right|_{l}&\leq 2b_l\quad  \forall l<L\quad \nonumber  \quad \\
		\left|E_{l}^{\bar{\mathcal{A}}}(x_i)\right|&\geq E_l \nonumber \\
		\left|E_{l}^{\mathcal{A}}(x_i)-E_{l}^{\bar{\mathcal{A}}}(x_i)\right|&\leq \epsilon  E_l    \quad \text{and}\nonumber \\
		\Theta^{\bar{\mathcal{A}},x_i}_{l_1\rightarrow l_2}&\leq E_{l_2}\rho_{l_1}\quad \text{and}\quad\rho^{\bar{\mathcal{A}},x_i}_{l_1\rightarrow l_2}\leq \rho_{l_1}b_{l_2}\quad \forall l_1,l_2.
		\end{align}

		Furthermore, we have
		\begin{align}
		\log \#(\mathcal{C}_{\epsilon,b,X})&\leq 4 \left[  \sum_{l=1}^L \left(\frac{C_{l,\epsilon}^{\frac{1}{2}}a_lb_{l-1}\rho_l}{\epsilon}\right)^{\frac{2}{3}} \right]^3
		\leq 4\frac{L^2}{\epsilon^2}  \sum_{l=1}^L \left(\frac{C_{l,\epsilon}^{\frac{1}{2}}a_lb_{l-1}\rho_l}{\epsilon}\right)^{2}.
		\end{align}

	\end{proposition}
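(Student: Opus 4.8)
The plan is to run the same skeleton as the proof of Proposition~\ref{Chainingprop}, with the one essential change that perturbations are propagated through $F^{l_1\to l_2}$ not via a \emph{global} Lipschitz constant but via the \emph{local} linearisation: since ReLU and max are piecewise linear, around each unperturbed activation $F^{0\to l_1}_{\mathcal{A}}(x_i)$ the subnetwork $F^{l_1\to l_2}$ coincides with its gradient $M$, and $\rho^{\mathcal{A},x_i}_{l_1\to l_2}$, $\theta^{\mathcal{A},x_i}_{l_1\to l_2}$ are exactly the operator norms of $M$ in the two relevant pairs of norms. As before, put $\epsilon_l=\epsilon\alpha_l/\rho_l$ with $\alpha_l>0$ and $\sum_{l=1}^L\alpha_l=1$ to be fixed at the end, build the covers $\mathcal{D}_l\subset\mathcal{B}_l$ by induction via $\mathcal{D}_l=\bigcup_{\bar{\mathcal{A}}\in\mathcal{D}_1\times\cdots\times\mathcal{D}_{l-1}}\mathcal{C}_l\bigl(\{F^{0\to l-1}_{\bar{\mathcal{A}}}(x_i):i\le n\},\epsilon_l\bigr)$, and set $\mathcal{C}_{\epsilon,b,X}=\mathcal{D}_1\times\cdots\times\mathcal{D}_L$. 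Given $\mathcal{A}$ satisfying the four listed conditions we select $\bar{\mathcal{A}}$ greedily, at layer $l$ taking $\bar A_l$ in the relevant component cover whose image is closest to that of $A_l$, and prove the displayed conclusions of the proposition by induction on $l$, for all $i\le n$ simultaneously.

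For the induction step, fix $i$ and write $\mathcal{F}^l_{\mathcal{A}}(x_i)-\mathcal{F}^l_{\bar{\mathcal{A}}}(x_i)$ as the telescoping sum over the single-layer swaps $A_j\mapsto\bar A_j$, $j=1,\dots,l$, exactly as in Proposition~\ref{Chainingprop}. The $j$-th term compares two networks sharing the cover parameters on layers $<j$ (so the layer-$(j{-}1)$ input is the common cover activation, which by the inductive hypothesis is within $\epsilon b_{j-1}$ of the true one and keeps a preactivation margin $\ge E_{j-1}$), differing only on layer $j$ (the cover property of $\mathcal{C}_j$ makes the two layer-$j$ outputs differ by at most $\epsilon_j$ in $|\cdot|_{\infty,j}$), and sharing the \emph{true} parameters $A_{j+1},\dots,A_l$ thereafter. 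I then run two parallel estimates of how this layer-$j$ discrepancy propagates through $F^{j\to l}_{A_{j+1},\dots,A_l}$: using that this subnetwork agrees with its linearisation $M$ around $F^{0\to j}_{\mathcal{A}}(x_i)$ on the segment joining the two (and the unperturbed) points, the contribution to $|\cdot|_l$ is $\le\rho^{\mathcal{A},x_i}_{j\to l}\,\epsilon_j\le\rho_j b_l\,\epsilon\alpha_j/\rho_j$ by condition 2, and the contribution to the layer-$\tilde l$ preactivation shift ($\tilde l>j$) is $\le\theta^{\mathcal{A},x_i}_{j\to\tilde l}\,\epsilon_j\le E_{\tilde l}\rho_j\,\epsilon\alpha_j/\rho_j$ by condition 4. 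Summing over $j$ gives $|\mathcal{F}^l_{\mathcal{A}}(x_i)-\mathcal{F}^l_{\bar{\mathcal{A}}}(x_i)|_l\le\epsilon b_l$ (hence $|\mathcal{F}^l_{\bar{\mathcal{A}}}(x_i)|_l\le 2b_l$ by the triangle inequality, $\epsilon<1$ and condition 1; for $l=L$ the choice $b_L=1$ turns this into the stated $\le\epsilon$), and the total layer-$\tilde l$ preactivation perturbation is $\le\epsilon E_{\tilde l}$; since $E_l(x_i)$ is $1$-Lipschitz in the preactivations this yields $|E_l^{\mathcal{A}}(x_i)-E_l^{\bar{\mathcal{A}}}(x_i)|\le\epsilon E_l$, and with condition 3 ($E_l(x_i)\ge2E_l$), $E_l^{\bar{\mathcal{A}}}(x_i)\ge E_l$. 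Finally, since no preactivation is ever moved by as much as $E_{\tilde l}(x_i)$, the cover activations lie in the same linear cell of every $F^{l_1\to l_2}$ as the true ones, so the local Jacobians $M$ coincide, giving $\rho^{\bar{\mathcal{A}},x_i}_{l_1\to l_2}=\rho^{\mathcal{A},x_i}_{l_1\to l_2}\le\rho_{l_1}b_{l_2}$ and likewise for $\theta$. The cardinality bound $\log\#\mathcal{C}_{\epsilon,b,X}\le4\bigl[\sum_{l=1}^L(C_{l,\epsilon}^{1/2}a_lb_{l-1}\rho_l/\epsilon)^{2/3}\bigr]^3$ then follows verbatim from the Lagrange-multiplier optimisation over the $\alpha_l$ done in Proposition~\ref{Chainingprop}.

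The one genuinely delicate point — the only place this proof differs in substance from that of Proposition~\ref{Chainingprop} — is the claim that all hybrid and cover activations remain in the linear cell on which each $F^{l_1\to l_2}$ equals its gradient $M$ around the unperturbed point, so that $\rho^{\mathcal{A},x_i}$ and $\theta^{\mathcal{A},x_i}$ may legitimately be used as though they were Lipschitz constants. This must be threaded through the telescoping induction itself: one shows by a sub-induction on the number of modified layers that after altering the first $j$ layers every intermediate preactivation at any layer $\tilde l$ has moved by at most $\sum_{j'\le j}\theta^{\mathcal{A},x_i}_{j'\to\tilde l}\epsilon_{j'}\le E_{\tilde l}\sum_{j'\le j}\alpha_{j'}\epsilon\le\epsilon E_{\tilde l}<E_{\tilde l}(x_i)$, which keeps every neuron on the correct side of every threshold and so validates the linearisation invoked at the next swap. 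Thus conditions 2 and 4 are precisely what make the two series telescope to $\epsilon b_l$ and $\epsilon E_l$ respectively once $\epsilon_j=\epsilon\alpha_j/\rho_j$, closing the induction and the proof.
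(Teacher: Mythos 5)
Your proof follows essentially the same route as the paper's: the same granularities $\epsilon_l=\epsilon\alpha_l/\rho_l$, the same inductively built product cover $\mathcal{D}_1\times\cdots\times\mathcal{D}_L$, the same telescoping single-layer-swap decomposition run in parallel in the $|\cdot|_{l}$ and $\|\cdot\|_{\infty}$ norms with the no-threshold-crossing invariant carried through the induction to validate using $\rho^{\mathcal{A},x_i}$ and $\theta^{\mathcal{A},x_i}$ as Lipschitz constants, and the identical Lagrange optimisation over the $\alpha_l$ for the cardinality bound. The only quibble is your closing assertion that the local Jacobians of $F_{\mathcal{A}}$ and $F_{\bar{\mathcal{A}}}$ literally coincide (they share an activation pattern but not weight matrices), but the paper's own justification of the final Lipschitz-constant inequalities for $\bar{\mathcal{A}}$ is no more detailed, so this does not affect the comparison.
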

	
	\begin{proof}

		As in the proof of Proposition~\ref{Chainingprop}, for $l=1,\ldots,L$, let $\epsilon_l=\frac{\epsilon \alpha_l}{ \rho_{l}}$, where the $\alpha_l>0$ will be determined later satisfying $\sum_{l=1}^L\alpha_l=1$. And again, for any $X=(x_1,\ldots,x_n)$, we define the covers $\mathcal{D}_l\subset \mathcal{B}_l$ for $l\leq L$  by induction by
		$\mathcal{D}_l=\cup_{\mathcal{A}\in \mathcal{D}_1\times \ldots\times \mathcal{D}_{l-1}}\mathcal{C}(2b_l,\{F^{0\rightarrow l-1}_{\mathcal{A}}(x_i):i\leq n\},\epsilon_l)$. Let us write also $\mathcal{D}:=\mathcal{D}_{1}\times \mathcal{D}_{2}\ldots \times \mathcal{D}_L$, and write $d_l$ for the cardinality of $\mathcal{D}_l$.
		The key is to show that none of the thresholds such as Relu or max change value between $\bar{\mathcal{A}}$ and $\mathcal{A}$, which can be seen from the equations above and by induction: let us suppose that the first four of the five inequalities above hold for layers before $l-1$, and that no threshold phenomenon has occured.
		
		Since no threshold has occured, we have that for all $l_{1}\leq l-1$, (and for all $j\leq n$),
		$$\left|  F^{0\rightarrow l}_{\left(\bar{A}_1,\ldots,\bar{A}_{i-1},A_i,\ldots,A_l\right)}(x_j)-   F^{0\rightarrow l}_{\left(\bar{A}_1,\ldots,\bar{A}_{i},A_{i+1},\ldots,A_l\right)}(x_j) \right|_{l}
		\leq \epsilon_i \rho^{\left(\bar{A}_1,\ldots,\bar{A}_{i-1},A_i,\ldots,A_l\right)}_{i\rightarrow l}    = \epsilon_i \rho^{\mathcal{A}}_{i\rightarrow l}  \leq\rho_{i}\epsilon_i b_l,
		$$
		and
		
		$$\left|  F^{0\rightarrow l}_{\left(\bar{A}_1,\bar{A}_2,\ldots,\bar{A}_{i-1},A_i,\ldots,A_l\right)}(x_j)-   F^{0\rightarrow l}_{\left(\bar{A}_1,\bar{A}_2,\ldots,\bar{A}_{i},A_{i+1},\ldots,A_l\right)}(x_j) \right|_{\infty}
		\leq \rho_{i}\epsilon_i E_{l}.
		$$
		Using this, we obtain as before:
		\begin{align}
		\left|  F^{0\rightarrow l}_{\mathcal{A}}(x_j)-   F^{0\rightarrow l}_{\bar{\mathcal{A}}}(x_j) \right|_{l}&\leq \sum_{i=1}^L	\left|  F^{0\rightarrow l}_{\left(\bar{A}_1,\bar{A}_2,\ldots,\bar{A}_{i-1},A_i,\ldots,A_l\right)}(x_j)-   F^{0\rightarrow l}_{\left(\bar{A}_1,\bar{A}_2,\ldots,\bar{A}_{i},A_{i+1},\ldots,A_i\right)}(x_j) \right|_{l}\nonumber \\
		&\leq \sum_{i=1}^l \rho_i b_l \epsilon_i\leq \epsilon b_l,
		\end{align}
		and similarly
		\begin{align}
		\left|  E_{l}^{\mathcal{A}}(x_j)-   E^{\bar{\mathcal{A}}}_l(x_j) \right|_{l}&\leq \sum_{i=1}^L	\left|  F^{0\rightarrow l}_{\left(\bar{A}_1,\bar{A}_2,\ldots,\bar{A}_{i-1},A_i,\ldots,A_l\right)}(x_j)-   F^{0\rightarrow l}_{\left(\bar{A}_1,\bar{A}_2,\ldots,\bar{A}_{i},A_{i+1},\ldots,A_i\right)}(x_j) \right|_{l}\nonumber \\
		&\leq \sum_{i=1}^l \rho_i E_l\epsilon_i\leq \epsilon E_l.
		\end{align}
		From this, since $E_{l}^{\mathcal{A}}(x_j)\geq 2E_l$ by assumption, we conclude that no threshold is crossed at layer $l$, and by the triangle inequality $E_{l}^{\bar{\mathcal{A}}}(x_j)\geq 2E_l$.
		The second equation $	\left| \mathcal{F}^l_{\bar{\mathcal{A}}}(x_i)\right|_{l}\leq 2b_l$ also follows by the triangle inequality.

		By induction, we have proved that no Relu or max pooling threshold was crossed at any layer and the first four inequalities hold. The last inequalities follow from the assumption and the fact that no treshold occurs.
		
	\end{proof}

	\begin{theorem}
		let $b=(b_0,b_{1},b_2,\ldots,b_L)$ such that $b_l\geq 1\quad \forall l$ and $b_L=1$, s $\rho_1,\rho_2,\ldots,\rho_{L-1}$, any $E_1,E_2,\ldots,E_L>0$, $a_1,\ldots,a_L>0$ be given. For any $\delta>0$, with probability $>1-\delta$, every network satisfying $\|(A^l-M^l)^{\top}\|_{2,1}\leq a_l$ for all $l$ satisfies

		\begin{align}
		&\mathbb{P}\left( \argmax_{j\in\{1,2,\ldots,K\}} (F_L(x))_{j}\neq y  \right)\nonumber \\&\leq \frac{n-\#(I)}{n}+ \frac{8}{n}+\frac{1536}{\sqrt{n}}R\left[\log_2(32\Gamma n^2+7\bar{W}n)\right]^{\frac{1}{2}}\log(n)+3\sqrt{\frac{\log(\frac{2}{\delta })}{2n}},
		\end{align}
		where 	\begin{align}
		R^{2/3}&:= \sum_{l=1}^L \left(  a_lb_{l-1} \rho_{l+}         \right)^{2/3}, \quad \text{and} \nonumber\\
		\Gamma&:=\max_{l=1}^L\left(b_{l-1}a_lO_{l-1}m_l\rho_{l+}\right),
		\end{align}
		$I$ is the set of $i\in \{1,2,\ldots,n\}$ such that:
		\begin{enumerate}
			\item $|\mathcal{F}^l(x_i)|_{l}\leq b_l$  for all $l$
			\item For all $l_1\leq l_2\leq L$, $\rho^{\mathcal{A},x_i}_{l_1\rightarrow l_2}\leq \rho_{l_1}b_{l_2}$.
			\item $E_{l}(x_i)\geq 3 E_{l} \quad \forall l$
			\item For all $l_1\leq l_2\leq L$, $\theta^{\mathcal{A},x_i}_{l_1\rightarrow l_2}\leq E_{l_2}\rho_{l_1}$,	where as usual $E_{l_2}(x_i)=F^{0\rightarrow l_2}(x_i)_0$ denotes the maximum preactivation at layer $l_2$ for input $x_i$,
			\item $F(x_i)_{y_i}-\max_jF(x)_j\geq \gamma$,
		\end{enumerate}
		
		and 	\begin{align}
		R^{2/3}&:= \sum_{l=1}^L \left(  a_lb_{l-1} \rho_{l}         \right)^{2/3}, \quad \text{and} \nonumber\\
		\Gamma&:=\max_{l=1}^L\left(b_{l-1}a_lO_{l-1}m_l\rho_{l}\right).
		\end{align}
		
	\end{theorem}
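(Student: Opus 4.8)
The plan is to repeat, almost verbatim, the argument that proves Theorem~\ref{firstmilestone}, but with the global-Lipschitz chaining device Proposition~\ref{Chainingprop} replaced by its localised counterpart Proposition~\ref{Chainingprop1}; this is precisely what converts the worst-case spectral surrogates into the per-input gradient-Lipschitz quantities $\rho^{\mathcal{A},x_i}_{l_1\to l_2}$ and $\theta^{\mathcal{A},x_i}_{l_1\to l_2}$ that appear in the definition of $I$. So the proof splits into: (a) build an augmented loss whose expectation dominates the misclassification probability and whose empirical average is at most $n-\#(I)$; (b) bound the covering number of the induced loss class via Proposition~\ref{Chainingprop1} chained with the one-layer Corollary~\ref{onestep}; (c) invoke Dudley's entropy integral and the Rademacher generalisation bound exactly as before.

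First I would introduce an augmented loss $\ell(x_i,y_i)$ as a maximum of ramp losses $\lambda_\theta$, with one ramp per failure mode in the definition of $I$: a ramp in $|F^{0\to l}(x_i)|_l-2b_l$ for each $l<L$ (oversized hidden patch norms), a ramp in $\rho^{\mathcal{A},x_i}_{l_1\to l_2}-\rho_{l_1}b_{l_2}$, a ramp in $3E_l-E_l(x_i)$ (small minimum preactivation), a ramp in $\theta^{\mathcal{A},x_i}_{l_1\to l_2}-E_{l_2}\rho_{l_1}$, and the usual margin ramp $\lambda_\gamma(\max_{j\neq y_i}F(x_i)_j-F(x_i)_{y_i})$. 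By construction $\ell(x_i,y_i)=1$ whenever $i\notin I$, so $\sum_i\ell(x_i,y_i)\le n-\#(I)$, and $\ell$ dominates the $0/1$ loss, hence $\mathbb{P}(\argmax_j(F_L(x))_j\neq y)\le\mathbb{E}[\ell(x,y)]$.

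Next I would bound the covering number of $\mathcal{H}=\{(x,y)\mapsto\ell(x,F_{\mathcal{A}}(x))\}$. Applying Proposition~\ref{Chainingprop1} layerwise with the $\|\cdot\|_{2,1}$ one-layer cover of Corollary~\ref{onestep} (and output norm $|\cdot|_{\infty,l}$) produces a cover of log-cardinality $\widetilde{\mathcal{O}}(R^2/\epsilon^2)$ with the stated $R,\Gamma$, and—crucially—guarantees that on every $x_i$ meeting the (slightly strengthened, $3E_l$ rather than $2E_l$) conditions of $I$, the cover element $\bar{\mathcal{A}}$ crosses no ReLU or max-pooling threshold relative to $\mathcal{A}$. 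On those indices each subnetwork $F^{l_1\to l_2}$ therefore acts as the fixed affine map whose gradient controls $\rho^{\mathcal{A},x_i}$ and $\theta^{\mathcal{A},x_i}$, so every ramp argument moves by at most $O(\epsilon)$ (with the harmless factor $2$ for the margin term and $1$ for the $E_l$ term), giving an $O(\epsilon)$-cover of $\mathcal{H}$ of the claimed size. Feeding this into Dudley's entropy integral (Theorem~\ref{Dudley1}) with $\alpha=1/n$, and then into the Rademacher theorem (Theorem~\ref{rademachh}), reproduces line for line the computation at the end of the proof of Theorem~\ref{firstmilestone} and yields the stated inequality; the two displayed forms of $R$ and $\Gamma$ differ only by the bookkeeping identity $\rho_{l+}=\max_{i\ge l}\rho^{\mathcal{A}}_{l\to i}/b_i$ together with $\rho^{\mathcal{A}}_{l_1\to l_2}\le\prod_{l=l_1+1}^{l_2}\rho_l$-type bounds, which is routine.

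The main obstacle is checking that the five-fold maximum is coverable by a \emph{single} family of layerwise $L^\infty$-type covers: the patch-norm and margin ramps are controlled in the $|\cdot|_{\infty,l}$ (hence $|\cdot|_l$) norms, whereas the preactivation-threshold ramp and the $\theta$-ramp require control in the plain $\|\cdot\|_\infty$ norm, and the \emph{same} cover must simultaneously keep $E_l^{\bar{\mathcal{A}}}(x_i)\ge E_l$, keep $|E_l^{\mathcal{A}}(x_i)-E_l^{\bar{\mathcal{A}}}(x_i)|\le\epsilon E_l$, and preserve the bounds on $\rho^{\bar{\mathcal{A}},x_i}$ and $\theta^{\bar{\mathcal{A}},x_i}$. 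This is exactly the content of the second conclusion block of Proposition~\ref{Chainingprop1}, so once that proposition is invoked there is nothing further to verify beyond constant bookkeeping and the passage between the $\rho_{l+}$ and $\rho_l$ forms.
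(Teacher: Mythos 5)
Your proposal follows essentially the same route as the paper's proof: an augmented loss built as a maximum of ramp terms over the failure modes defining $I$, a covering-number bound obtained by feeding the one-layer cover of Corollary~\ref{onestep} into the localised chaining Proposition~\ref{Chainingprop1} (whose conclusions are exactly what guarantee that no ReLU/max-pooling threshold is crossed and that the $\rho^{\bar{\mathcal{A}},x_i}$, $\theta^{\bar{\mathcal{A}},x_i}$ and $E_l^{\bar{\mathcal{A}}}$ bounds are preserved on the cover), and then Dudley's entropy integral plus the Rademacher theorem as in Theorem~\ref{firstmilestone}. The only cosmetic difference is that the paper handles the $\rho$- and $\theta$-conditions with a hard indicator rather than ramps, which changes nothing in the argument.
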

	
	\begin{proof}
		We apply the Rademacher theorem to the loss function:

		\begin{align}
		&l(x,y)=\max\bigg[\sup_{l=1}^{L-1} \lambda_{B_l}\left(\left|F^{0\rightarrow l}(x)\right|_{l}-2B_{l}\right),\lambda_{\gamma}\left(\max_{j\neq y}(F_{\mathcal{A}}(x))_{j}-F_{\mathcal{A}}(x)_{y}\right) ,\nonumber \\&1\left(\exists l_1,l_2: \theta^{\mathcal{A},x_i}_{l_1\rightarrow l_2}> E_{l_2}\rho_{l_1} \lor   \rho^{\mathcal{A},x_i}_{l_1\rightarrow l_2}> \rho_{l_1}b_{l_2} \right), \max_l\left(\lambda_{E_{l}}(2E_l-E_{l}(x)) \right)\bigg]
		\end{align}

		Writing $\mathcal{H}$ for the function class defined by $l(x,F^{\mathcal{A}}(x))$ for $F^{\mathcal{A}}$ satisfying the conditions of the Theorem, since $Y\mapsto\max_{j\neq y}(Y)_{j}-Y_{y}$ is $2$ Lipschitz with respect to the $l^\infty$ norm,  and $l(x,y)=1$ for any $x$ such that there exists $l$ such that $|F^{0\rightarrow l}(x)|_{l}\geq 2b_l$ or for any $x$ that doesnt satisfy the conditions in Theorem~\ref{Chainingprop1},   Propositions~\ref{Chainingprop1} and~\ref{onestep} guanrantee that the covering number of $\mathcal{H}$ satisfies
		\begin{align}
		\label{covmassive}
		\log(\mathcal{N}(\mathcal{H},\epsilon))&\leq 4\times 64\times 2^2 \left[ \sum_{l=1}^L \left(   \frac{a_l(2b_{l-1})}{\epsilon } \rho_{l+}   \log_2\left(8\frac{a_l(2b_{l-1})nm_lO_{l-1}}{(\epsilon/\rho_{l+})}+7O_{l-1}m_ln\right)      \right)^{2/3}\right]^3\nonumber \\
		&\leq 2^{12} R^2 \log_2(32\Gamma n/\epsilon+7\bar{W}n)
		\end{align}
		
		Applying the Rademacher Theorem~\ref{rademachh}, we now obtain

		\begin{align}
		\label{initialmassive}
		&\mathbb{P}\left( \argmax_{j\in\{1,2,\ldots,K\}} (F_L(x))_{j}\neq y  \right)\leq \mathbb{E}\left( l(x,y) \right)\nonumber\\
		&\leq \frac{\sum_{i=1}^nl(x_i,y_i)}{n} +3\sqrt{\frac{\log(\frac{2}{\delta})}{2n}}+2\hat{\rad}_n(l(x,y) )\nonumber \\
		&\leq \frac{n-\#(I)}{n}++3\sqrt{\frac{\log(\frac{2}{\delta})}{2n}}+2\hat{\rad}_n(l(x,y) ).
		\end{align}
		Similarly to the previous proofs, plugging inequality~\eqref{covmassive} into~\eqref{initialmassive} and using Dudley's entropy formula yields the desired result.
	\end{proof}

	Again, by using Lemma~\ref{unionbound}, we can turn this result into:
	
	\begin{theorem}
		\label{posthocresultmassive}
		Suppose we have a $K$ class classification problem and are given $n$ i.i.d. observations $(x_1,y_1),(x_2,y_2),\ldots,(x_n,y_n)\in \mathbb{R}^{U_0\times w_0}\otimes \{1,2,\ldots,K\}$ drawn from our ground truth distribution $(X,Y)$, as well as a fixed architecture as described in Section~\ref{precise}, where we assume the last layer is fully connected and has width $K$ and corresponds to scores for each class. For any $\delta>0$,  with probability $>1-\delta$ over the draw of the training set, for any network $\mathcal{A}=(A^1,A^2,\ldots,A^L)$we have
		\begin{align}
		\label{mainmassive}
		&\mathbb{P}\left( \argmax_{j\in\{1,2,\ldots,K\}} (F_L(x))_{j}\neq y  \right)\nonumber \\&\leq \frac{n-\#(I_{\beta,\gamma})}{n}+ \frac{8}{n}+\frac{1536}{\sqrt{n}}R_{\mathcal{A}}\left[\log_2(32\Gamma n^2+7\bar{W}n)\right]^{\frac{1}{2}}\log(n)+3\sqrt{\frac{\log(\frac{2}{\delta })}{2n}}\nonumber\\
		&+   3\sqrt{\frac{1}{n}\sum_{l=1}^L\log\left(2+\frac{B_{l-1}(X)}{L}\right) +\log\left(2+\frac{\|(A-M)^\top\|_{2,1}}{L}\right)+ \log\left(2+\frac{\rho^{\mathcal{A}}_{l}}{L}\right)    } ,
		\end{align}
		
		where $\rho^{\mathcal{A}}_{l}$, $E_l$ and $B_{l}(X)\geq 1$ can be chosen in any way that depends on both $\mathcal{A}$ and $X$, and $I$ is then defined as the set of indices $i\leq n$ such that
		\begin{enumerate}
			\item $|\mathcal{F}^l(x_i)|_{l}\leq b_l$  for all $l$
			\item For all $l_1\leq l_2\leq L$, $\rho^{\mathcal{A},x_i}_{l_1\rightarrow l_2}\leq \rho_{l_1}b_{l_2}$.
			\item $E_{l}(x_i)\geq 3 E_{l} \quad \forall l$
			\item For all $l_1\leq l_2\leq L$, $\theta^{\mathcal{A},x_i}_{l_1\rightarrow l_2}\leq E_{l_2}\rho_{l_1}$,	where as usual $E_{l_2}(x_i)=F^{0\rightarrow l_2}(x_i)_0$ denotes the maximum preactivation at layer $l_2$ for input $x_i$,
			\item $F(x_i)_{y_i}-\max_jF(x)_j\geq \gamma$.
		\end{enumerate}
		
		The particular choice $E_l=\frac{1}{3}\max_i E^{0\rightarrow l}(x)$, $B_l(X)=\max\left(\max_{i\leq n}\left|F_{0\rightarrow l}^{\mathcal{A}}(x_i)\right|_l,1\right)$ and \\ $\rho^{\mathcal{A}}_l=\max\left(\max_{i}\max_{\tilde{l}\geq l}\frac{\rho^{\mathcal{A},x_i}_{l_1\rightarrow l_2}}{b_{l_2}}, \max_{i}\max_{\tilde{l}\geq l}\frac{\theta^{\mathcal{A},x_i}_{l_1\rightarrow l_2}}{E_{l_2}}\right)$ yields   $$I=\left\{ i\leq n: (f(x_i) )_{y_i}-\max_{j\neq y_i}( (f(x_i) )_{j})>\gamma   \right\}.$$
		In the above formula,
		\begin{align}
		R_{\mathcal{A}}^{2/3}&:= \sum_{l=1}^L \left(  (\|(A-M)^\top\|_{2,1}+\frac{1}{L})(B_{l-1}(X)+\frac{1}{L}))(\rho^{\mathcal{A}}_{l+}  +\frac{1}{L})     \right)^{2/3}, \quad \text{and} \nonumber\\
		\Gamma&:=\max_{l=1}^L\left((B_{l-1}(X)+\frac{1}{L})(\|(A-M)^\top\|_{2,1}+\frac{1}{L})O_{l-1}m_l(\rho^{\mathcal{A}}_{l}+\frac{1}{L})\right).
		\end{align}

	\end{theorem}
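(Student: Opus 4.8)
The plan is to mirror exactly the passage from Theorem~\ref{firstmilestone} to Theorem~\ref{posthocresult} (and the analogous passage producing Theorem~\ref{posthocresultspecs}): one applies the union-bound device of Lemma~\ref{unionbound} to the fixed-parameter, loss-augmented guarantee proved immediately above, whose chaining and Dudley steps have already been carried out there via Propositions~\ref{Chainingprop1} and~\ref{onestep}. In that guarantee the budgets $a_l=\|(A^l-M^l)^\top\|_{2,1}$, the activation-norm budgets $b_0,\dots,b_{L-1}$, the gradient-Lipschitz budgets $\rho_1,\dots,\rho_{L-1}$, the preactivation-margin budgets $E_1,\dots,E_L$ and the classification margin $\gamma$ enter only through the monotone quantities $R$, $\Gamma$ and through the conditions (1)--(5) defining the index set $I$. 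First I would cast each of these as a statistic $N_i(\mathcal N)$ or $\gamma_i(\mathcal N)$ of Lemma~\ref{unionbound}: the $a_l$'s, the $B_{l-1}(X)$'s and the aggregated constants $\rho^{\mathcal A}_l$ are treated as increasing ``$N_i$''-type parameters, while $\gamma$ and the $E_l$'s (being thresholds, they only shrink $I$ when decreased) are treated as decreasing ``$\gamma_i$''-type parameters. Setting every buffer $\beta_i$ equal to $1/L$ and every discretisation count $\kappa_i$ equal to $n$, the union bound then reproduces exactly the claimed bound, in particular the additive correction term of the form $\sqrt{\bigl(\sum_l\log(2+\cdot)\bigr)/n}$ and the ``$+1/L$''-padded definitions of $R_{\mathcal A}$ and $\Gamma$.

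The one genuinely new bookkeeping point -- and the step I expect to require the most care -- is that $\rho^{\mathcal A}_l$ and the $E_l$'s appear with \emph{both} signs: decreasing an $E_l$ enlarges $I$ (good) but simultaneously inflates $\rho^{\mathcal A}_l$ through the ratio $\theta^{\mathcal A,x_i}_{l_1\to l_2}/E_{l_2}$, and hence inflates $R_{\mathcal A}$ and $\Gamma$ (bad). This is handled exactly as in the footnote to the proof of Theorem~\ref{posthocresultspecs}: one uses the mild strengthening of Lemma~\ref{unionbound} in which a factor that is monotone in a given parameter is evaluated at whichever endpoint of that parameter's discretisation cell maximises it, so that the $I$-membership conditions and the complexity estimates hold simultaneously at the padded values. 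Because $\rho^{\mathcal A}_l$ is a legitimate statistic of the pair $(\mathcal A,X)$ in the sense explicitly permitted in Lemma~\ref{unionbound} -- it is a maximum over the training inputs of the explicitly computable local quantities $\rho^{\mathcal A,x_i}$ and $\theta^{\mathcal A,x_i}$ -- nothing else in the argument changes.

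Finally I would check that the canonical choices displayed in the statement make conditions (1)--(4) of $I$ vacuous: taking $B_l(X)=\max\!\bigl(\max_{i\le n}\left|F_{0\to l}^{\mathcal A}(x_i)\right|_l,\,1\bigr)$ handles (1); taking $\rho^{\mathcal A}_l$ to be the maximum over the training set of $\rho^{\mathcal A,x_i}_{l\to\tilde l}/b_{\tilde l}$ and of $\theta^{\mathcal A,x_i}_{l\to\tilde l}/E_{\tilde l}$ handles (2) and (4); and the displayed choice of $E_l$ handles (3). For these choices $I$ therefore collapses to $\{i:\,F(x_i)_{y_i}-\max_{j\ne y_i}F(x_i)_j>\gamma\}$, so that $n-\#(I)=n\,\widehat R_\gamma(F_{\mathcal A})$ and~\eqref{mainmassive} follows. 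As in the spectral-norm variant, the bound does not tend to $0$ as $\gamma\to0$, since the ramp losses $\lambda_{B_l}$ still charge the risk that some intermediate activation-patch norm exceeds its budget; in contrast to that variant, however, no product of spectral norms appears, because the cross-layer interactions are now entirely absorbed into the \emph{empirical} local Lipschitz constants $\rho^{\mathcal A,x_i}$ and $\theta^{\mathcal A,x_i}$.
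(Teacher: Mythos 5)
Your proposal is correct and follows essentially the same route as the paper, which derives Theorem~\ref{posthocresultmassive} from the preceding fixed-parameter, loss-augmented theorem by the same application of Lemma~\ref{unionbound} used for Theorems~\ref{posthocresult} and~\ref{posthocresultspecs} (including the footnote variant handling parameters that enter with both signs, which you correctly single out as the only delicate point). Your verification that the canonical choices of $B_l(X)$, $E_l$ and $\rho^{\mathcal A}_l$ collapse $I$ to the margin set matches the paper's treatment; in fact you supply more detail than the paper, whose proof of this theorem is essentially the single sentence preceding its statement.
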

	
	After reducing to the $\tilde{O}$ notation, we obtain:
	
	\begin{theorem}
		\label{posthocasymptmassive}
		For training and testing points  $(x,y),(x_1,y_1),(x_2,y_2),\ldots,(x_n,y_n)$ as usual drawn iid from any probability distribution over $\mathbb{R}^d\times\{1,2,\ldots,K\}$,	with probability at least $1-\delta$, every network $F_{\mathcal{A}}$ with weight matrices $\mathcal{A}=\{A^1,A^2,\ldots,A^L\}$ and every margin $\gamma>0$ satisfy:
		\begin{align}
		\mathbb{P}\left(\argmax_j(F_{\mathcal{A}}(x)_j)\neq y\right)&\leq \frac{n-\#(I)}{n}+\widetilde{\mathcal{O}}\left(   \frac{(R_{\mathcal{A}}+L)}{ \sqrt{n}}\log(\bar{W})+\sqrt{\frac{ \log(1/\delta) }{ n }  }\right),
		\end{align}
		where  $W$ is the maximum number of neurons in a single layer (after pooling) and  \begin{align}
		&R_{\mathcal{A}}:= \left[\sum_{l=1}^{L-1}\left(B_{l-1}(X)\|A^l-M^l\|_{2,1}\rho^{\mathcal{A}}_l\right)^{2/3}+\left(\frac{B_{L-1}(X)}{\gamma}\|A^L-M^L\|_{\Fr}\right)^{2/3}\right]^{3/2},
		\end{align}
		\normalsize
		where for any layer $l_1$, 	$B_{l_1}(X):=\max_{i}\left|F^{0\rightarrow l_l}(x_i)\right|_{l_1}$ denotes the maximum $l^2$ norm of any convolutional patch of the layer $l_1$ activations, over all inputs. $B_L(X)=\gamma$,
		$E_l=\frac{1}{3}\max_i E^{0\rightarrow l}(x)$, $B_l(X)=\max\left(\max_{i\leq n}\left|F_{0\rightarrow l}^{\mathcal{A}}(x_i)\right|_l,1\right)$ and \\ $\rho^{\mathcal{A}}_l=\max\left(\max_{i}\max_{\tilde{l}\geq l}\frac{\rho^{\mathcal{A},x_i}_{l_1\rightarrow l_2}}{b_{l_2}}, \max_{i} \max_{\tilde{l}\geq l}\frac{\theta^{\mathcal{A},x_i}_{l_1\rightarrow l_2}}{E_{l_2}}\right)$, and
		
		$$I=\left\{ i\leq n: (f(x_i) )_{y_i}-\max_{j\neq y_i}( (f(x_i) )_{j})>\gamma   \right\}.$$
	\end{theorem}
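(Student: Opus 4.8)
The plan is to deduce Theorem~\ref{posthocasymptmassive} from the post hoc bound of Theorem~\ref{posthocresultmassive} by passing to the asymptotic $\widetilde{\mathcal{O}}$ regime, exactly as Theorem~\ref{posthocasymptlip} was deduced from Theorem~\ref{posthocresult}. First I would instantiate Theorem~\ref{posthocresultmassive} with the data- and network-dependent choices $E_l=\tfrac13\max_i E^{0\rightarrow l}(x_i)$, $B_l(X)=\max(\max_{i\le n}|F^{\mathcal{A}}_{0\rightarrow l}(x_i)|_l,1)$ and $\rho^{\mathcal{A}}_l=\max\big(\max_i\max_{\tilde l\ge l}\rho^{\mathcal{A},x_i}_{l\rightarrow\tilde l}/b_{\tilde l},\ \max_i\max_{\tilde l\ge l}\theta^{\mathcal{A},x_i}_{l\rightarrow\tilde l}/E_{\tilde l}\big)$. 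As already recorded there, for these choices the index set $I$ collapses to $\{i\le n:(f(x_i))_{y_i}-\max_{j\ne y_i}(f(x_i))_j>\gamma\}$, so the leading empirical term becomes $\widehat R_\gamma(F_{\mathcal{A}})$ (equivalently $\frac{n-\#(I)}{n}$), $B_L(X)=\gamma$ makes the $1/\gamma$ appear only in the $L$-th summand of $R_{\mathcal{A}}$, and $\rho^{\mathcal{A}}_l$ takes over the role of $\rho^{\mathcal{A}}_{l+}$; the uniformity over every $\gamma>0$ is the same $\gamma$-as-decreasing-variable union-bound step already used for Theorem~\ref{posthocresultspecs} (the range $\gamma\le 1/(2n)$ being trivial since the right-hand side is then $\ge 1$).

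Second I would reduce each remaining summand to $\widetilde{\mathcal{O}}$ notation, following the reduction in the proof of Theorem~\ref{posthocasymptlip}. The term $8/n$ and any stray $\sqrt{\log(1/\gamma)}$ factor are dominated; $3\sqrt{\log(2/\delta)/(2n)}$ supplies the $\sqrt{\log(1/\delta)/n}$ contribution. In $\tfrac{1536}{\sqrt n}R_{\mathcal{A}}[\log_2(32\Gamma n^2+7\bar W n)]^{1/2}\log(n)$ one splits $\log_2(32\Gamma n^2+7\bar W n)\le\log_2(32\Gamma n^2)+\log_2(7\bar W n)$; since $\Gamma$ is a product of the same norm/activation quantities that appear in $R_{\mathcal{A}}$, one has $\log\Gamma=\widetilde{\mathcal{O}}(1)$ up to polylogarithmic factors in $R_{\mathcal{A}}$, so this whole term is $\widetilde{\mathcal{O}}(R_{\mathcal{A}}\log(\bar W)/\sqrt n)$. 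For the last line, each of $B_{l-1}(X)$, $\|(A^l-M^l)^\top\|_{2,1}$ and $\rho^{\mathcal{A}}_l$ (after the harmless $+1/L$ shifts) enters $R_{\mathcal{A}}$ polynomially, so $\log(2+\cdot/L)=O(\log(R_{\mathcal{A}}+L))$ for each; summing the $L$ such terms, dividing by $n$ and taking the square root gives $\widetilde{\mathcal{O}}((R_{\mathcal{A}}+L)/\sqrt n)$, the additive $L$ absorbing the $\sqrt L$-type contribution of the layer-wise sum via $\sqrt L\le L$. Assembling the pieces yields the stated inequality with the advertised $R_{\mathcal{A}}$.

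I do not expect a genuine obstacle in this last step: it is routine bookkeeping. The real work is all upstream --- in Proposition~\ref{Chainingprop1}, whose chaining construction must certify that no ReLU or max-pooling threshold flips when $\mathcal{A}$ is replaced by its cover element $\bar{\mathcal{A}}$ (precisely what the minimum-preactivation budget $E_l$ and the hypotheses $\theta^{\mathcal{A},x_i}_{l_1\rightarrow l_2}\le E_{l_2}\rho_{l_1}$ secure), in the loss-augmented Rademacher/Dudley argument of the intermediate theorem that turns the layer-wise $L^\infty$ covers of Proposition~\ref{onestep} into a generalisation bound, and in the explicit identification $\theta^{\mathcal{A},x_i}_{l_1\rightarrow l_2}=\|M^\top\|_{1,\infty}$, $\rho^{\mathcal{A},x_i}_{l_1\rightarrow l_2}=\max_{M'}\|M'\|_{1,2}$ where $M$ is the local Jacobian of $F^{l_1\rightarrow l_2}$ at $F^{0\rightarrow l_1}(x_i)$ and $M'$ ranges over its single-patch row restrictions --- a direct differentiation of a piecewise-linear network. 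The only mild care needed at the present level is to verify that the $\rho^{\mathcal{A}}_l$, $E_l$, $B_l(X)$ above are admissible data-/network-dependent choices in the splitting of Lemma~\ref{unionbound} and are monotone in the required direction, so that the union bound over the discretised parameter grid goes through as stated.
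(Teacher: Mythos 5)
Your proposal is correct and follows essentially the same route as the paper: Theorem~\ref{posthocasymptmassive} is obtained from Theorem~\ref{posthocresultmassive} by instantiating the data-dependent choices of $E_l$, $B_l(X)$ and $\rho^{\mathcal{A}}_l$ (which collapse $I$ to the margin-violation set) and then absorbing the $\Gamma$, $\log$, and last-line terms into the $\widetilde{\mathcal{O}}$ notation exactly as in the proof of Theorem~\ref{posthocasymptlip}. You in fact spell out the bookkeeping in more detail than the paper does, and correctly locate the substantive work upstream in Proposition~\ref{Chainingprop1} and the loss-augmented Rademacher argument.
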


	\section{Dudley's entropy formula}

	For completeness, we include a proof of (a variant of) the classic Dudley's entropy formula. To enable a comparison with the results used in~\cite{Spectre}, we write the result with arbitrary $L^p$ norms. We will, however, only use the $L^\infty$ version.
	
	\begin{proposition}
		Let $\mathcal{F}$ be a real-valued function class taking values in $[0,1]$, and assume that $0\in \mathcal{F}$. Let $S$ be a finite sample of size $n$. For any $2\leq p\leq \infty$, we have the following relationship between the Rademacher complexity  $\rad(\mathcal{F}|_{S})$ and the covering number $\mathcal{N}(\mathcal{F}|S,\epsilon,\|\nbull\|_{p})$.
		\begin{align*}
		\rad(\mathcal{F}|_{S})\leq \inf_{\alpha>0} \left(     4\alpha+\frac{12}{\sqrt{n}}  \int_{\alpha}^1      \sqrt{\log \mathcal{N}(\mathcal{F}|S,\epsilon,\|\nbull\|_{p}) }     \right),
		\end{align*}
		where the norm $\|\nbull\|_{p}$ on $\mathbb{R}^m$ is defined by $\|x\|_{p}^p=\frac{1}{n}(\sum_{i=1}^m|x_i|^p)$.
	\end{proposition}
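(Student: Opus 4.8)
The plan is to run the classical Dudley chaining argument, with everything conditional on the fixed sample $S=(x_1,\dots,x_n)$, so that $\rad(\mathcal F|_S)=\mathbb E_\sigma\sup_{f\in\mathcal F}\frac1n\sum_{i=1}^n\sigma_if(x_i)$. Fix $\alpha>0$, put the dyadic scales $\epsilon_j=2^{-j}$, and let $N$ be the smallest integer with $2^{-N}\le 4\alpha$ (the case $N=0$ being trivial, since then $4\alpha\ge1\ge\rad(\mathcal F|_S)$). Since $\mathcal F$ takes values in $[0,1]$ and $0\in\mathcal F$, the zero vector is an $\epsilon_0=1$ cover of $\mathcal F|_S$, so take $V_0=\{0\}$; for $1\le j\le N$ let $V_j\subset\mathbb R^n$ be a minimal $\|\cdot\|_p$-cover of $\mathcal F|_S$, of cardinality $\mathcal N(\mathcal F|_S,\epsilon_j,\|\cdot\|_p)$. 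For $f\in\mathcal F$ select a closest element $\pi_j(f)\in V_j$, so $\|f-\pi_j(f)\|_p\le\epsilon_j$ in the $\frac1n$-normalised norm, and telescope
\[
f=\sum_{j=1}^{N}\bigl(\pi_j(f)-\pi_{j-1}(f)\bigr)+\bigl(f-\pi_N(f)\bigr),\qquad\pi_0(f)=0.
\]

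I would then split the supremum over this decomposition into a tail term and $N$ link terms. For the tail, for every realisation of $\sigma$ and every $f$, Cauchy--Schwarz together with the elementary inequality $\|\cdot\|_2\le\|\cdot\|_p$ between the $\frac1n$-normalised norms (a power-mean inequality, valid for $2\le p\le\infty$, with the convention that the normalised $\|\cdot\|_\infty$ is the plain sup norm) gives $\frac1n\langle\sigma,f-\pi_N(f)\rangle\le\|f-\pi_N(f)\|_2\le\|f-\pi_N(f)\|_p\le\epsilon_N\le4\alpha$, so this term contributes at most $4\alpha$. For the $j$-th link, as $f$ ranges over $\mathcal F$ the vector $\pi_j(f)-\pi_{j-1}(f)$ lies in the finite set $\{v-w:v\in V_j,\,w\in V_{j-1}\}$, of size at most $\mathcal N(\mathcal F|_S,\epsilon_j,\|\cdot\|_p)^2$, each member having unnormalised Euclidean length at most $\sqrt n(\epsilon_j+\epsilon_{j-1})=3\sqrt n\,\epsilon_j$; Massart's finite-class lemma (the subgaussian maximal inequality), together with $\log(\mathcal N^2)=2\log\mathcal N$, then yields
\[
\mathbb E_\sigma\sup_{f\in\mathcal F}\frac1n\bigl\langle\sigma,\pi_j(f)-\pi_{j-1}(f)\bigr\rangle\le\frac{6\,\epsilon_j}{\sqrt n}\sqrt{\log\mathcal N(\mathcal F|_S,\epsilon_j,\|\cdot\|_p)}.
\]

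Finally I would sum the link bounds and convert to an integral: since $\epsilon\mapsto\mathcal N(\mathcal F|_S,\epsilon,\|\cdot\|_p)$ is non-increasing we have $\epsilon_j\sqrt{\log\mathcal N(\epsilon_j)}\le2\int_{\epsilon_{j+1}}^{\epsilon_j}\sqrt{\log\mathcal N(\epsilon)}\,d\epsilon$, and telescoping over $j=1,\dots,N$ bounds the total link contribution by $\frac{12}{\sqrt n}\int_{\epsilon_{N+1}}^{1/2}\sqrt{\log\mathcal N(\epsilon)}\,d\epsilon\le\frac{12}{\sqrt n}\int_{\alpha}^{1}\sqrt{\log\mathcal N(\epsilon)}\,d\epsilon$, the last step using $\epsilon_{N+1}=2^{-N-1}>\alpha$, which holds by the choice of $N$. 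Adding the tail estimate and taking the infimum over $\alpha>0$ gives the claimed bound; the case $p=\infty$ is identical. This argument is entirely routine, so there is no genuine obstacle — the only points needing care are keeping the two normalisations consistent (the $\frac1n$-weighted $\|\cdot\|_p$ used in the covers versus the inner product $\frac1n\langle\sigma,\cdot\rangle$ appearing in $\rad$), verifying $\|\cdot\|_2\le\|\cdot\|_p$ for $p\ge2$, and tracking the absolute constants through Massart's lemma and the sum-to-integral passage so as to land exactly on the stated constants $4$ and $12$.
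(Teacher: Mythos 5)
Your proposal is correct and follows essentially the same route as the paper's proof: the same dyadic chaining decomposition with $V_0=\{0\}$ exploiting $0\in\mathcal F$ and the $[0,1]$ range, Massart's lemma on the sets of link differences of size at most $\mathcal N^2$ and normalised length $3\epsilon_j$, the same sum-to-integral conversion with the factor of $2$, and the same choice of $N$ relative to $\alpha$ yielding the constants $4$ and $12$. The only cosmetic difference is that you bound the tail term by Cauchy--Schwarz plus the power-mean inequality $\|\cdot\|_2\le\|\cdot\|_p$, where the paper applies H\"older with conjugate exponents directly; these give the identical estimate $\epsilon_N\le 4\alpha$.
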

	
	\begin{proof}
		Let $N\in \mathbb{N}$ be arbitrary and let $\epsilon_i=2^{-(i-1)}$ for $i=1,2,\ldots,N$. For each $i$, let $V_i$ denote the cover achieving $\mathcal{N}\left( \mathcal{F}|S,\epsilon_i,\|\nbull\|_{p}  \right)$, so that
		\begin{align}
		\label{Dudley1}
		\forall f\in \mathcal{F} \quad \exists v\in V_i \quad \left(\frac{1}{n} \sum_{t=1}^n \left(f(x_t)-v_t\right)^p   \right)^{\frac{1}{p}}\leq \epsilon_i,
		\end{align}
		and $\#(V_i)=\mathcal{N}\left( \mathcal{F}|S,\epsilon_i,\|\nbull\|_{p}  \right)$. For each $f\in\mathcal{F}$,let $v^i[f]$ denote the nearest element to $k$ in $V_i$. Then we have, where $\sigma_1,\sigma_2,\ldots,\sigma_n$ are $n$ i.i.d. Rademacher random variables,

		\begin{align*}
		&\mathbb{E}_{\sigma } \sup_{f\in\mathcal{F}}\frac{1}{n}\sum_{t=1}^n\sigma_t f(x_t)\\
		&=\mathbb{E}_{\sigma } \sup_{f\in\mathcal{F}}\left[ \frac{1}{n}\sum_{t=1}^n \sigma_t \left(f_t(x_t)-v^N_t[f]\right)-\sum_{i=1}^{N-1}\frac{1}{n}\sum_{t=1}^n \sigma_t \left(v_t^i[f]-v_t^{i+1}[f]\right) +\frac{1}{n}\sum_{t=1}^n \sigma_t v^1_t [f]     \right]\\
		&\leq \mathbb{E}_{\sigma } \sup_{f\in\mathcal{F}}\left[ \frac{1}{n}\sum_{t=1}^n \sigma_t \left(f_t(x_t)-v^N_t[f]\right)\right]+ \sum_{i=1}^{N-1} \mathbb{E}_{\sigma } \sup_{f\in\mathcal{F}}\left[\frac{1}{n}\sum_{t=1}^n \sigma_t \left(v_t^i[f]-v_t^{i+1}[f]\right) \right]\\ & \quad \quad \quad \quad \quad \quad\quad \quad \quad \quad \quad \quad\quad \quad \quad \quad \quad \quad\quad \quad \quad \quad \quad \quad\quad \quad \>+\mathbb{E}_{\sigma } \sup_{f\in\mathcal{F}}\left[  \frac{1}{n}\sum_{t=1}^n \sigma_t v^1_t [f]  \right].
		\end{align*}
		\normalsize
		For the third term, pick $V_1=\{0\}$, so that
		\begin{align*}
		\mathbb{E}_{\sigma } \sup_{f\in\mathcal{F}}\left[ \frac{1}{n} \sum_{t=1}^n \sigma_t v^1_t [f]  \right]=0.
		\end{align*}
		
		For the first term, we use H\"older's inequality to obtain, where $q$ is the conjugate of $p$,
		\begin{align*}
		\sum_{i=1}^{N-1} \mathbb{E}_{\sigma } \sup_{f\in\mathcal{F}}\left[\frac{1}{n}\sum_{t=1}^n \sigma_t \left(f_t(x_t)-v^N_t[f]\right) \right]&\leq \mathbb{E}_{\sigma} \left(\frac{1}{n}\sum_{t=1}^n |\sigma_t|^q   \right)^{\frac{1}{q}}\left(  \frac{1}{n}\sum_{t=1}^n \left|f_t(x_t)-v^N_t[f]\right|^p  \right)^{\frac{1}{p}}\\
		&\leq  \epsilon_N.
		\end{align*}
		
		Next, for the remaining terms, we define $W_i=\{v^i[f]-v^{i+1}[f]| f\in\mathcal{F}\}$. Then note that we have $|W_i|\leq |V_i||V_{i+1}|\leq |V_{i+1}|^2$, and then
		\begin{align*}
		\mathbb{E}_{\sigma } \sup_{f\in\mathcal{F}}\left[ \frac{1}{n} \sum_{t=1}^n\sigma_t  \left(v_t^i[f]-v_t^{i+1}[f]\right)  \right]\leq \mathbb{E}_{\sigma } \sup_{w\in W_i}\left[  \frac{1}{n} \sum_{t=1}^n \sigma_t w_t \right].
		\end{align*}
		
		Next,
		\begin{align*}
		&\sup_{w\in W_i} \sqrt{\frac{1}{n}\sum_{t=1}^n w_t^2}=\sup_{f\in\mathcal{F}} \left\|v^i[f]-v^{i+1}[f]\right\|_{2} \\&
		\leq \sup_{f\in\mathcal{F} }\left\| v^i[f]-(f(x_1),\ldots,f(x_n))\right\|_{2}+ \sup_{f\in\mathcal{F} }\left\| (f(x_1),\ldots,f(x_n))-v^{i+1}[f]\right\|_{2}\\
		&\leq \sup_{f\in\mathcal{F} }\left\| v^i[f]-(f(x_1),\ldots,f(x_n))\right\|_{p}+ \sup_{f\in\mathcal{F} }\left\| (f(x_1),\ldots,f(x_n))-v^{i+1}[f]\right\|_{p}
		\\ &\leq \epsilon_i+\epsilon_{i+1}=3\epsilon_{i+1},
		\end{align*}
		where at the third line, we have used the fact that $p\geq 2$.
		Using this, as well as Massart's lemma, we obtain
		\small
		\begin{align*}
		\mathbb{E}_{\sigma}\sup_{w\in W_i} \left[ \frac{1}{n}\sum_{t=1}^n\sigma_t w_t   \right]&\leq \frac{1}{\sqrt{n}}\sqrt{ 2\sup_{w\in W_i }\frac{1}{n} \sum_{t=1}^n w_t^2 \log|W_i|}  \leq \frac{3\epsilon_{i+1}}{\sqrt{n}}\sqrt{2\log|W_i|}\leq \frac{6}{\sqrt{n}}\epsilon_{i+1}\sqrt{\log|V_{i+1}|}.
		\end{align*}
		\normalsize
		Collecting all the terms, we have
		\begin{align*}
		\mathbb{E}_{\sigma} \sup_{f\in\mathcal{F}}\frac{1}{n}\sum_{t=1}^n \sigma_t f(x_t)&\leq \epsilon_N +\frac{6}{\sqrt{n}}\sum_{i=1}^{N-1}\epsilon_{i+1}\sqrt{\log\mathcal{N}\left( \mathcal{F}_{S},\epsilon_{i+1},\|\nbull\|_{p}  \right)}\\
		&\leq  \epsilon_N +\frac{12}{\sqrt{n}}\sum_{i=1}^{N}(\epsilon_i-\epsilon_{i+1})\sqrt{\log\mathcal{N}\left( \mathcal{F}_{S},\epsilon_{i},\|\nbull\|_{p}  \right)}\\
		&\leq \epsilon_N +\frac{12}{\sqrt{n}}\int_{\epsilon_{N+1}}^1\sqrt{\log\mathcal{N}\left( \mathcal{F}_{S},\epsilon,\|\nbull\|_{p}  \right)}d\epsilon.\\
		\end{align*}
		Finally, select any $\alpha>0$ and take $N$ to be the largest integer such that $\epsilon_{N+1}>\alpha$. Then $\epsilon_{N}=4\epsilon_{N+2}\leq 4\alpha$, and therefore
		\begin{align*}
		\epsilon_N +\frac{12}{\sqrt{n}}\int_{\epsilon_{N+1}}^1\sqrt{\log\mathcal{N}\left( \mathcal{F}_{S},\epsilon,\|\nbull\|_{p}  \right)}d\epsilon&\leq 4\alpha +\frac{12}{\sqrt{n}}\int_{\alpha}^1\sqrt{\log \mathcal{N}\left( \mathcal{F}|_{S},\epsilon,\|\nbull\|_{p}   \right)}d\epsilon,
		\end{align*}
		as expected.
	\end{proof}

	\section{Detailed comparison to other works}
	\label{comparisondeep}
	
	At the same time as the first version of this paper appeared on ArXiv, a different solution to the weight sharing problem (but not to the multiclass problem) was posted on arXiv~\cite{GRRR}. The bound, which relies on computing the Lipschitz constant of the map from parameter space to function space and applying known results about classifiers of a given number of parameters, states that for some constant $C$ and for large enough $n$, the generalisation gap $\mathbb{E}(l(\hat{g}))-\widehat{\mathbb{E}}(l(\hat{g}))$ satisfies with probability $\geq 1-\delta$, assuming each input has unit $l^2$ norm,
	
	\begin{align}
	\label{paramconv}
	\mathbb{E}(l(\hat{g}))-\widehat{\mathbb{E}}(l(\hat{g}))\leq \mathcal{C}B\sqrt{\frac{\mathcal{W}(\sum_{l=1}^Ls_l-\log(\gamma))+\log(1/\delta)}{n}},
	\end{align}
	where $s_l$ is an upper bound on the spectral norm of the matrix corresponding to the $l^{th}$ layer, $\gamma$ is the margin, and $\mathcal{W}$ is the number of parameters, taking weight sharing into account by counting each parameter of convolutional filters only once. We note that the method to obtain the bound is radically different from ours, and closer to~\cite{convattempt}. Indeed, it relies on the following general lemma mostly composed of known results, which bounds the complexity of function classes with a given number of parameters:
	
	\begin{proposition}{\cite{GRRR,Mohri,GINE,Pollard,Tal94,Tal96}}
		Let $G$ be a set of functions from a domain $Z$ to $[0,M]$ such that for some $B>5$ and for some $d\in \mathbb{N}$ and for some norm $\|\nbull\|_1$ on $\mathbb{R}^d$, there exists a  map from $\mathbb{R}^d$ to $G$ which is $B$-Lipschitz with respect to the norms $\|\nbull\|_1$ and $\|\nbull\|_{\infty}$. For large enough $n$ and for any distribution $P$ over $Z$, if $S$ is sampled $n$ times independently form $P$, for any $\delta>0$, we have with probability $\geq 1-\delta$ that for all $g\in G$,
		$$\mathbb{E}_{z\sim P}(g(z))\leq \hat{\mathbb{E}}_{S}(g)+CM\sqrt{     \frac     { d\log(B)+\log(1/\delta)  }  {n}     },$$
		where $C$ is some constant.
	\end{proposition}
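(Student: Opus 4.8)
The plan is to assemble the bound from three classical ingredients --- a metric-entropy estimate for $G$, Dudley's entropy integral, and a bounded-differences concentration inequality --- of which only the first uses the hypothesis. First I would fix a sample $S=(z_1,\dots,z_n)$ and pass from the parametrization $\phi:\mathbb{R}^d\to G$ to the restriction map $\Phi_S:\theta\mapsto(\phi(\theta)(z_i))_{i=1}^n\in[0,M]^n$; since $\|\cdot\|_\infty$ on functions dominates the sup norm of the coordinates at $z_1,\dots,z_n$, $\Phi_S$ is again $B$-Lipschitz from $(\mathbb{R}^d,\|\cdot\|_1)$ to $(\mathbb{R}^n,\|\cdot\|_\infty)$. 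Working (as one must, see below) with a parametrization supported on the unit $\ell^1$-ball --- which in the application corresponds to fixing the a priori bound on the admissible weights, and is exactly where the factor $\prod_l(1+s_l)\le e^{\sum_l s_l}$ enters and produces the $\sum_l s_l$ term of \eqref{paramconv}, at the cost of rescaling $B$ by that amount --- a standard volumetric bound gives $\mathcal{N}(B_1^d,\eta,\|\cdot\|_1)\le(1+2/\eta)^d$, hence, pushing forward through $\Phi_S$,
\[
\log\mathcal{N}\big(G|_S,\epsilon,\|\cdot\|_\infty\big)\ \le\ d\log\!\Big(1+\tfrac{2B}{\epsilon}\Big),
\]
and a fortiori the same bound holds for the $\tfrac1n$-normalized $\|\cdot\|_2$ metric on $\mathbb{R}^n$, which is dominated by $\|\cdot\|_\infty$.

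Next I would invoke the usual symmetrization inequality, $\mathbb{E}_S\sup_{g\in G}\big(\mathbb{E}_Pg-\widehat{\mathbb{E}}_Sg\big)\le 2\,\rad_n(G)$, and feed the entropy estimate into Dudley's entropy formula (the version proved above), letting the lower cutoff tend to $0$ since the integrand is integrable there:
\[
\rad_n(G)\ \le\ \frac{12}{\sqrt n}\int_0^{M}\sqrt{d\log\!\Big(1+\tfrac{2B}{\epsilon}\Big)}\,d\epsilon\ \le\ \frac{C'M\sqrt{d\log B}}{\sqrt n},
\]
the last step being the elementary estimate $\int_0^M\sqrt{\log(1+2B/\epsilon)}\,d\epsilon\lesssim M\sqrt{\log(1+2B/M)}=O\!\big(M\sqrt{\log B}\big)$ (the integrand is slowly varying, so the integral is dominated by $\epsilon\asymp M$, using $B>5$); the hypothesis ``for large enough $n$'' is used only to absorb the lower-order additive terms and to keep the bound non-vacuous. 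Finally, replacing one of the $z_i$ changes $\sup_{g\in G}(\mathbb{E}_Pg-\widehat{\mathbb{E}}_Sg)$ by at most $M/n$, so McDiarmid's inequality gives that with probability $\ge 1-\delta$ this supremum is at most its expectation plus $M\sqrt{\log(1/\delta)/(2n)}$; combining the two displays and using $\sqrt a+\sqrt b\le\sqrt2\sqrt{a+b}$ yields $\mathbb{E}_Pg-\widehat{\mathbb{E}}_Sg\le CM\sqrt{(d\log B+\log(1/\delta))/n}$ uniformly over $g\in G$. Symmetrization, the entropy integral and the concentration step are precisely the content of the cited works \cite{Pollard,GINE,Tal94,Tal96,Mohri}.

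The main obstacle is the first step, which is subtler than it appears. A globally $B$-Lipschitz surjection from the \emph{unbounded} space $\mathbb{R}^d$ onto $G$ does not on its own bound the metric entropy of $G$: even a Lipschitz image of $\mathbb{R}^d$ with bounded image can be arbitrarily ``long'' and require exponentially (in the ambient dimension) many $\epsilon$-balls. The parametric entropy bound above, with the dimension $d$ in the exponent, therefore presupposes that the parametrization is understood as restricted to a \emph{bounded} region of $\mathbb{R}^d$ --- as it always is in the applications, where the a priori norm constraints on the weights supply such a region --- and tracking the radius of that region through the estimate is exactly the mechanism by which the product-of-spectral-norms dependence surfaces in \eqref{paramconv}. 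After this reduction, everything follows from the cited references in the routine way.
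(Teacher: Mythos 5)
The paper does not actually prove this proposition: it is quoted from \cite{GRRR} and attributed to the cited empirical-process literature, so there is no in-paper argument to compare yours against. Your reconstruction is the standard one and is correct: cover the parameter set volumetrically, push the cover through the $B$-Lipschitz parametrization to get $\log\mathcal{N}(G|_S,\epsilon,\|\cdot\|_{\infty})\le d\log(1+2B/\epsilon)$, feed this into the Dudley bound (the $p=\infty$ case of the version proved in the appendix, rescaled to range $[0,M]$), and concentrate the supremum with McDiarmid's inequality. Your central caveat is well taken and is the one substantive point: a $B$-Lipschitz map defined on all of $\mathbb{R}^d$ does \emph{not} control the metric entropy of its image, so the statement must be read as a parametrization of a bounded region of $\mathbb{R}^d$ whose radius is absorbed into $B$ --- which, as you note, is precisely where the $\sum_l s_l \ge \log\prod_l(1+s_l)$ term of~\eqref{paramconv} comes from; the source lemma in \cite{GRRR} is indeed stated over a bounded parameter domain, so this is an imprecision in the paper's restatement rather than a gap in your argument. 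The only minor loose end is the estimate $\int_0^M\sqrt{\log(1+2B/\epsilon)}\,d\epsilon\le C'M\sqrt{\log B}$ with a universal $C'$, which relies on $B>5$ keeping $\log B$ bounded below and tacitly on $M$ not being exponentially small relative to $B$; since the proposition leaves $C$ unspecified and assumes $n$ large, this is harmless.
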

	
	The proof of inequality~\eqref{paramconv} then boils down to explicitly bounding the Lipschitz constant of the map from parameter space to function space assuming some  fixed norm constraints on the weights.
	Note that the term $\sum_{l=1}^Ls_l$ comes from a logarithm of $\prod_{i=1}^Ls_i$.

	Norm-based bounds such as ours and those in~\cite{Spectre} require more details to work in activation space directly, thereby replacing the explicit parameter dependence by a dependence on the norms of the weight matrices.
	
	Furthermore, one notable advantage of Norm-based bounds is their suitability to be incorporated in further analyses that take distance from initialisation into account, as do the approaches of the SDE branch of the litterature (\cite{gradientover,Aroratwolayers,GradientDescent,NTK,Overparametrised,Repeat}). Indeed, note that the bound~\eqref{paramconv} from~\cite{GRRR} is still large, and still scales as the number of parameters, even if the weight matrices $A^l$ are \textit{arbitrarily close} to the initialised matrices $M^l$. In contrast, the capacity estimate in our bounds converges toa constant times either $\sqrt{\frac{1}{n}}$ (\ref{posthocasymptsimple}) or $\sqrt{\frac{L}{n}}$ (\ref{posthocasymptlip}) when the weights approach initialisation.
	
	In what follows, we illustrate this fact by comparing our bounds with those of~\cite{Spectre,GRRR} both in the general case and in a simple illustrative particular case.

	\textbf{Comparison}
	
	Here we use the same notation as in the rest of the paper, assume the lipschitz constants $\rho_l$ are $1$, and set fixed norm constraints. Below, $B$ denotes an upper bound on the $L^2$ norms of input data points. Recall $O_l$ is the number of convolutional patches in layer $l$, $m_l$ is the number of filters in layer $l$, $a_l$ is an upper bound on the $L^2$ norm of the filter matrix, and $s_l$ is an upper bound on the spectral norm of the corresponding full convolution operation.
	
	For a completely general feed forward convolutional neural network, we have the following comparison, where $\mathcal{C}$ is an unspecified constant,  $\Gamma=\max_{l=1}^LW_lBa_l\prod_{i\neq l}s_i $, and $d_l$ is the size of convolutional filters at layer $l$.  For ease of comparison, we compare only with the forms of our theorems involving explicit spectral norms.
	\begin{table}[h!]
		\centering
		\begin{tabular}{lll}
			\toprule
			& General bound      \\
			
			\midrule
			Prev. work& $\frac{72B\log(2w)\log(n)}{\sqrt{n}\gamma}L\prod_{i=1}^Ls_i\left[\sum_{l=1}^L\frac{O_{l-1}^2m_la_l^2}{s_l^2}\right]^{\frac{1}{2}}+\Omega$  \\
			Simult. work    & $\mathcal{C}\sqrt{\frac{\left[\sum_{l=1}^Ld_{l-1}O_{l-1}m_l\right]\left[\sum_{l=1}^Ls_l-\log(\gamma)\right]+\log(1/\delta)}{n}}$  \\
			Our bounds     &  $\frac{2^{11}BL\log(n)\log_2(32\Gamma n^2/\gamma+7\bar{W}n)^{\frac{1}{2}}}{\sqrt{n}\gamma}\prod_{i=1}^Ls_i\left[\sum_{l=1}^{L-1}\frac{w_lU_la_l^2}{s_l^2}+\frac{a_L^2}{s_L^2}\right]^{\frac{1}{2}}+\Omega$   \\
			\bottomrule
		\end{tabular}
	\end{table}
	Here, we write $\Omega=\frac{8}{n}+3\sqrt{\frac{\log(1/\delta)}{2n}}$.
	Note that at the term of the sum that corresponds to layer $l$, our bound is better than the bound in~\cite{Spectre} roughly by a factor of $\sqrt{\frac{O_{l-1}^2m_l}{w_l}}$. A factor of $\sqrt{\frac{\bar{W}_l}{W_l}}=\sqrt{\frac{O_{l-1}m_l}{U_lw_l}}$ is removed by exploiting the $L^\infty$-continuity of the activation functions and the pooling operation, while a further factor of $\sqrt{O_{l-1}}$ is removed by exploiting weight sharing.

	{The main advantage of the bound in~\cite{GRRR} compared to ours is the lack of a product of spectral norms $\prod_{i=1}^Ls_i$ as a factor inside the square root. This is a significant difference as it arguably removes implicit exponential depth dependence, and illustrate the difference between the methods. However, as explained in Section~\ref{augmentdetail} andSubsection~\ref{augment}, this problem can be tackled independently.
		
		The main disadantage of the bound in~\cite{GRRR} compared to ours is that it exhibits an explicit factor of $\mathcal{W}=\left[\sum_{l=1}^Ld_{l-1}O_{l-1}m_l\right]$, the total number of parameters in the network. Note also that the factor appears in a term where it is not multiplied by any norm quantities.This has important implications. If the trained network has a large number of very small weights (or weights very close to initialisation), the corresponding contribution is small.  This suggests the superior potential of refined norm-based bounds to explain the generalisation capabilities of DNN's at the overparametrised regime~\cite{sizematters,overparam,Overparametrised,gradientover}.
		More crucially, the spectral norm version of our bound~\eqref{posthocasymptsimple} converges to a very small number $\tilde{O}(\sqrt{\frac{1}{n}})$ when the weights approach the initialised values $M$, whilst the bound in~\cite{GRRR} still scales like $\tilde{O}(\sqrt{\mathcal{W}/n})$ in that case, making our bound better suited to incorporation in bounds that take the optimisation procedure into account.
		
		Furthermore, whilst the bound does not directly involve the input space dimension, it does explicitly depend on the size of the convolutional filters, including at the input ($0^{th}$) layer. On the other hand, our bound depends instead on the post pooling width at layers $l$  for $l=1,2,\ldots L$, which is the maximum possible number of active neurons (after pooling) at layer $l\geq 1$ (which excludes the input layer). Furthermore, norm-based bounds such as ours exhibit some degree of architecture robustness. If weights are pruned, our bound is the same as it would be if we had started with the smaller architecture, whilst the bound in~\cite{GRRR} still involves the original number of parameters.

		Furthermore, we believe the meaningful estimate of complexity lies in the term \small $\|(A^\top-M^\top)\|_{2,1}$\normalsize, whilst the product of spectral norms is more of a technicality: networks constrained to have all spectral norms equal to $1$ form a  rich function class of high relevance to the original problem, and it has been shown in~\cite{Otherlong} that this class can be approached through regularisation, and indeed that doing so even improves the accuracy.
		
		\textbf{Remark on the proof techniques:} In fact, it is interesting to note that the main advantages and disadvantages of our bound compared to that in~\cite{GRRR} are intimately related via a tradeoff that appears in the proof choice: it is possible to bound the covering number of a function class depending on a parameter $\theta$ in different ways depending on which of (1) the dimension of parameter space (2) the norm and architectural constraints on the parameters, is the most restrictive. When the dimension $d$ of parameter space is moderate and assuming the Lipschitz constant is known, it is straightforward to bound the covering number by a quantity of the form $(\frac{C}{\epsilon})^{d}$, directly for an arbitrary function class of which we know nothing except the number of parameters and the relevant Lipzitsch constant. When the norm constraints are stronger and the dimension is large or possibly infinite, it is necessary to use different tools such as the Maurey sparsification lemma and $L^\infty$ versions of it which apply directly only to linear classifiers and thus require more use of the architectural assumption and chaining arguments to generalise to DNN.
		To better understand the trade-off between the product of spectral norm and the architecture robustness, it is best to think of the simple example of a linear classifier in the setting of $L^2$ covering numbers: consider the Maurey sparsification lemma A.6 from~\cite{Spectre}. The quantity $k$ scales like the reciprocal of granularity $\epsilon$ of the cover. The covering number behaves like the number of choices of integers $(k_1,\ldots,k_d)$ such that $\sum_{i=1}^dk_d=k$. This quantity is equal to $\left(\begin{array}{c} k+d-1\\ d-1\end{array}\right)$. Depending on whether $d$ is large or small compared to $k$, this can be approximated by $k^d$ or $d^k$. The first choice yields explicit dependence on the number of parameters, and the second choice yields dependence on the norms of the input and weight matrices, which in the case of a neural network eventually translates into a product of spectral norms.

		\textbf{On input size independence/robustness to downsampling.}
		Note that contrary to ours, the bound in~\cite{GRRR} depends explicitly on $d$ ($d_0$ in the general case, the size of the first input layer's convolutional patches).

		We argue that this implies our bound exhibits an even stronger form of input size-independence.
		We consider an idealised scenario where a downsampled version of each image contains the same information as the original image: suppose that each input $x_i$, of size $2a \times 2b$, satisfies $(x_i)_{2j,2j}=(x_i)_{2j,2j+1}=(x_i)_{2j+1,2j}=(x_i)_{2j+1,2j+1}$ for any $j$, where $(x_i)_{r,r'}$ denotes the $(r,r')$ pixel of image $x_i$. If we create a downsampled version $\tilde{x}_i$, of size $a\times b$, of the input such that $(\tilde{x}_i)_{j_1,j_2}=\sqrt{4}(x_{i})_{2j_1,2j_2}$, and similarly replace the first layer convolutional weights $w_{u_1,u_2}$ of size $2c_1\times 2c_2$ by $\sqrt{\tilde{w}_{2u_1,2u_2}+\tilde{w}_{2u_1,2u_2+1}^2+\tilde{w}_{2u_1+1,2u_2}^2+\tilde{w}_{2u_1+1,2u_2+1}^2}$, of size $c_1\times c_2$, then assuming the stride is also divided by two in the downsampled case, all activation at the next layers (from layer 1 onwards) are the same in both the original and the downsampled version. Thus there is a natural bijection $\mathcal{F}$ between solutions to the first problem and the second, and it is not hard to convinve onesef that the image by $\mathcal{F}$ of the solution of SGD on one problem is the solution to SGD on the other, with the generalisation gap also staying exactly the same.However, in the case of the bound in~\cite{GRRR}, the bound is smaller in the case of the downsampled version due to the decrease in the number of parameters. Our bound, on the other hand, stays the same. Indeed, the maximum $L^2$ norm $\tilde{B}$ of convolutional patches stays the same, as does the $L^2$ norm of every convolutional filter $w$, despite the change in the number of parameters.

		\section{On class dependency and working with $L^2$ norms}
		\label{fillup}
		
		As mentioned after Theorem~\ref{fully}, the main advantages of our bounds in terms of class dependency compared to the work of Bartlett~\cite{Spectre} is to replace the capacity contribution of the last layer $\|(A_L-M_L)^\top\|_{2,1}$ by $\|A_L-M_L\|_{\Fr}$. As explained before, in the case where the norms of the classifying vectors $(A_L)_{c,\nbull}$ for $c\leq C$ are within a constant factor of each other, the bound in Bartlett has an implicit dependence of $C$, whilst our bound has an implicit dependence of $\sqrt{C}$ (ignoring logarithmic terms).  In this section, we show that for large $C$, the region of weight space where this condition does not hold has vanishingly small Lebesgue (or Gaussian) measure, further confirming the theoretical importance of our improvements.

		\begin{proposition}
			Let $X\in \mathbb{R}^n$  be a random variable which is either spherically symmetric or has i.i.d. component, and satisfies $\mathbb{E}(|X_1|^2)<\infty$.
			For all $0<\epsilon< \frac{1}{3}$ with $\epsilon(\mathbb{E}(X_1^2))^{-1}\leq \frac{1}{2}$ and $\epsilon(\mathbb{E}(|X_1|))^{-1}\leq \frac{1}{2}$ and for all $n$, we have, with probability $\geq 1- 5e^{-2\epsilon^2n}$, \begin{align}
			\label{bloups}
			C(1-U)\|X\|_{1}\leq \sqrt{n}\|X\|_{2}\leq C (1+U) \|X\|_{1},
			\end{align}
			
			with $C=\frac{\sqrt{\mathbb{E}((X_1)^2)}}{\mathbb{E}(|X_1|)}$ and  $U=4\frac{\epsilon}{\mathbb{E}(X_1^2)}+4\frac{\epsilon}{\mathbb{E}(|X_1|)}+\epsilon $
		\end{proposition}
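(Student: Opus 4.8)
The plan is to reduce the claim to the concentration of two empirical averages, $\tfrac1n\sum_{i=1}^nX_i^2$ and $\tfrac1n\sum_{i=1}^n|X_i|$, around $\mathbb{E}(X_1^2)$ and $\mathbb{E}(|X_1|)$, and then read off~\eqref{bloups} by elementary algebra. The starting observation is the scaling identity
\begin{equation}
\frac{\sqrt n\,\|X\|_2}{\|X\|_1}=\frac{\sqrt{\tfrac1n\sum_{i=1}^nX_i^2}}{\tfrac1n\sum_{i=1}^n|X_i|},
\end{equation}
so once both averages are pinned to within $\epsilon$ of their means, the left side is pinned to a multiplicative factor close to $\sqrt{\mathbb{E}(X_1^2)}/\mathbb{E}(|X_1|)=C$. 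In the i.i.d.\ case these are averages of i.i.d.\ terms. In the spherically symmetric case I would write $X\stackrel{d}{=}R\,G/\|G\|_2$ with $G\sim N(0,I_n)$ and $R=\|X\|_2\ge 0$ independent of $G$; then $\sqrt n\,\|X\|_2/\|X\|_1=\sqrt n\,\|G\|_2/\|G\|_1$ has the same distribution as in the i.i.d.\ (Gaussian) case, so it suffices to handle the i.i.d.\ case with the relevant (Gaussian) moments.

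For the concentration step I would apply a two-sided Hoeffding-type deviation bound to each average. In the intended application $X$ is a normalised weight vector or data point, so the coordinates lie in a bounded range; then $X_1^2$ and $|X_1|$ lie in intervals of length $1$, and Hoeffding gives, for each average, a deviation probability at most $2e^{-2\epsilon^2 n}$. A union bound yields that with probability at least $1-4e^{-2\epsilon^2 n}$ (the constant $5$ in the statement being slack that also absorbs the extra reduction in the spherical case) one has simultaneously $|\tfrac1n\sum_iX_i^2-\mathbb{E}(X_1^2)|\le\epsilon$ and $|\tfrac1n\sum_i|X_i|-\mathbb{E}(|X_1|)|\le\epsilon$; without boundedness the same structure works with a Bernstein/sub-exponential bound at the cost of the exponent.

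On this event write $a=\epsilon/\mathbb{E}(X_1^2)\le\tfrac12$ and $b=\epsilon/\mathbb{E}(|X_1|)\le\tfrac12$, so that $\tfrac1n\sum_iX_i^2=\mathbb{E}(X_1^2)(1+\theta_1 a)$ and $\tfrac1n\sum_i|X_i|=\mathbb{E}(|X_1|)(1+\theta_2 b)$ with $|\theta_1|,|\theta_2|\le1$; the ratio above then equals $C\sqrt{1+\theta_1 a}/(1+\theta_2 b)$. Using $1-a\le\sqrt{1+\theta_1 a}\le1+a$ and, since $b\le\tfrac12$, $1-b\le(1+\theta_2 b)^{-1}\le1+2b$, one obtains
\begin{equation}
C(1-a)(1-b)\;\le\;\frac{\sqrt n\,\|X\|_2}{\|X\|_1}\;\le\;C(1+a)(1+2b),
\end{equation}
and expanding, with the cross terms bounded via $ab\le\min(a,b)\le\tfrac12(a+b)$, shows both sides lie inside $C(1\pm U)$ for $U=4a+4b+\epsilon$; the roomy constant $4$ and the harmless additive $\epsilon$ absorb the lower-order terms and the $\epsilon<\tfrac13$ rounding, giving~\eqref{bloups}. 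The one genuinely delicate point is the concentration inequality: the exponential rate $e^{-2\epsilon^2 n}$ uniform in $n$ really requires boundedness (or at least sub-exponential tails) of the coordinates — a bare finite-second-moment hypothesis does not yield an exponential tail for $\tfrac1n\sum_iX_i^2$ — so the cleanest proof makes explicit use of the boundedness present in the application, after which the remaining algebra is routine.
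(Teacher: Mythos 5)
Your overall architecture matches the paper's: reduce the spherical case to the i.i.d.\ (Gaussian) case via radial invariance of the ratio $\sqrt{n}\|X\|_2/\|X\|_1$, concentrate the two empirical averages $\tfrac1n\sum_i X_i^2$ and $\tfrac1n\sum_i|X_i|$, and finish with the ratio algebra (the paper's expansion of $\frac{1+a}{1-b}$ into $1+4a+4b+\epsilon$ under $a,b\le\tfrac12$ is essentially your computation). However, there is a genuine gap at the concentration step, and you have in fact diagnosed it yourself: Hoeffding's inequality with rate $e^{-2\epsilon^2 n}$ does not apply to $\tfrac1n\sum_i X_i^2$ under the stated hypothesis $\mathbb{E}(|X_1|^2)<\infty$, since $X_1^2$ need not be bounded or sub-exponential. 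Your resolution --- ``make explicit use of the boundedness present in the application'' --- changes the hypotheses and therefore proves a different, strictly less general statement than the proposition, which assumes only a finite second moment.

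The paper closes this gap with an $n$-dependent truncation rather than an added assumption. For each fixed $n$ it chooses $R_n$ large enough that (i) $n\,\mathbb{P}(|X_1|\ge R_n)\le e^{-\epsilon^2 n}$, so by a union bound all coordinates lie in $[-R_n,R_n]$ except on an event of exponentially small probability, and (ii) the conditional moments $\mathbb{E}(X_1^2\mid |X_1|\le R_n)$ and $\mathbb{E}(|X_1|\mid |X_1|\le R_n)$ are within prescribed factors of the unconditional ones (possible because they converge as $R_n\to\infty$). It then applies Hoeffding to the rescaled, conditioned variables $\tilde X_i=Y_iR_n^{-1}\in[-1,1]$ --- note the ratio $\sqrt{n}\|\tilde X\|_2/\|\tilde X\|_1$ is scale-invariant, so the rescaling is harmless --- and transfers the conclusion back via the moment comparison (ii), with the truncation failure probability absorbed into the constant $5$. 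If you add this truncation step (and the accompanying bookkeeping that the conditional moments are close to the true ones), your argument goes through under the proposition's actual hypotheses; as written, it does not.
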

		\begin{proof}
			Since the multivariate Gaussian is spherically symmetric and the inequality~\ref{bloups} is radially symmetric, we only need to prove the case with i.i.d. components. 
			We begin by picking $R_n$ large enough to ensure \begin{align}
			\label{firstie}
			&\mathbb{P}\left( \exists i \leq n:  |X_i|\geq R_n \right)\leq \sum_{i=1}^n 	\mathbb{P}\left(   |X_i|\geq R_n \right)\nonumber\\&= n\mathbb{P}\left(   |X_i|\geq R_n \right)\leq \exp(-\epsilon^2n),
			\end{align}
			$$\mathbb{E}\left((X_1)^2| X_1\leq R_n\right)>2\mathbb{E}(X_1^2)/3,$$ 	$$\mathbb{E}\left(|X_1|\big | X_1\leq R_n\right)>2\mathbb{E}\left(\left|X_1\right|  \right)/3,$$
			and \begin{align}
			\label{secc}
			(1-\epsilon) \frac{\mathbb{E}(|X_1|)}{\sqrt{\mathbb{E}((X_1)^2)}}&\leq\frac{ \mathbb{E}\left(|X_1|\big| X_1\leq R_n\right)}{\sqrt{\mathbb{E}\left((X_1)^2| X_1\leq R_n\right)}}\nonumber \\&\leq 	(1+\epsilon) \frac{\mathbb{E}(|X_1|)}{\sqrt{\mathbb{E}((X_1)^2)}}
			\end{align}
			
			which can be done by the assumption that $\mathbb{E}(|X_1|^2))<\infty$.
			Next, let $Y$ be the random variable $X$ conditioned on $|X_i|\leq R_n$ for all $i$ and let $\tilde{X}=YR_n^{-1}$.

			By applying Hoeffding's lemma, we note that we have \begin{align}
			\label{three}
			\mathbb{P}\left(\left|\|\tilde{X}\|_{2}^2-n\mathbb{E}(\tilde{X}_1^2)\right|\geq \epsilon n\right)\leq 2\exp\left(- 2n\epsilon^2   \right),
			\end{align}
			and 
			\begin{align}
			\label{four}
			\mathbb{P}\left(\left| \|\tilde{X}\|_1-n\mathbb{E}(|\tilde{X}_1|)\right|\geq \epsilon n    \right)\leq 2\exp\left(- 2n\epsilon^2   \right).
			\end{align}
			Note that $\left|\|\tilde{X}\|_{2}^2-n\mathbb{E}(\tilde{X}_1^2)\right|< \epsilon n$ implies \begin{align}
			\left|\|\tilde{X}\|_{2}-\sqrt{n}\sqrt{\mathbb{E}(\tilde{X}_1^2)}\right|&\leq \frac{\epsilon n}{\|\tilde{X}\|_{2}+\sqrt{n}\sqrt{\mathbb{E}(\tilde{X}_1^2)}}\nonumber \\&\leq \frac{\epsilon \sqrt{n}}{\sqrt{\mathbb{E}(\tilde{X}_1^2)}}.\end{align}
			
			Hence, by inequalities~\ref{three},~\ref{secc} and~\ref{four}, we have with probability $>1-4e^{-2\epsilon^2n}$, 
			\begin{align}
			&\frac{\sqrt{n}\|\tilde{X}\|_2}{\|\tilde{X}\|_1}\leq \frac{n\sqrt{\mathbb{E}(\tilde{X}_1^2)}+ \frac{\epsilon n}{\sqrt{\mathbb{E}(\tilde{X}_1^2)}}         }{  n\mathbb{E}(|\tilde{X}_1|) -\epsilon n             }\nonumber \\ &\leq \frac{\sqrt{\mathbb{E}(\tilde{X}_1^2)}}{\mathbb{E}(|\tilde{X}_1|)}\frac     {1+\frac{\epsilon}{\mathbb{E}(\tilde{X}_1^2)}}     {1- \frac{\epsilon}{\mathbb{E}(|\tilde{X}_1|)}} \nonumber \\
			&\leq \frac{\sqrt{\mathbb{E}(\tilde{X}_1^2)}}{\mathbb{E}(|\tilde{X}_1|)}\left(1+\frac{\epsilon}{\mathbb{E}(\tilde{X}_1^2)}\right)  \left(1+2\frac{\epsilon}{\mathbb{E}(|\tilde{X}_1|)}\right)\nonumber \\
			&\leq \frac{\sqrt{\mathbb{E}(\tilde{X}_1^2)}}{\mathbb{E}(|\tilde{X}_1|)} \left(1+\frac{\epsilon}{\mathbb{E}(\tilde{X}_1^2)}+2\frac{\epsilon}{\mathbb{E}(|\tilde{X}_1|}+\frac{\epsilon^2}{\mathbb{E}(|X_1|)\mathbb{E}(\tilde{X}_1^2)}\right)\nonumber \\
			&\leq \frac{\sqrt{\mathbb{E}(\tilde{X}_1^2)}}{\mathbb{E}(|\tilde{X}_1|)} \left(1+2\frac{\epsilon}{\mathbb{E}(\tilde{X}_1^2)}+2\frac{\epsilon}{\mathbb{E}(|\tilde{X}_1|)}\right)\nonumber \\
			&\leq  \frac{\sqrt{\mathbb{E}(X_1^2)}}{\mathbb{E}(|X_1|)} \left(1+3\frac{\epsilon}{\mathbb{E}(X_1^2)}+3\frac{\epsilon}{\mathbb{E}(|X_1|)}\right)(1+\epsilon)\nonumber \\
			&\leq \frac{\sqrt{\mathbb{E}(X_1^2)}}{\mathbb{E}(|X_1|)} \left(1+4\frac{\epsilon}{\mathbb{E}(X_1^2)}+4\frac{\epsilon}{\mathbb{E}(|X_1|)}+\epsilon \right),
			\end{align}
			as expected. The proof of the other inequality is similar. 
		\end{proof}

		\section{Rademacher Theorem}
		Recall the definition of the Rademacher complexity of a function class $\mathcal{F}$:
		\begin{definition}
			Let $\mathcal{F}$ be a class of real-valued functions with range $X$. Let also $S=(x_1,x_2,\ldots,x_n)\in X$ be $n$ samples from the domain of the functions in $\mathcal{F}$. The empirical Rademacher complexity $\rad_S(\mathcal{F})$ of $\mathcal{F}$ with respect to $x_1,x_2,\ldots,x_n$ is defined by
			\begin{align}
			\rad_S(\mathcal{F}):=\mathbb{E}_{\delta}\sup_{f\in\mathcal{F}}\frac{1}{n}\sum_{i=1}^n  \delta_if(x_i),
			\end{align}
			where $\delta=(\delta_1,\delta_2,\ldots,\delta_n)\in\{\pm 1\}^n$ is a set of $n$ iid Rademacher random variables (which take values $1$ or $-1$ with probability $0.5$ each).
		\end{definition}
		Recall the following classic theorem~\citep{rademach}:
		\begin{theorem}
			\label{rademachh}
			Let $Z,Z_1,\ldots,Z_n$ be iid random variables taking values in a set $\mathcal{Z}$. Consider a set of functions $\mathcal{F}\in[0,1]^{\mathcal{Z}}$. $\forall \delta>0$, we have with probability $\geq 1-\delta$ over the draw of the sample $S$ that $$\forall f \in \mathcal{F}, \quad \mathbb{E}(f(Z))\leq \frac{1}{n}\sum_{i=1}^nf(z_i)+2\rad_{S}(\mathcal{F})+3\sqrt{\frac{\log(2/\delta)}{2n}}. $$
		\end{theorem}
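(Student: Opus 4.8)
The plan is to follow the classical three–step route for uniform convergence: a bounded–differences concentration step, a symmetrisation step, and a second concentration step, all glued together by a union bound.

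First I would introduce the uniform deviation functional
\[
\Phi(S):=\sup_{f\in\mathcal{F}}\Big(\mathbb{E}(f(Z))-\frac1n\sum_{i=1}^n f(z_i)\Big),
\]
and note that since every $f$ takes values in $[0,1]$, altering a single coordinate $z_i$ of $S$ changes $\Phi(S)$ by at most $1/n$. McDiarmid's bounded–differences inequality then gives, for any $\delta>0$, that with probability at least $1-\delta/2$,
\[
\Phi(S)\le \mathbb{E}_S[\Phi(S)]+\sqrt{\frac{\log(2/\delta)}{2n}}.
\]

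Next I would bound $\mathbb{E}_S[\Phi(S)]$ by the symmetrisation inequality $\mathbb{E}_S[\Phi(S)]\le 2\,\mathbb{E}_S[\rad_S(\mathcal{F})]$. For this, introduce a ghost sample $S'=(z_1',\dots,z_n')$ that is an independent copy of $S$, rewrite $\mathbb{E}(f(Z))=\mathbb{E}_{S'}\big[\frac1n\sum_i f(z_i')\big]$, move the supremum inside the expectation by Jensen's inequality, and then exploit exchangeability of $z_i$ and $z_i'$ to insert i.i.d.\ Rademacher signs in front of each difference $f(z_i')-f(z_i)$ without changing the law; the triangle inequality then splits this into two identical Rademacher averages. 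Finally, observing that $S\mapsto\rad_S(\mathcal{F})$ also has bounded differences $1/n$ (again by the $[0,1]$ range, conditionally on the signs), a second application of McDiarmid yields, with probability at least $1-\delta/2$, $\mathbb{E}_S[\rad_S(\mathcal{F})]\le \rad_S(\mathcal{F})+\sqrt{\log(2/\delta)/(2n)}$. A union bound over the two events, together with $\mathbb{E}(f(Z))\le \frac1n\sum_i f(z_i)+\Phi(S)$, produces the stated $2\rad_S(\mathcal{F})+3\sqrt{\log(2/\delta)/(2n)}$ (the constant $3$ being $1+2$, the extra $2$ coming from the factor in front of the Rademacher term).

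I do not expect a genuine obstacle, since this is the textbook argument; the step requiring the most care is the symmetrisation inequality, where one must correctly justify the ghost sample and the insertion of Rademacher variables and track the factor of $2$. A secondary point of care is checking the bounded–differences constant $1/n$ in both McDiarmid applications, which is exactly where the hypothesis $\mathcal{F}\subseteq[0,1]^{\mathcal{Z}}$ is used.
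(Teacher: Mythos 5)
Your proposal is correct: it is the standard McDiarmid--symmetrisation--McDiarmid argument with the union bound accounting for the constant $3=1+2$, and the paper itself states Theorem~\ref{rademachh} as a classical result with a citation and no proof, so your argument is exactly the canonical one being invoked.
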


		\section{Experiments}
		\label{example}
		Although the main aim of this paper is purely theoretical, we provide two simple experiment strands here to illustrate the behaviour of our bound on data. 
		
		\subsection{Augmented MNIST}
		\begin{wrapfigure}{r}{0.44\textwidth}\vspace{-0.6cm}
			\hspace{-32pt} 
			\centering \includegraphics[width=0.38\textwidth]{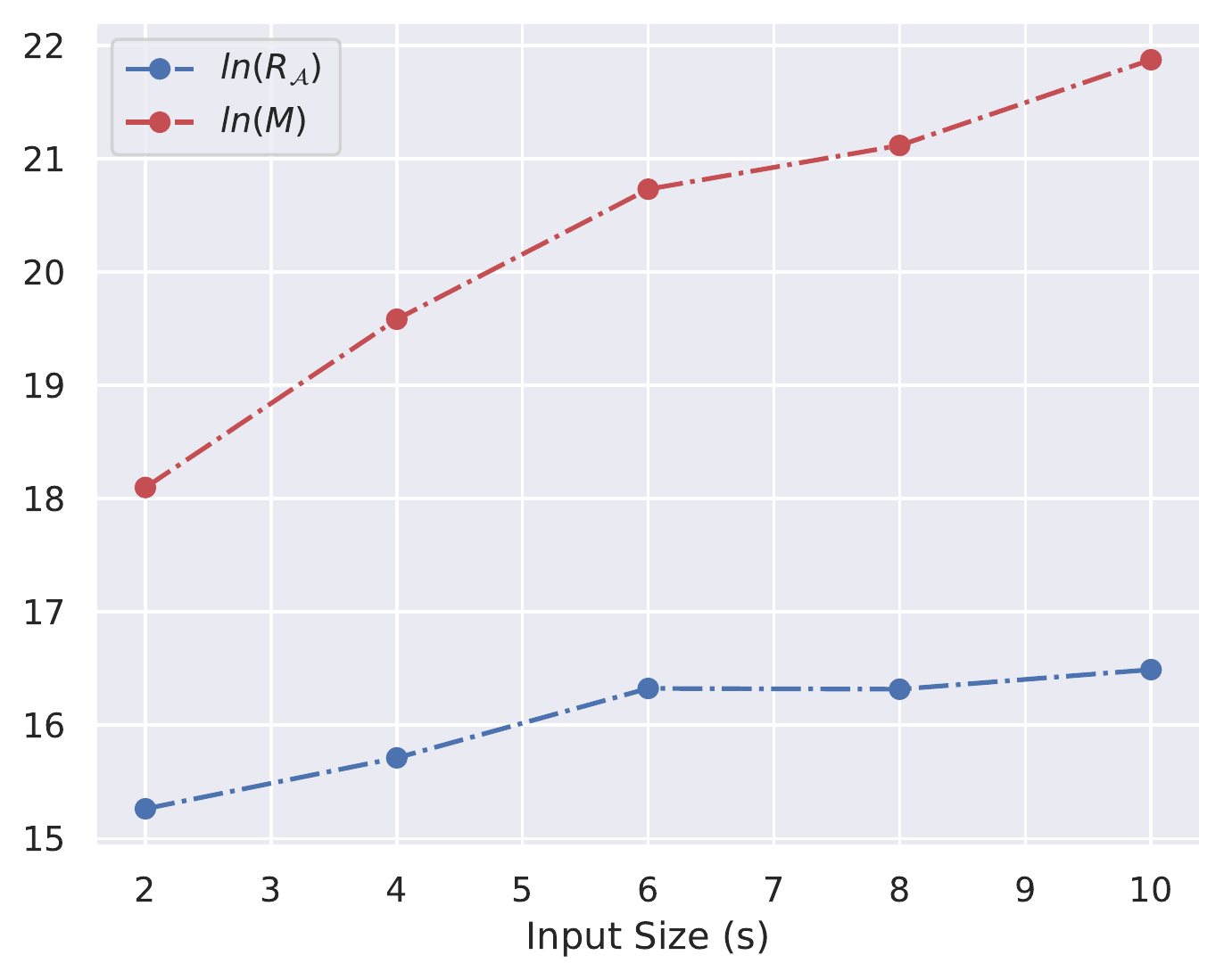}
			\vspace{-16pt}
		\end{wrapfigure}
		
		Our first experiment studies how the proposed bound changes with input dimension. The goal is to generate a sequence of datasets with increasing input sizes that have a small effect on the problem complexity and the convolutional architectural design.  To that end, for each $s\in \{2,4,\ldots,10\}$, we generate data points of size $28s \times 28s$ as follows. First, an image $I$ of the MNIST dataset is randomly sampled along with its label $\ell$. Then we embed $s/2$ non-intersecting copies of $I$ into a large black image of the size $28s \times 28s$ at random locations. For each input size $s \in \{2,4,6,8,10\}$, we generate training set of $50000$ data points.  
		
		The model used is a convolutional network with $4$ convolutional layers, followed by a fully-connected layer with $10$ outputs for the classes. The filters at each convolutional layer are of size $3 \times 3 $, applied with strides of $2$ and the numbers of channels from input to output are respectively as follows: $64$, $128$, $128$, and $64$.  
		
		We train all the models using the cross-entropy loss and weight decay with an ADAM optimizer until they achieve $99\%$ training accuracy. For each dataset, we select the margin to be the largest margin to achieve $96\%$ training accuracy.

		For each value of $s$, we compute the main term $R_{\mathcal{A}}$ in our bound (see~\eqref{OurR}) and the term $M/\gamma$ from equation~\eqref{source}. In the graph shown we plot the two bounds vs the dataset size $s$ in log-scale.
		

		\subsection{Synthetic data}
		\label{exampledetail}
		
		In this experiment, each data point is a sequence of length $L$ digits from the set $\{0,1,2,3\}$. We fix $20$ "signature" sequences $s_1,s_2,\ldots,s_{20}$ of length $15$, the first $10$ of which (i.e. $\{s_1,s_2,\ldots,s_{10}\}$) are associated with label $0$, and the last $10$  (i.e. $\{s_{11},s_{12},\ldots,s_{20}\}$)  of which are associated with label $1$. Each data point is created by inserting $5$ of the signatures into an originally uniformly random sequence of length $L$ at a uniformly random position. Optionally, we repeat each inserted subsequence a total of $iter$ times, where $iter$ is a parameter (duplicate signatures need not appear consecutively). The label is determined by a majority vote of the signatures present. For instance, if signatures $s_1,s_2$ and $s_{11}$ are present, the label is $0$.

		We use one-hot encoding and employ a two-layer neural network composed of one convolutional layer without any padding, and one fully connected layer. We do not use any offset terms. We use $50$ filters, and pooling is over the whole spacial region, so that the total number of hidden neurons is also $50$. Using a variation of our theorems from Section~\ref{withnorms}, we compute for each input the normalised margins $\gamma(x_i)/R$ where \footnotesize $R=\left((\widetilde{B}/\sqrt{n}) \sqrt{k \sum_{i=1}^Kf_i^2 \sup_{i\leq C}F_i^2+ \sum_{i=1}^Kf_i^2 \sum_{i=1}^CF_i^2      }\right)$\normalsize, and the $F_i$'s (resp. $f_i$'s) are upper bounds on the $L^2$ norms of 1st (resp. 2nd) layer filters.
		
		We run the model for both $N=350$ and $N=20000$, and for $L=1000,4000$. The parameter $\iter$, which we vary proportionately to the total length appears required for optimisation purposes. Of course, it also has some influence on generalisation, but bridging the data dependency gap is beyond the scope of this work, where we focus on generalisation bounds valid on the whole of weight space.
		
		We illustrate experimental results in Figure~\ref{margindist1}. Besides the margins normalised with $R$ being several orders of magnitude larger than the ones normalised with $M$, a point of interest is that in both data regimes the value of $L$ has a strong influence on the classically normalised margins, but a mild to moderate influence on both our normalised margins and two subjective measures of data insufficiency: the test accuracy and the distribution of the margins. For $N=20000$ and all values of $L$, the margins are clearly divided into three sets depending on how many inserted signatures in the datapoint are associated with the same label~\footnote{$\{3,2\}$ is frequent and difficult to classify, $\{4,1\}$ is easier and rarer, $\{5,0\}$ is even easier and very rare}. For $N=350$ (all values of $L$), the three groups are still identifiable, but are less well separated, which shows the problem is in a similarly borderline insufficient data regime. In conclusion, classification problems of similar difficulty but different data size lead to similar normalised margins using our formula but very different normalised margins when using $M$ from equation~\eqref{source}.

		\begin{figure}
			\begin{minipage}{.24\textwidth}
				\centering
				\includegraphics[width=\textwidth]{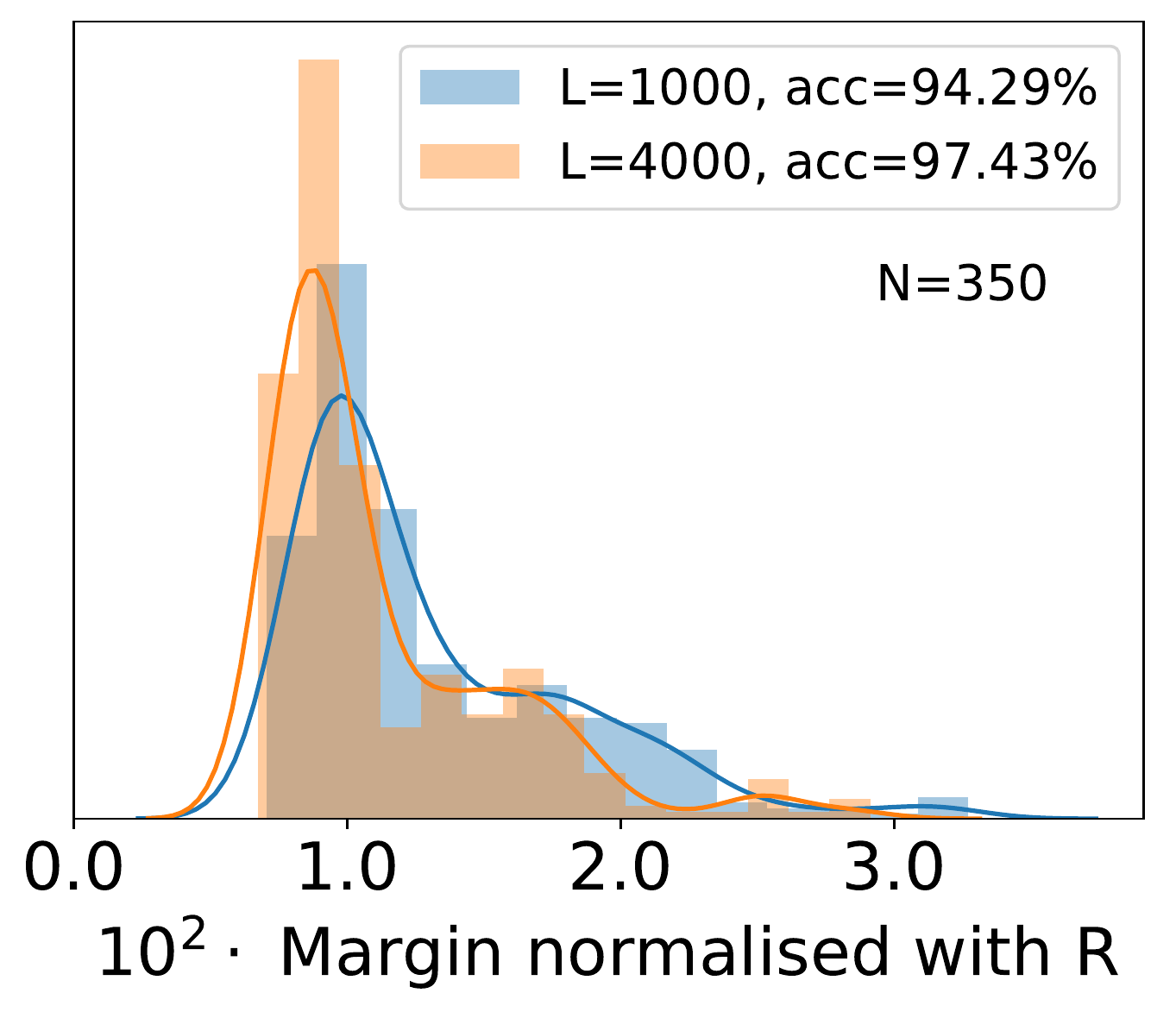}
			\end{minipage}
			\begin{minipage}{.24\textwidth}
				\centering
				\includegraphics[width=\textwidth]{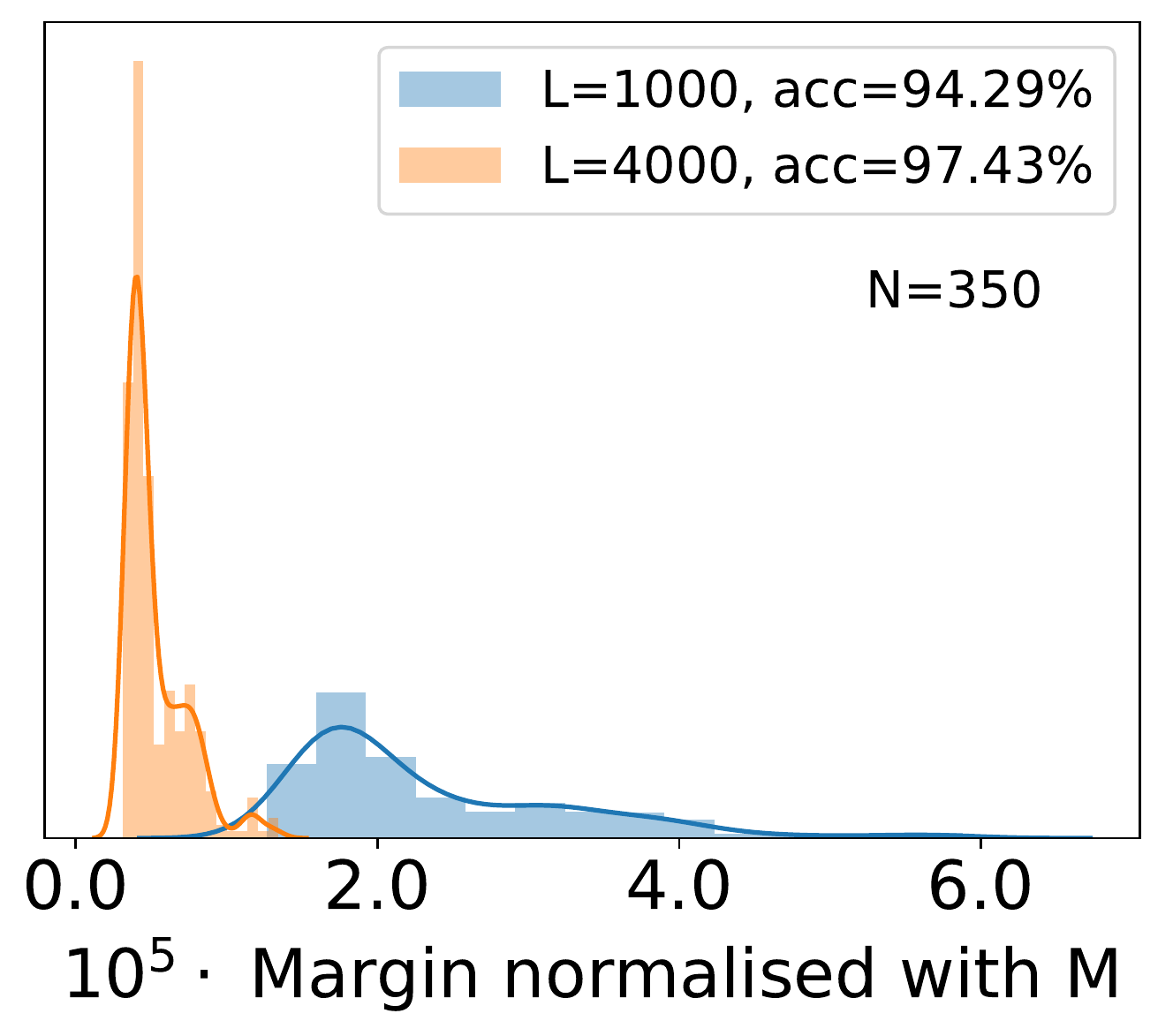}
			\end{minipage}
			\begin{minipage}{.24\textwidth}
				\centering
				\includegraphics[width=\textwidth]{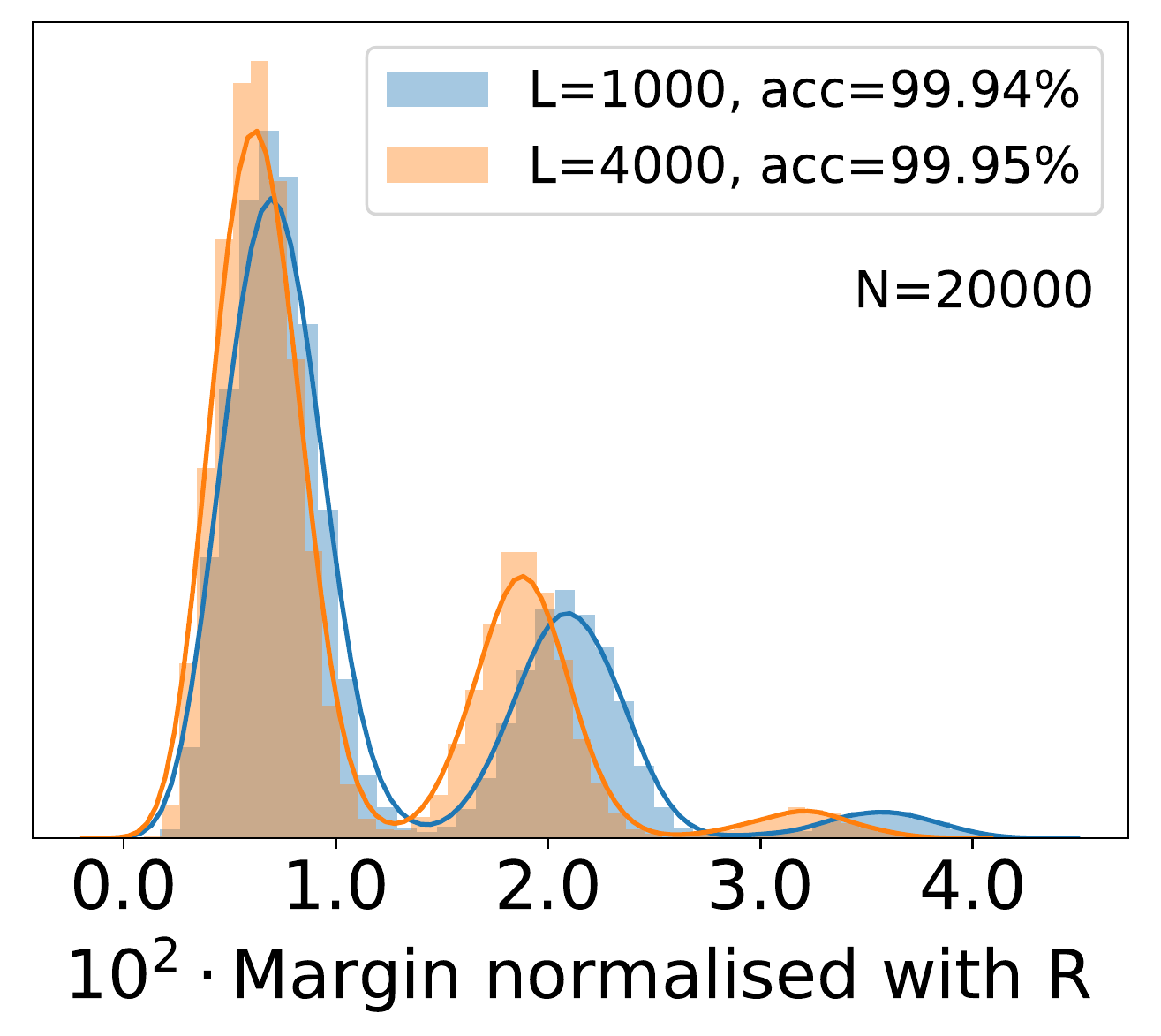}
			\end{minipage}
			\begin{minipage}{.24\textwidth}
				\centering
				\includegraphics[width=\textwidth]{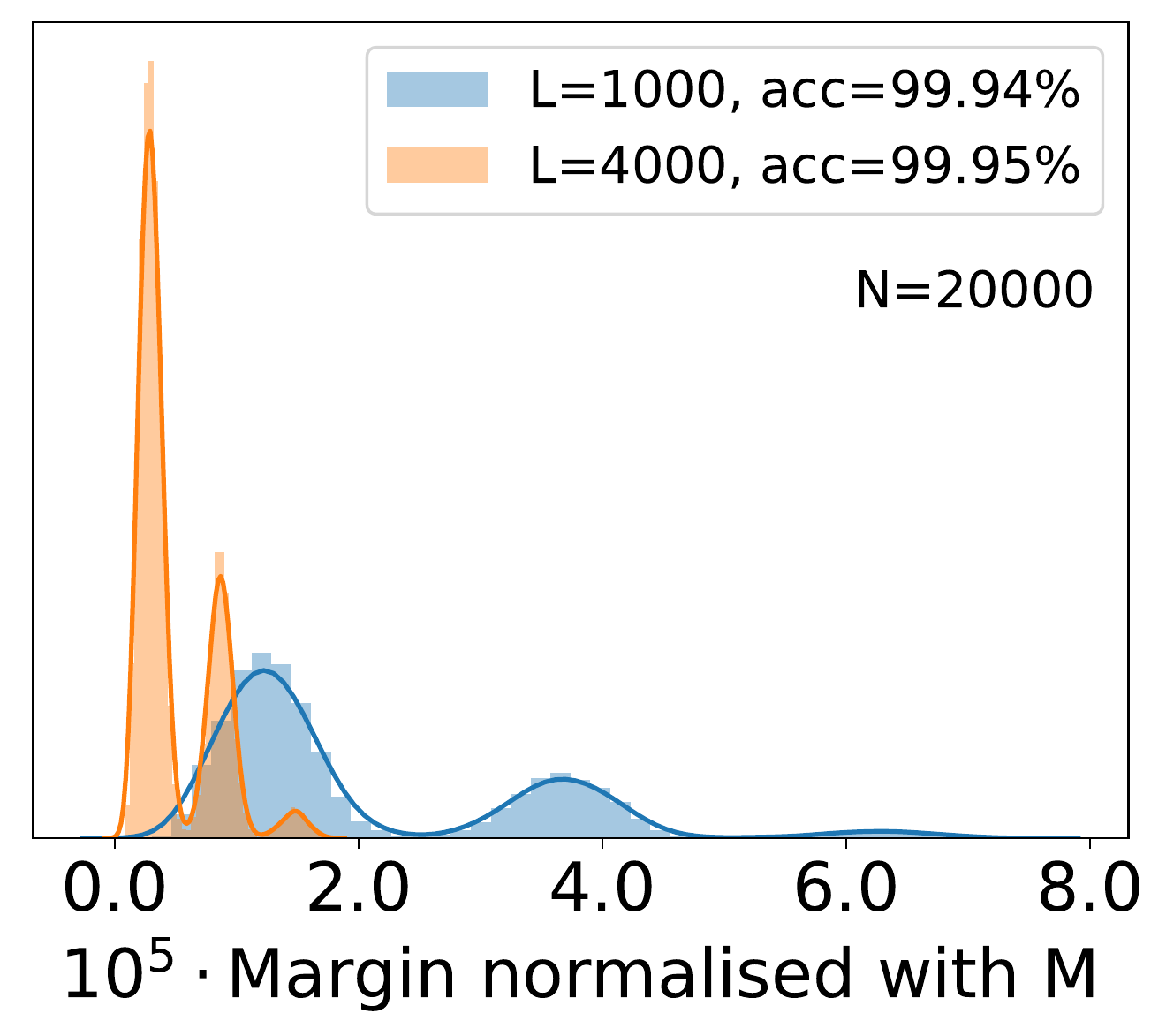}
			\end{minipage} 	\caption{Distribution of normalised margins for different values of $N$ and $L$}
			\label{margindist1}
		\end{figure}

		\bibliography{bibliography}

\end{document}